\def\1{\bm{1}}
\def\vx{{\bm{x}}}
\def\vy{{\bm{y}}}
\DeclareMathAlphabet{\mathsfit}{\encodingdefault}{\sfdefault}{m}{sl}
\SetMathAlphabet{\mathsfit}{bold}{\encodingdefault}{\sfdefault}{bx}{n}
\newcommand{\E}{\mathbb{E}}
\newcommand{\R}{\mathbb{R}}
\DeclareMathOperator*{\argmax}{arg\,max}
\definecolor{mydarkred}{rgb}{0.6,0,0}
\definecolor{mydarkgreen}{rgb}{0,0.6,0}
\definecolor{mydarkblue}{rgb}{0,0,0}
\renewcommand{\H}{\mathcal{H}} %RKHS
\newcommand{\F}{\mathcal{F}} %F
\newcommand{\Z}{\mathbb{Z}} %integer
\newcommand{\N}{\mathcal{N}} %Gaussian Distribution
\newcommand{\xadv}{\tilde{\bm{x}}} %RKHS
\renewcommand{\P}{\mathbb{P}} %distribution
\newcommand{\Q}{\mathbb{Q}} %distribution
\DeclareMathOperator{\bigO}{\mathcal{O}}
\DeclareMathOperator{\MMD}{MMD}
\newcommand{\httpsurl}[1]{\href{https://#1}{\nolinkurl{#1}}}
\icmltitlerunning{Maximum Mean Discrepancy Test is Aware of Adversarial Attacks}
\begin{document}

\twocolumn[
\icmltitle{Maximum Mean Discrepancy Test is Aware of Adversarial Attacks}

% It is OKAY to include author information, even for blind
% submissions: the style file will automatically remove it for you
% unless you've provided the [accepted] option to the icml2021
% package.

% List of affiliations: The first argument should be a (short)
% identifier you will use later to specify author affiliations
% Academic affiliations should list Department, University, City, Region, Country
% Industry affiliations should list Company, City, Region, Country

% You can specify symbols, otherwise they are numbered in order.
% Ideally, you should not use this facility. Affiliations will be numbered
% in order of appearance and this is the preferred way.
\icmlsetsymbol{equal}{*}

\begin{icmlauthorlist}
\icmlauthor{Ruize Gao}{equal,bu,cuhk}
\icmlauthor{Feng Liu}{equal,uts}
\icmlauthor{Jingfeng Zhang}{equal,riken}
\icmlauthor{Bo Han}{bu}
\icmlauthor{Tongliang Liu}{usyd}
\icmlauthor{Gang Niu}{riken}
\icmlauthor{Masashi Sugiyama}{riken,ut}
\end{icmlauthorlist}

\icmlaffiliation{bu}{Department of Computer Science, Hong Kong Baptist University}
\icmlaffiliation{cuhk}{Department of Computer Science and Engineering, The Chinese University of Hong Kong}
\icmlaffiliation{uts}{DeSI Lab, AAII, University of Technology Sydney}
\icmlaffiliation{riken}{RIKEN-AIP}
\icmlaffiliation{usyd}{TML Lab, University of Sydney}
\icmlaffiliation{ut}{Graduate School of Frontier Sciences, University of Tokyo}
\icmlcorrespondingauthor{Bo Han}{bhanml@comp.hkbu.edu.hk}
\icmlcorrespondingauthor{Tongliang Liu}{tongliang.liu@sydney.edu.au}
\icmlcorrespondingauthor{Gang Niu}{gang.niu@riken.jp}

% You may provide any keywords that you
% find helpful for describing your paper; these are used to populate
% the "keywords" metadata in the PDF but will not be shown in the document
\icmlkeywords{Machine Learning, ICML}

\vskip 0.3in
]

% this must go after the closing bracket ] following \twocolumn[ ...

% This command actually creates the footnote in the first column
% listing the affiliations and the copyright notice.
% The command takes one argument, which is text to display at the start of the footnote.
% The \icmlEqualContribution command is standard text for equal contribution.
% Remove it (just {}) if you do not need this facility.

%\printAffiliationsAndNotice{}  % leave blank if no need to mention equal contribution
\printAffiliationsAndNotice{\icmlEqualContribution} % otherwise use the standard text.

\newtheorem{theorem}{Theorem}
\newtheorem{lemma}{Lemma} 
\newtheorem{corollary}{Corollary} 
\newtheorem{definition}{Definition} 
\newtheorem{problem}{Problem} 
\newtheorem{prop}{Proposition}  
\newtheorem{remark}{Remark}  

\begin{abstract}

%------------abs from Gang

The \emph{maximum mean discrepancy}~(MMD) test could in principle detect any distributional discrepancy between two datasets.
However, it has been shown that the MMD test is unaware of \emph{adversarial attacks}---the MMD test failed to detect the discrepancy between \emph{natural} and \emph{adversarial data}.
Given this phenomenon, we raise a question: are natural and adversarial data really from different distributions? 
The answer is affirmative---the previous use of the MMD test on the purpose missed three key factors, and accordingly, we propose three components.
Firstly, \emph{Gaussian kernel} has limited \emph{representation power}, and we replace it with an effective \emph{deep kernel}.
Secondly, \emph{test power} of the MMD test was neglected, and we maximize it following \emph{asymptotic statistics}.
Finally, adversarial data may be \emph{non-independent}, and we overcome this issue with the \emph{wild bootstrap}.
By taking care of the three factors, we verify that \emph{the MMD test is aware of adversarial attacks}, which lights up a novel road for adversarial data detection based on two-sample tests.

\end{abstract}

\section{Introduction}\label{Sec:intro}

The \emph{maximum mean discrepancy}~(MMD) aims to measure the closeness between two distributions $\P$ and $\Q$:
\begin{align}\label{eq:mmd_def_basic}
&\MMD(\P,\Q;\F):= \sup_{f\in\F} |{ \E[ f(X) ] - \E[ f(Y) ] }|,
\end{align}
where $\F$ is a set containing all continuous functions \cite{Gretton2012}. 
To obtain an analytic solution regarding the $\sup$ in Eq.~\eqref{eq:mmd_def_basic}, \citet{Gretton2012} restricted $\F$ to be a unit ball in the \emph{reproducing kernel Hilbert space} (RKHS) and obtain the kernel-based MMD defined in the following.
% these expressions for MMD are equivalent:
\begin{align}
\MMD(\P,\Q;\H_k):= &\sup_{f\in\H, {\|f\|}_{\H_k} \le 1} |{ \E[ f(X) ] - \E[ f(Y) ] }| \nonumber \\
= &{\|\mu_\P - \mu_\Q\|}_{\H_k}, \label{eq:mmd_kernel_form}
\end{align}
where $k$ is a bounded kernel regarding a RKHS $\mathcal{H}_{k}$ (i.e., $|k(\cdot,\cdot)|<+\infty$), and
$X\sim \P$, $Y\sim \Q$ are two random variables, and $\mu_\P := \E[ k(\cdot, X) ]$ and $\mu_\Q := \E[ k(\cdot, Y) ]$ are kernel mean embeddings of $\P$ and $\Q$, respectively \cite{gretton2005measuring,Gretton2012,Jitkrittum2016,Jitkrittum2017,sutherland:mmd-opt,liu2020learning}.
According to Eq.~\eqref{eq:mmd_def_basic}, it is clear that MMD equals zero \emph{if and only if} $\P=\Q$ \cite{gretton2008kernel}. As for the MMD defined in Eq.~\eqref{eq:mmd_kernel_form}, \citet{Gretton2012} also prove this property.
% : if $k$ is a characteristic kernel (e.g., the Gaussian kernel), then the kernel based MMD (i.e., Eq.~\eqref{eq:mmd_kernel_form}) equals zero if and only if $\P=\Q$. 
Namely, we could \emph{in principle} use the MMD to show whether two distributions are the same, which drives researchers to develop the MMD-based two-sample test \cite{Gretton2012}.  

In the MMD test, we are given two samples observed from $\P$ and $\Q$ and aim to check whether two samples come from the same distribution. Specifically, we first \emph{estimate} MMD value from two samples, and then compute the $p$-value corresponding to the estimated MMD value \cite{sutherland:mmd-opt}. If the $p$-value is above a given threshold $\alpha$, then two samples are from the same distribution. In the last decade, MMD test has been used to detect the distributional discrepancy within several real-world datasets,
including high-energy physics data \cite{Chwialkowski2015}, amplitude modulated
signals \cite{Gretton2012NeurIPS}, and challenging image datasets, e.g., the \emph{MNIST} and the \textit{CIFAR-10} \cite{sutherland:mmd-opt,liu2020learning}.

\begin{figure*}[tp]
    \begin{center}
        \subfigure[Test power]
        {\includegraphics[width=0.246\textwidth]{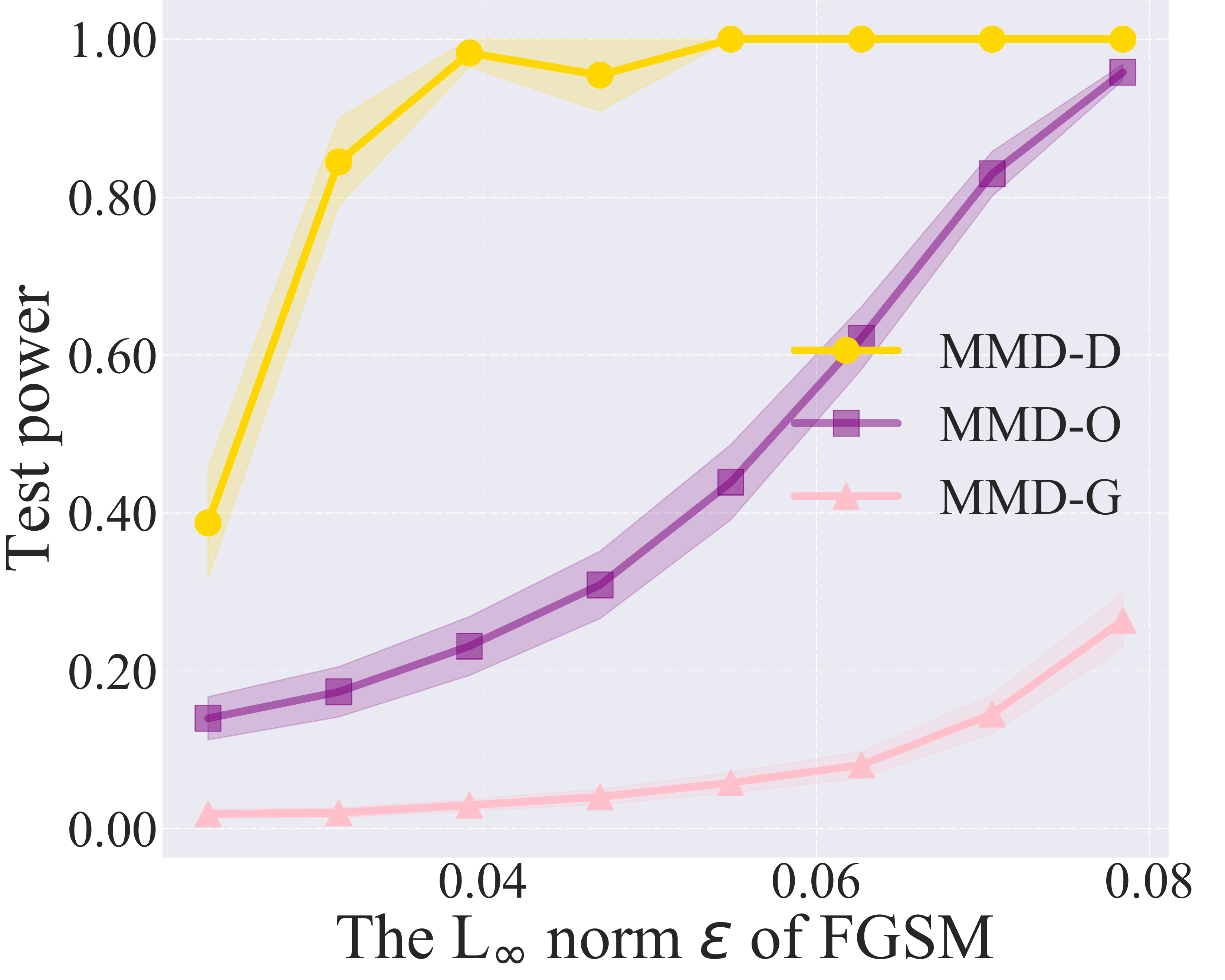}}
        \subfigure[Test power]
        {\includegraphics[width=0.246\textwidth]{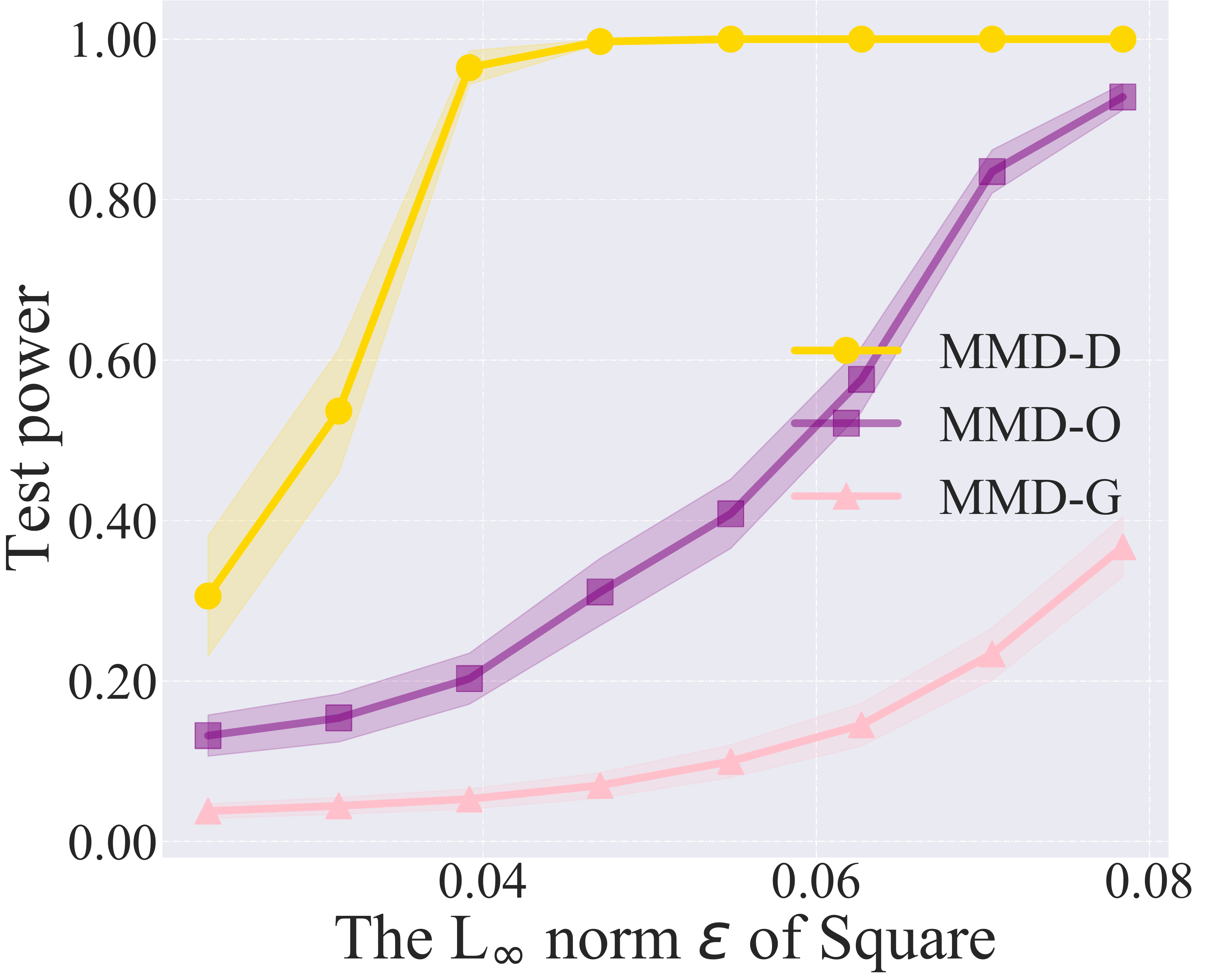}}
        \subfigure[Type I error]
        {\includegraphics[width=0.243\textwidth]{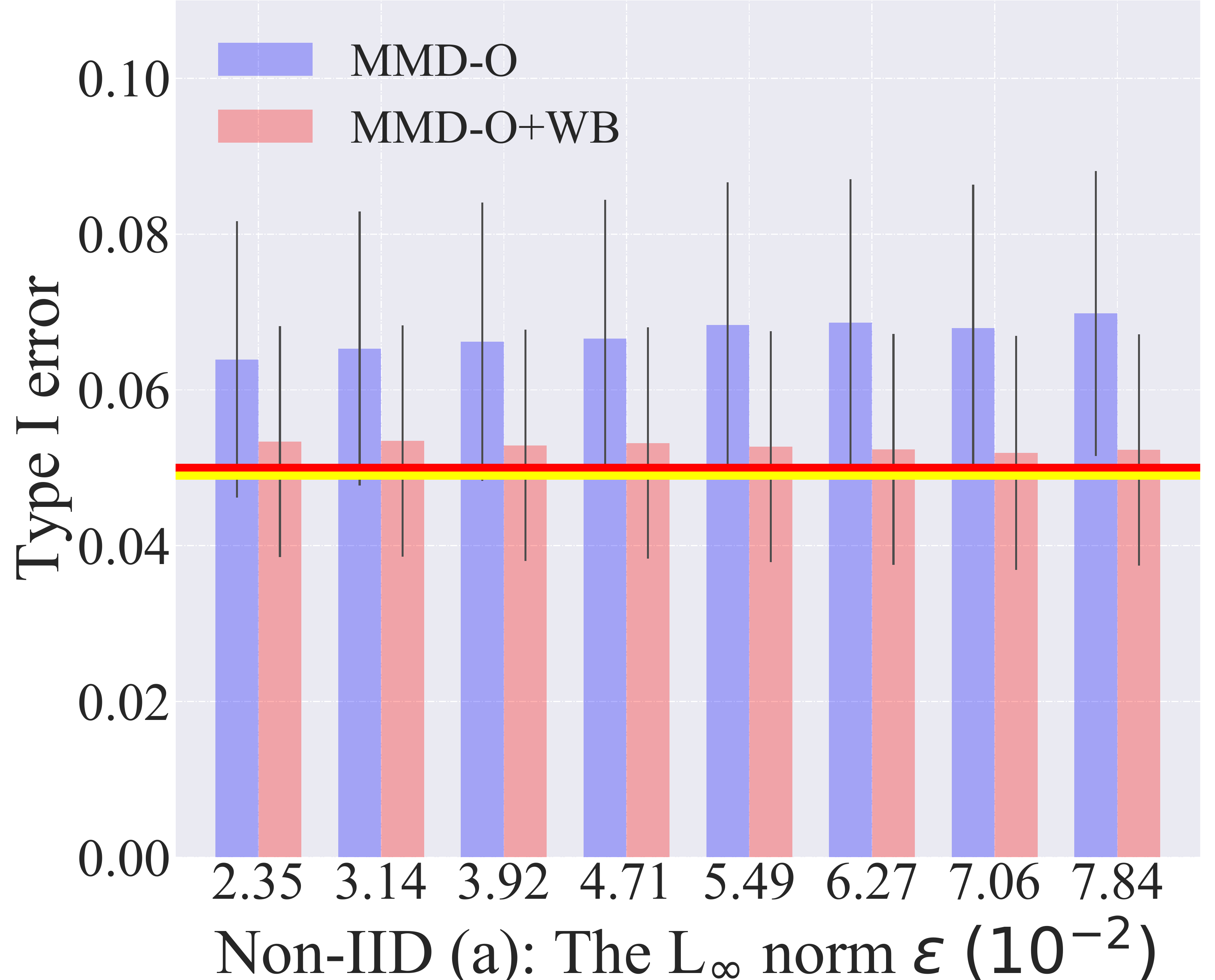}}
        \subfigure[Type I error]
        {\includegraphics[width=0.243\textwidth]{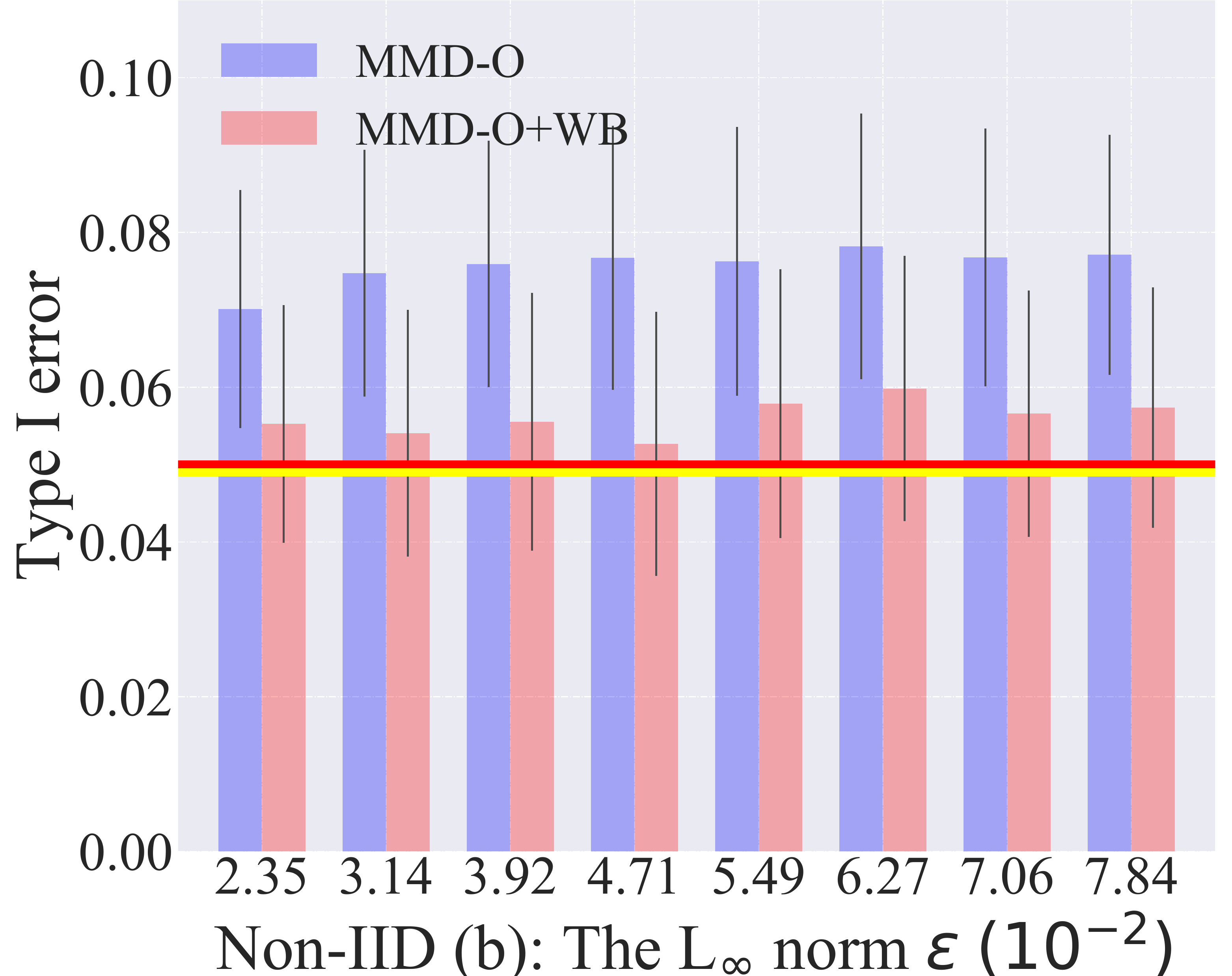}}
        \vspace{-1em}
        \caption{\footnotesize
       Consequences of missing the three key factors when using the MMD on adversarial data detection. The subfigure (a) and (b) illustrate the test power of \emph{the MMD test with deep kernel} (MMD-D test \citep{liu2020learning}), \emph{the MMD test with optimized Gaussian kernel} (MMD-O test \citep{sutherland:mmd-opt}) and \emph{the MMD test with Gaussian kernel} (MMD-G test), respectively. Adversarial data is generated by a white-box attack \emph{fast gradient sign method} (FGSM) \citep{goodfellow2014explaining} and a black-box attack \emph{Square attack} (Square) \citep{andriushchenko2020square} with different $L_\infty$-norm bounded perturbation $\epsilon \in [0.0235,0.0784]$ (following \citep{Madry18PGD,zhang2020attacks}). 
       Clearly, MMD-D and MMD-O tests perform much better than MMD-G test (previously used by \citep{grosse2017statistical} and \citep{carlini2017adversarial}). The failure of MMD-G test takes root in Factors $1$ and $2$ in Section~\ref{Sec:intro}.
       The (c) and (d) show type I error within two typical non-IID adversarial data (see detailed generation in Section~\ref{sec:dependence_within_data}), where type I error of MMD-O test is abnormal (higher than the red line that $\alpha=0.05$, while the type I error within natural data is the yellow line). The main reason is the Factor $3$ in Section~\ref{Sec:intro}. If we apply the \emph{wild bootstrap} (WB) process to MMD-O test, it brings type I error to normality (MMD-O+WB). 
      }
    \label{fig:moti}
    \end{center}
    \vspace{-1em}
\end{figure*}

However, it has been empirically shown that the MMD test, as one of the most powerful two-sample tests, is unaware of \emph{adversarial attacks} \cite{carlini2017adversarial}. Specifically, \citet{carlini2017adversarial} input adversarial and natural data into the MMD test, then the MMD test outputs a $p$-value that is greater than the given threshold $\alpha$ with a high probability. Namely, the MMD test agrees that adversarial and natural data are from the same distribution. Given the success of MMD test in many fields \cite{liu2020learning}, this phenomenon seems a \emph{paradox} regarding the homogeneity between nature and adversarial data.
% and fails to detect the adversarial attacks. 
% Therefore, \citet{carlini2017adversarial} argue that, without significant modifications, statistical tests cannot detect adversarial attacks. 

% (paradox) 
In this paper, we raise a question regarding this paradox: \emph{are natural data and adversarial data really from different distributions?} The answer is affirmative, and we find the previous use of MMD missed \emph{three} factors. As a result, previous MMD-based adversarial data detection methods not only have a low detection rate when detecting attacks (due to the first two factors), but also are invalid detection methods (due to the third factor). 
% In the following, the three factors are explained briefly.

\textbf{Factor 1.} The Gaussian kernel (used by previous MMD-based adversarial data detection methods) has \emph{limited} representation power and cannot measure the similarity between two multidimensional samples (e.g., images) well \cite{Kevin_ICML2019}. Although $\MMD(\P,\Q)$ is a perfect statistic to see if $\P$ equals $\Q$, test power (i.e., the detection rate when detecting adversarial attacks) of its empirical estimation (Eq.~\eqref{eq:MMD_U_compute}) depends on the form of used kernels \cite{sutherland:mmd-opt,liu2020learning}. Since a Gaussian kernel only looks at data uniformly rather than focuses on areas where two distributions are different, it requires many observations to distinguish the two distributions~\cite{liu2020learning}. As a result, the test power of \emph{the MMD test with a Gaussian kernel} (MMD-G test used by \citet{grosse2017statistical} and \citet{carlini2017adversarial}) is \emph{limited}, especially when facing complex data~\cite{,sutherland:mmd-opt,liu2020learning}. 

We replace the Gaussian kernel with a simple and effective semantic-aware deep kernel to take care of the first factor. We call this semantic-aware deep kernel based MMD as \emph{semantic-aware MMD} (SAMMD). The SAMMD is motivated by the recent advances in nonparametric two-sample tests, i.e., \emph{the MMD test with deep kernel} (MMD-D). In MMD-D, the kernel is parameterized by deep neural nets \cite{liu2020learning} and measures the distributional discrepancy between two sets of images using raw features (i.e., pixels in images).  Compared to the deep kernel used in MMD-D, semantic-aware deep kernel uses \emph{semantic features} extracted by a well-trained classifier on natural data.
Figure~\ref{fig:moti_SAMMD2} (see Section~\ref{sec:SAMMD}) shows that natural and adversarial data are quite different in the view of semantic features, showing that semantic features can help distinguish between natural and adversarial data, taking care of the first factor.

% \paragraph{Factor 2:} Previous MMD-based adversarial data detection methods overlook the optimization of parameters of the used kernel. 

% \paragraph{Factor 3:} The adversarial data are probably not \emph{independent and identically distributed} (IID) due to their unknown generation process, which breaks a basic assumption of the MMD tests used by \citet{grosse2017statistical} and \citet{carlini2017adversarial}.

% 

\textbf{Factor 2.} Previous MMD-based adversarial data detection methods overlook the optimization of parameters of the used kernel. In MMD-G test, its test power is related to the choice of the bandwidth of the Gaussian kernel \cite{sutherland:mmd-opt}. Once we overlook the optimization of the kernel bandwidth, the test power of MMD-G test will \emph{drop significantly}~\cite{Gretton2012NeurIPS,sutherland:mmd-opt}. Furthermore, recent studies have shown that Gaussian kernel with an optimized bandwidth still has limited representation power for complex distributions (e.g., multimodal distributions used in \cite{Kevin_ICML2019,liu2020learning}). Namely, it is important to take care of Factor $1$ and Factor $2$ simultaneously, which is verified in Figures~\ref{fig:moti}a-\ref{fig:moti}b.

To take care of the second factor, we analyze the asymptotics of the SAMMD when detecting adversarial attacks. According to the asymptotics of SAMMD, we can compute the approximate test power of SAMMD using two datasets and then optimize the parameters of the deep kernel by maximizing the approximate test power.

\textbf{Factor 3.} The adversarial data are probably not \emph{independent and identically distributed} (IID) due to their unknown generation process, which breaks a basic assumption of the MMD tests used by \cite{grosse2017statistical,carlini2017adversarial}.
Once there exists dependence within the observations, the type I error of ordinary MMD tests will surpass the given threshold $\alpha$. Note that, type I error is the probability of rejecting the null hypothesis ($\P=\Q$) when the null hypothesis is true. If the type I error of a test is much higher than $\alpha$, this test will always reject the null hypothesis. Namely, for two datasets that come from the same distribution, the test will always show that they are different, which means that the test is \emph{meaningless} \cite{Kacper14wildbtp}.

To take care of the third factor, the wild bootstrap is used to resample the value of SAMMD (with the optimized kernel), which ensures that we can get correct $p$-values in non-IID/IID scenarios (Figures~\ref{fig:moti}c-\ref{fig:moti}d). Here, we show two scenarios where the dependence within adversarial data exists: 1) the adversary attacks the data used for training the target model, where the target model depends on the attacked data. Thus, generated adversarial data are highly dependent (the Non-IID (a) in Figure~\ref{fig:moti}c); and 2) the adversary attacks one instance many times to generate many adversarial instances (the Non-IID (b) in Figure~\ref{fig:moti}d).

The above study is not of purely theoretical interest; it has also practical consequences. The considered detection problem is also known as \emph{statistical adversarial data detection} (SADD). In SADD, we care about how to find out a dataset that only contains natural data. That will bring benefits to users who are \emph{only} interested in a model that has high accuracy on the \emph{natural data}. For example, as an artificial-intelligence service provider, we need to acquire a client by modeling his/her task well, such as modeling the risk level of a factory. In this task, the client only cares about the accuracy on the natural data. Thus, we need to use MMD test to ensure that our training data only contain natural data. In Appendix~\ref{Asec:examples}, we have demonstrated SADD in detail.

% For example, as an artificial-intelligence service provider, we need to acquire a client by modeling his/her task well, such as modeling the risk level of manufacturing factory. To finish this task, we need to hire distributed annotators to obtain labeled natural data regarding the risk level in the factory. However, our competitors may \textit{conspire} with several annotators against us, poisoning this data by injecting malicious adversarial data ~\cite{barreno2010security,kloft2012security}. If training data contains adversarial ones, the test accuracy will drop~\cite{ZhangYJXGJ19TRADES}, which makes us lose the client unexpectedly. To beware of such adversarial attacks, we can use MMD test to find reliable annotators providing natural data. 
% Since the MMD test \cite{Gretton2012} is powerful to detect any difference between two datasets \cite{sutherland:mmd-opt}, it also should be a good option to detect adversarial attacks.

\section{Preliminary}\label{Preliminary}
This section presents four concepts used in this paper.
\paragraph{Two-sample test.}
Let $\mathcal{X}\subset\R^d$
and $\P$, $\Q$ be Borel probability measures on $\mathcal{X}$.
Given IID samples $S_X=\{\bm{x}_i\}_{i=1}^n \sim \P^n$ and $S_Y=\{\bm{y}_j\}_{j=1}^m \sim \Q^m$, in the two-sample test problem,
we aim to determine if $S_X$ and $S_Y$ come from the same distribution, i.e.,
if $\P=\Q$.

\paragraph{Estimation of MMD.}

We can estimate MMD (Eq.~\eqref{eq:mmd_kernel_form}) using the $U$-statistic estimator,
which is unbiased for $\MMD^2$ and has nearly minimal variance among unbiased estimators \cite{Gretton2012}:
\begin{gather}
\widehat{\MMD}_u^2(S_X, S_Y; k)
= \frac{1}{n (n-1)} \sum_{i \ne j} H_{ij},
\label{eq:MMD_U_compute}
\\
H_{ij} =
    k(\vx_i, \vx_j)
    + k(\vy_i, \vy_j)
    - k(\vx_i, \vy_j)
    - k(\vy_i, \vx_j)
\notag
% \widehat{\MMD}_u^2(S_X,S_Y;k) = \frac{1}{n(n-1)}\sum_{i\neq i'}k(x_i,x_{i'})
% \\
% - \frac{2}{nm}\sum_{i,j}k(x_i,y_{j})
% +\frac{1}{m(m-1)}\sum_{j\neq j'}k(y_j,y_{j'}),
,\end{gather}
where $\vx_i,\vx_j \in S_X$ and $\vy_i,\vy_j \in S_Y$. 
% It is clear that the value of MMD estimator is related to the choice of kernel function $k$ and the parameters of the kernel function $k$. In \cite{sutherland:mmd-opt}, they have justified that the test power of MMD may vanish if we do not carefully optimize the kernel function used in MMD. 
% Based on $H_{ij}$, the biased but consistent estimator of the squared MMD is $\widehat\MMD_b^2 := \frac{1}{n^2} \sum_{i j} H_{ij}$, which is a $V$ statistic regarding $H_{ij}$. Compared to $\widehat\MMD_u$, $\widehat\MMD_b$ allows $m \neq n$.

\begin{table*}[!t]%[!h]
\caption{Average values of dependence scores (HSIC) within natural data ($\epsilon=0$) and non-IID adversarial data (the $L_\infty$-norm bounded perturbation $\epsilon \in [0.0235,0.0784]$). The adversarial data of the Non-IID (a) are generated by FGSM on the training set of \emph{CIFAR-10}. The Non-IID (b) consists of the adversarial data generated by Square on CIFAR-10's testing set (for each natural image, Square generates four different adversarial images). We can see that the dependence within non-IID adversarial data is stronger than that within IID natural data.}
\label{tab:HSIC}
\vspace{1mm}
\label{tab:comparison}
\centering
\small
%\scriptsize
\begin{tabular}{p{9em}p{3em}p{3em}p{3em}p{3em}p{3em}p{3em}p{3em}p{3em}p{3em}}
\toprule
Perturbation bound $\epsilon$ & 0.0000 & 0.0235 & 0.0314 & 0.0392 & 0.0471 & 0.0549 & 0.0627 & 0.0706 & 0.0784\\
\midrule[0.6pt]
\midrule[0.6pt]
Non-IID (a) (10e-5) & 2.1948& 2.2214& 2.2409 & 2.2650 & 2.3067 & 2.3320 & 2.3727 & 2.4234 & 2.4805\\
Non-IID (b) (10e-5) & 2.1948 & 2.2146 & 2.2346& 2.2614 & 2.2952 & 2.3359 & 2.3835 & 2.4381& 2.4998\\
\bottomrule
\end{tabular}
\end{table*}

\paragraph{Adversarial data generation.}

\newcommand{\T}{{\hspace{-0.25ex}\top\hspace{-0.25ex}}}
\newcommand{\ST}{\mathrm{s.t.}}
\newcommand{\bE}{\mathbb{E}}
\newcommand{\bR}{\mathbb{R}}
\newcommand{\cB}{\mathcal{B}}
\newcommand{\cF}{\mathcal{F}}
\newcommand{\cX}{\mathcal{X}}
\newcommand{\cY}{\mathcal{Y}}
\newcommand{\bx}{{x}}
\newcommand{\bxprime}{{x}'}
\newcommand{\bxtidle}{\tilde{{x}}}
\newcommand{\epsball}{\mathcal{B}_\epsilon}
\newcommand{\yadv}{\tilde{y}}
Let $(\cX,d_\infty)$ be the input feature space $\cX$ with the infinity distance metric $d_{\inf}(\bm{x},\bm{x}^{\prime})=\|\bm{x}-\bm{x}^{\prime}\|_\infty$, and
\begin{align}
\label{eqn:perturbation_ball}
\epsball[\bm{x}] = \{\bm{x}^{\prime} \in \cX \mid d_{\inf}(\bm{x},\bm{x}')\le\epsilon\}
\end{align}
be the closed ball of radius $\epsilon>0$ centered at $\bm{x}$ in $\cX$.
Let $D = \{ (\bm{x}_i, l_i\}^n_{i=1}$ be a dataset, where $\bm{x}_i \in \cX$, $l_i\in\mathcal{C}$ is ground-truth label of $\bm{x}_i$, and $\mathcal{C}=\{1,\dots,C\}$ is a label set. 
% \footnote{Reviewer:Definition of adversarial data in Equation (4) seems to differ from the usual definition. Can the authors clarify the difference.}
Then, adversarial data regarding $\bm{x}_i$ is
% $\mathcal{G}(\bm{x}_i)$ \cite{zhang2020attacks} is expressed below. 
%the adversarial training algorithm returns a classifier $\theta^{*}$:
\begin{align}
\label{Eq:madry_inner_maximization}
\mathcal{G}_{\ell,\hat{f}}(\bm{x}_i) = \argmax\nolimits_{\xadv\in\epsball[\bm{x}_i]} \ell(\hat{f}(\xadv),l_i),
\end{align}
where $\xadv$ is a sample within the $\epsilon$-ball centered at $\bm{x}$, $\hat{f}(\cdot):\cX\to\mathcal{C}$ is a well-trained classifier on $D$, and $\ell:\mathcal{C}\times\mathcal{C}\to\bR_{\ge0}$ is a loss function. 
% a composition of a base loss $\ell_\textrm{B}:\Delta^{C-1}\times\cY\to\bR$ (e.g., the cross-entropy loss) and an inverse link function $\ell_\textrm{L}:\bR^C\to\Delta^{C-1}$ (e.g., the soft-max activation), in which $\Delta^{C-1}$ is the corresponding probability simplex---in other words, $\ell(f(\cdot),y)=\ell_\textrm{B}(\ell_\textrm{L}(f(\cdot)),y)$.

% Namely, adversarial data generation can also be written as
% \begin{align}
% \label{Eq:madry_inner_maximization}
% \xadv_i = \argmax\nolimits_{\xadv\in\epsball[\bm{x}_i]} \ell(f(\xadv),y_i).
% \end{align}

% Projected gradient descent (PGD) is one of attack methods we use to solve the maximization of Eq.~\ref{Eq:madry_inner_maximization}, 

% Given a starting point ${x}^{(0)} \in \cX$ and step size $\alpha > 0$, PGD works as follows:

% \begin{equation}
%     \label{PGD-k}
%     {x}^{(t+1)} = \Pi_{\mathcal{B}[{x}^{(0)}]} \big( {x}^{(t)} +\alpha \sign (\nabla_{{x}^{(t)}} \ell(f_{\theta}({x}^{(t)}), y )  )  \big ) , \forall { t \geq 0}
% \end{equation}
% until a certain stopping criterion is satisfied. In Eq.~\ref{PGD-k}, $\ell$ is the loss function in Eq.~\ref{Eq:madry_inner_maximization}; ${x}^{(0)}$ refers to natural data or natural data corrupted by a small Gaussian or uniform random noise; $y$ is the corresponding label for natural data; ${x}^{(t)}$ is adversarial data at step $t$; and $\Pi_{\epsball[{x}_0]}(\cdot)$ is the projection function that projects adversarial data back into the $\epsilon$-ball centered at ${x}^{(0)}$ if necessary.

There are many methods to solve Eq.~(\ref{Eq:madry_inner_maximization}) and generate adversarial data, e.g., white-box attacks including \emph{fast gradient sign method} (FGSM) \cite{goodfellow2014explaining}, \emph{basic iterative methods} (BIM) \cite{kurakin2016adversarial}, \emph{project gradient descent} (PGD) \cite{Madry18PGD}, \emph{AutoAttack} (AA) \cite{croce2020reliable}, \emph{Carlini and Wagner attack} (CW) \cite{carlini2017towards} and a score-based black-box attack: \emph{Square attack} (Square \citet{andriushchenko2020square}).

\paragraph{Wild bootstrap process.}

The wild bootstrap process has been proposed to resample observations from a stochastic process $\{{Y}_i\}_{i\in\Z}$ \cite{shao2010dependentwildbtp}, where $\E(Y_i)=0$ for each $i\in\Z$. Through multiplying the given observations with random numbers from the wild bootstrap process, we can obtain new samples that can be regarded as resampled observations from $\{{Y}_i\}_{i\in\Z}$ \cite{leucht2013dependentwildbtp,Kacper14wildbtp}. After resampling observations many times, we can use such resampled observations to estimate the distribution of statistics regarding the random process $\{{Y}_i\}_{i\in\Z}$, such as the null distribution of MMD over two time series \cite{Kacper14wildbtp}. Following \cite{Kacper14wildbtp} and \cite{leucht2013dependentwildbtp}, this paper uses the following wild bootstrap process:
% \footnote{Reviewer:The lack of theoretical support of why SAMMD can solve the non-IID problem.}
\begin{align}
\label{eq:wb_generator}
    W_t = e^{-1/l}W_{t-1} + \sqrt{1-e^{-2/l}}\epsilon_t,
\end{align}
where $W_0, \epsilon_0,\dots,\epsilon_t$ are independent standard normal random variables. 

\section{Problem Setting}
Following \citet{grosse2017statistical}, we aim to address the following problem (i.e., SADD mentioned in Section~\ref{Sec:intro}).
\begin{problem}[SADD]\label{problem1}
% \footnote{$\bm{y}_i$ and $f(\cdot)$}
Let $\mathcal{X}$ be a subset of $\R^d$
and $\P$ be a Borel probability measure on $\mathcal{X}$, and $S_X=\{\bm{x}_i\}_{i=1}^n \sim \P^n$ be IID observations from $\P$, and $f(\cdot): \R^d\rightarrow \mathcal{C}$ be the ground-truth labeling function on observations from $\P$, where $\mathcal{C}=\{1,\dots,C\}$ is a label set. Assume that attackers can obtain a well-trained classifier $\hat{f}$ on $S_X$ and IID observations $S_X^\prime$ from $\P$, 
we aim to determine if the upcoming data $S_Y=\{\bm{y}_i\}_{i=1}^m$ come from the distribution $\P$, where $S_X$ and $S_X^\prime$ are independent, and we do not have any prior knowledge regarding the attacking methods. Note that, in SADD, $S_Y$ may be IID data from $\P$ or non-IID data generated by attackers. 
\end{problem}

In Problem~\ref{problem1}, if $S_Y$ are IID observations from $\P$, given a threshold $\alpha$, we aim to accept the null hypothesis $H_0$ (i.e., $S_X$ and $S_Y$ are from the same distribution) with the probability $1-\alpha$. If $S_Y$ contains adversarial data (i.e., $S_X$ and $S_Y$ are from different distributions), we aim to reject the null hypothesis $H_0$ with a probability near to $1$. Please note that, an invalid test method could be ``rejecting all upcoming data'', which can perform very well when $S_X$ and $S_Y$ being from different distributions but fail when $S_Y$ being from $\P$.

\section{Failure of Gaussian-kernel MMD Test for Adversarial Data Detection}
\label{sec:failure_MMDG}

We reimplement the experiment in \cite{carlini2017adversarial} and \cite{grosse2017statistical} to test the performance of MMD-G test on \textit{CIFAR-10} dataset. Adversarial data with different perturbation bound $\epsilon$ are generated by FGSM, BIM, PGD, AA, CW and Square. Figure~\ref{fig:results} shows how test power changes as the $\epsilon$ value increases in each attacking method. Through our implementations, we draw the same conclusion with \citet{carlini2017adversarial}. Namely, MMD-G test (the pink line) fails to detect adversarial data. 
% For adversarial data generated from AA, the considerable difference between test power and $0.05$ is abnormal in a statistical sense of MMD.
% Hence the MMD-G are not considered to detect adversarial data without significant modification.

As demonstrated in Section~\ref{Sec:intro}, MMD-G test has the following issues: 1) the limited representation power of the Gaussian kernel \cite{Kevin_ICML2019,liu2020learning}; and 2) the overlook of optimization of the kernel bandwidth \cite{liu2020learning}; and 3) the non-IID property of adversarial data \cite{shao2010dependentwildbtp,leucht2013dependentwildbtp,Kacper14wildbtp}. Since the third issue is crucial, we first analyze whether there exists dependence within adversarial data in the following section and then propose a novel test to address the above issues simultaneously (see Section~\ref{sec:SAMMD}).

\begin{figure*}[tp]
\centering
    \begin{center}
        %\subfigure[Visualization of outputs of dense layers in DNNs using t-SNE]
        \subfigure[RN18-Natural]
        {\includegraphics[width=0.134\textwidth]{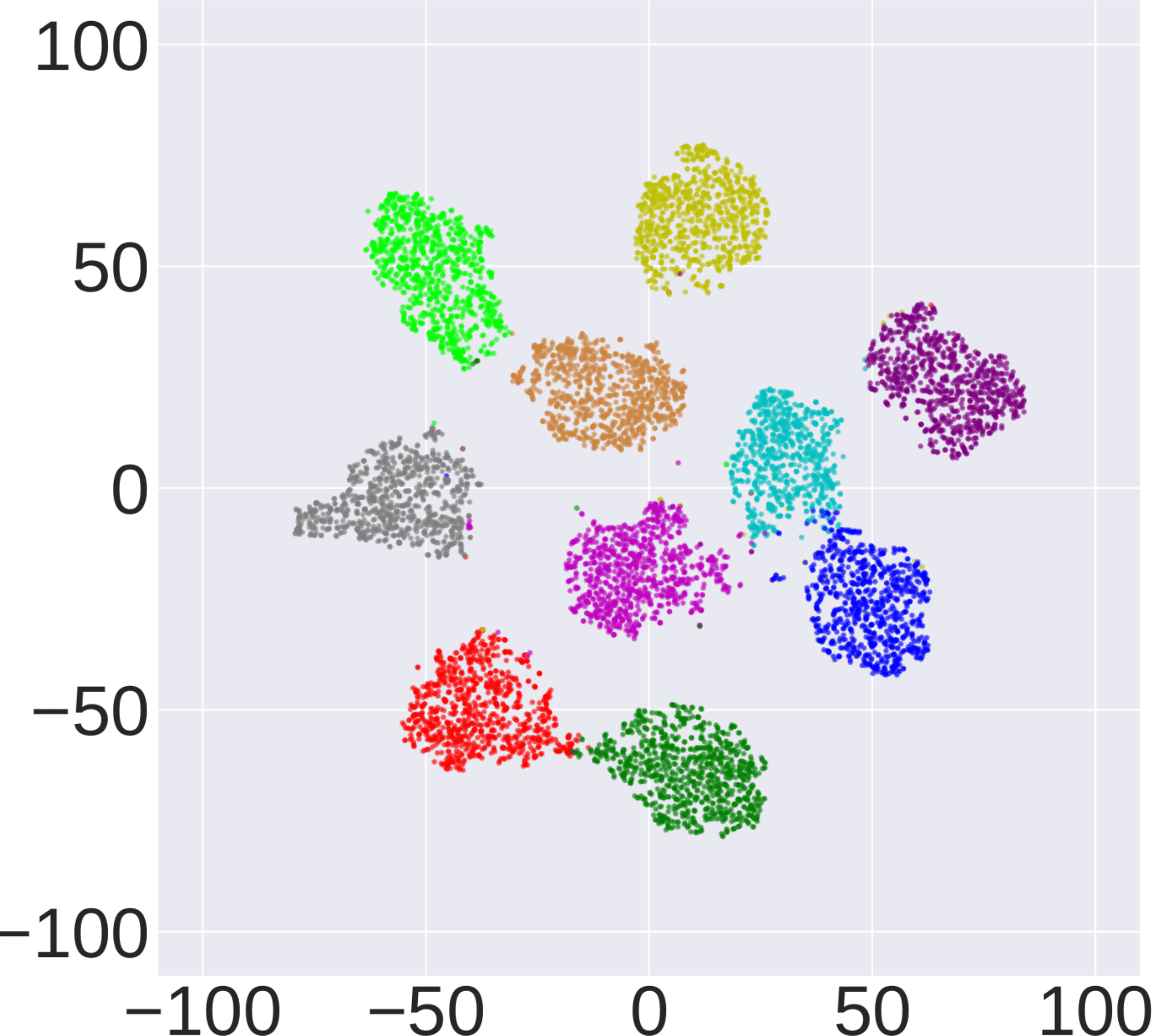}}
        \subfigure[RN18-FGSM]
        {\includegraphics[width=0.134\textwidth]{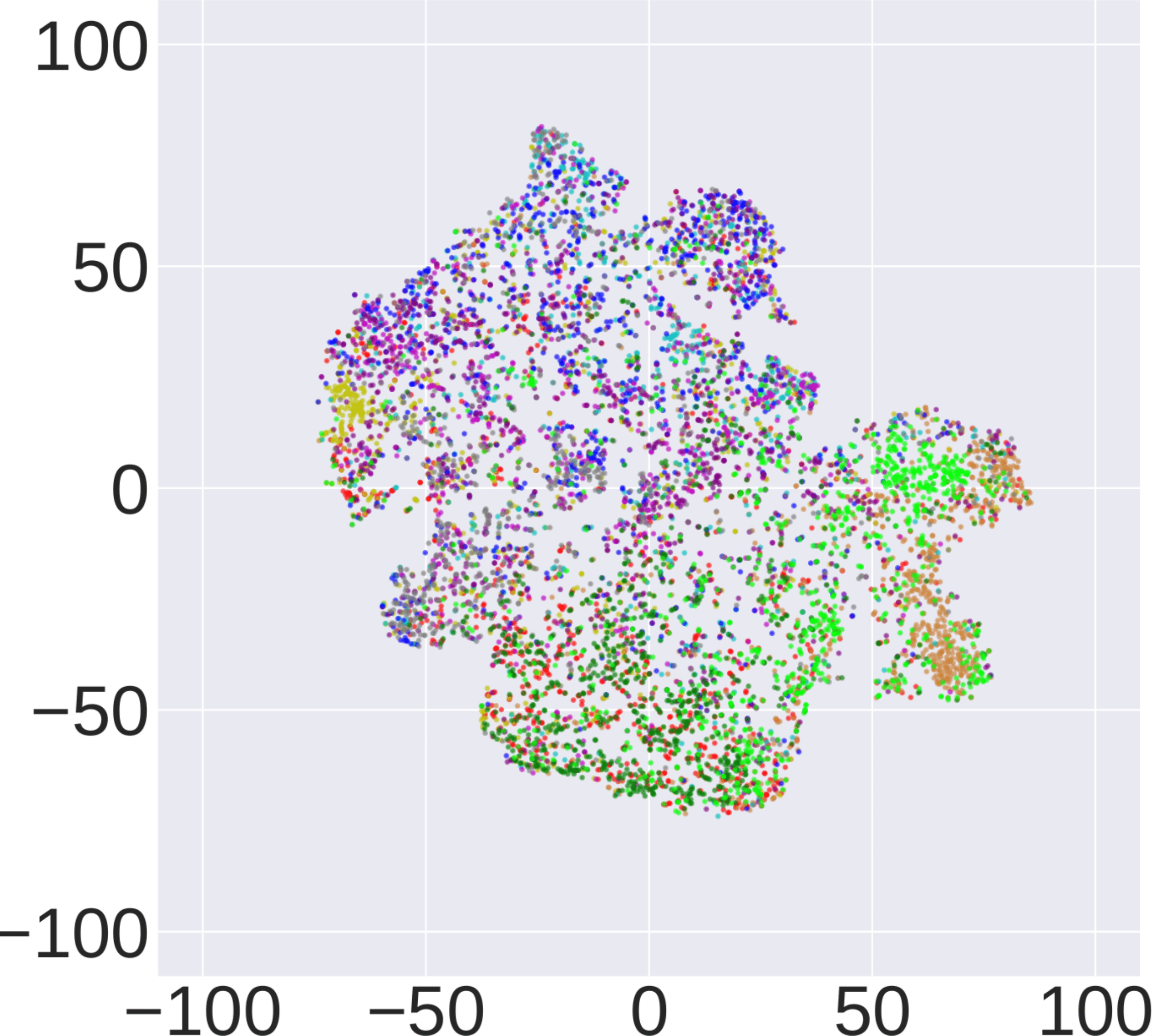}}
        \subfigure[RN18-BIM]
        {\includegraphics[width=0.134\textwidth]{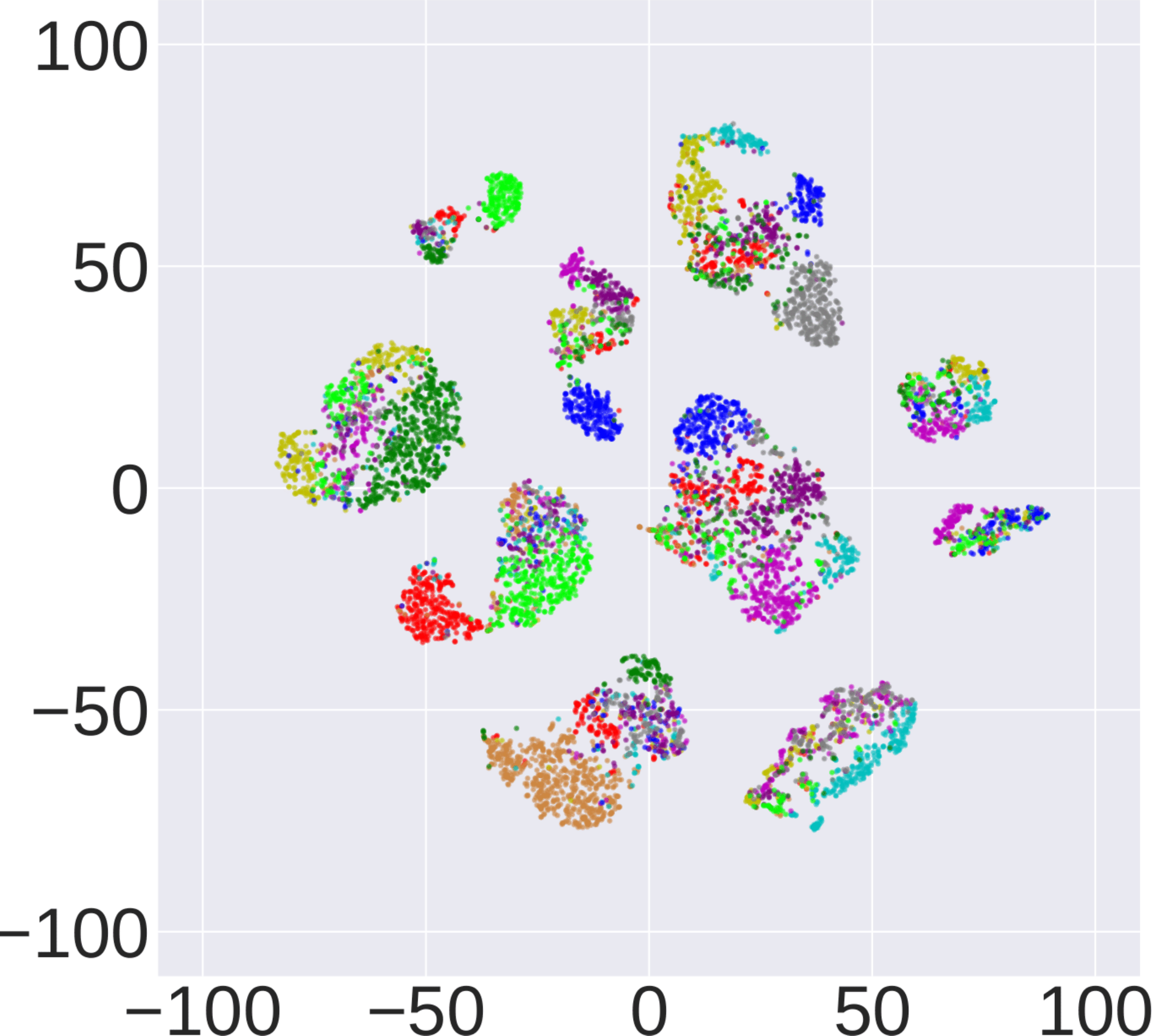}}
        \subfigure[RN18-PGD]
        {\includegraphics[width=0.134\textwidth]{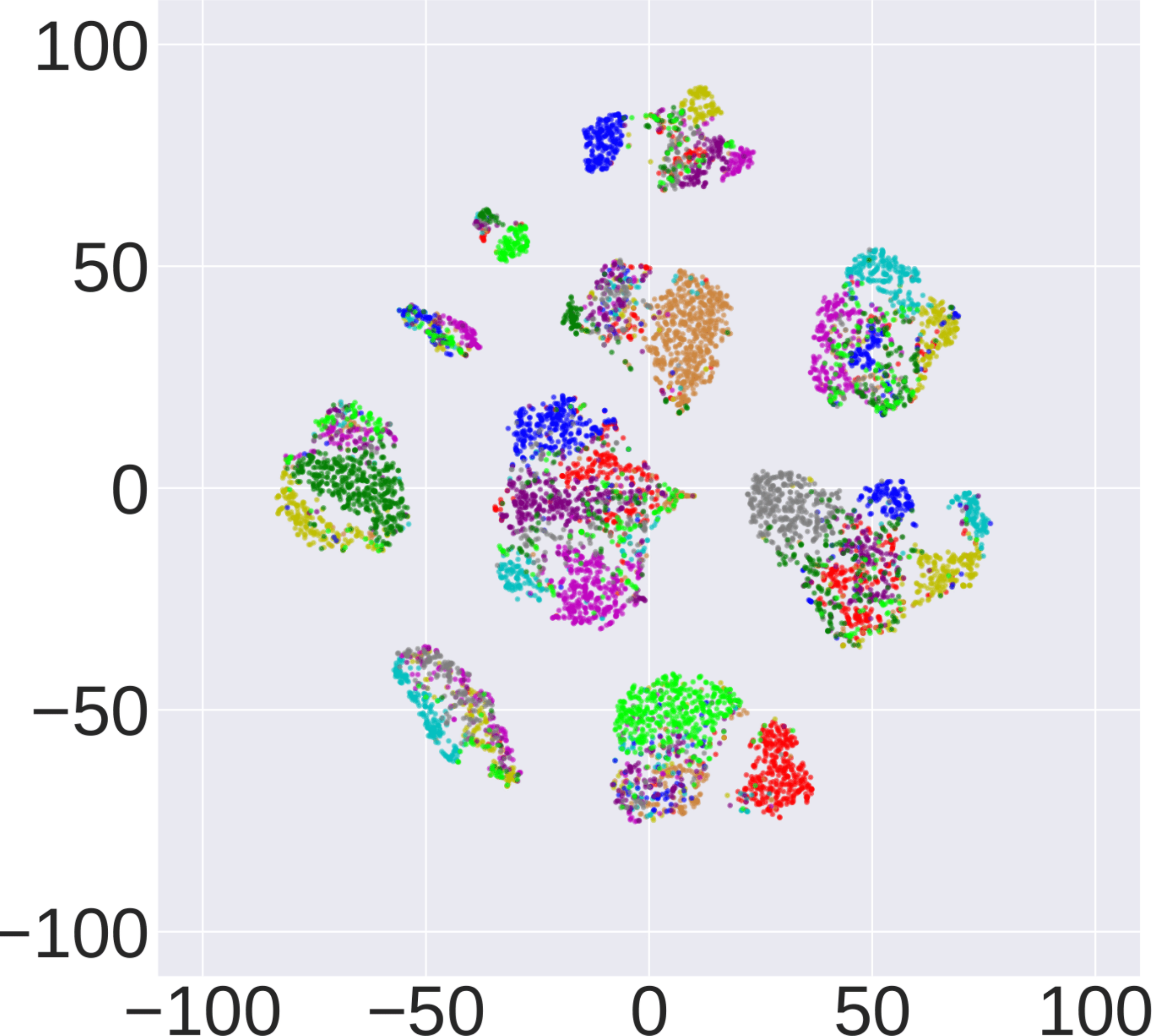}}
        \subfigure[RN18-CW]
        {\includegraphics[width=0.134\textwidth]{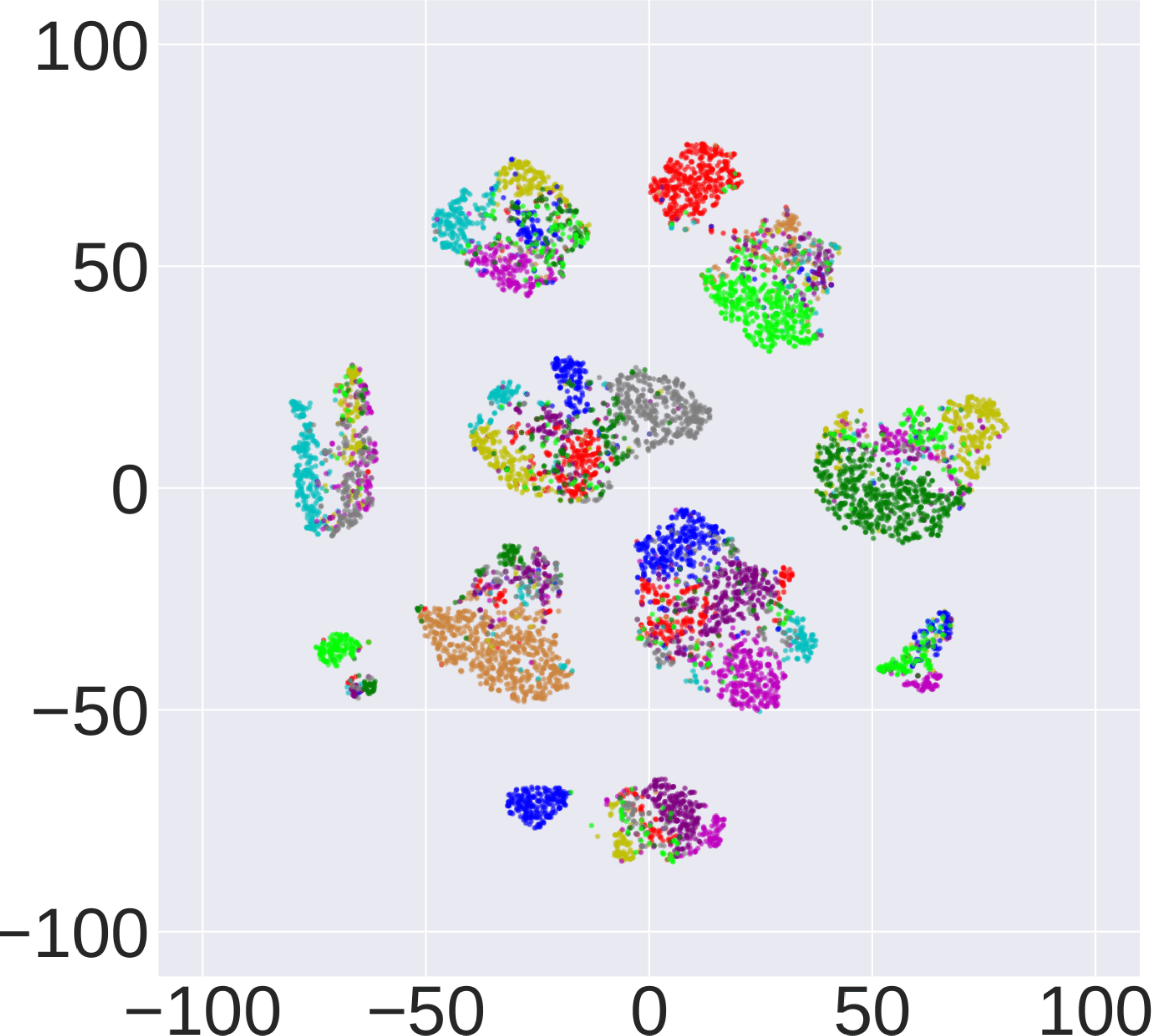}}
        \subfigure[RN18-AA]
        {\includegraphics[width=0.134\textwidth]{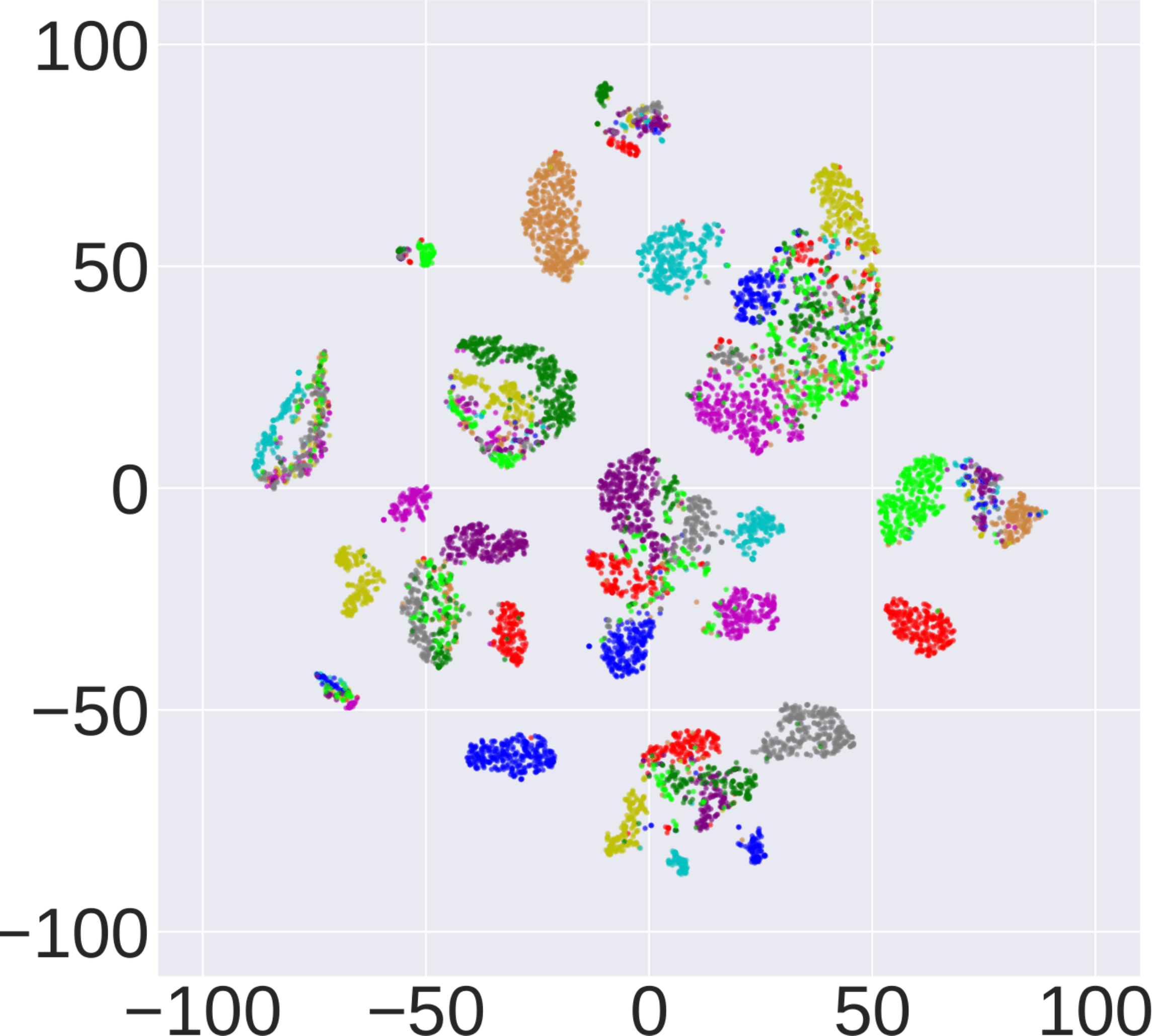}}
        \subfigure[RN18-Square]
        {\includegraphics[width=0.134\textwidth]{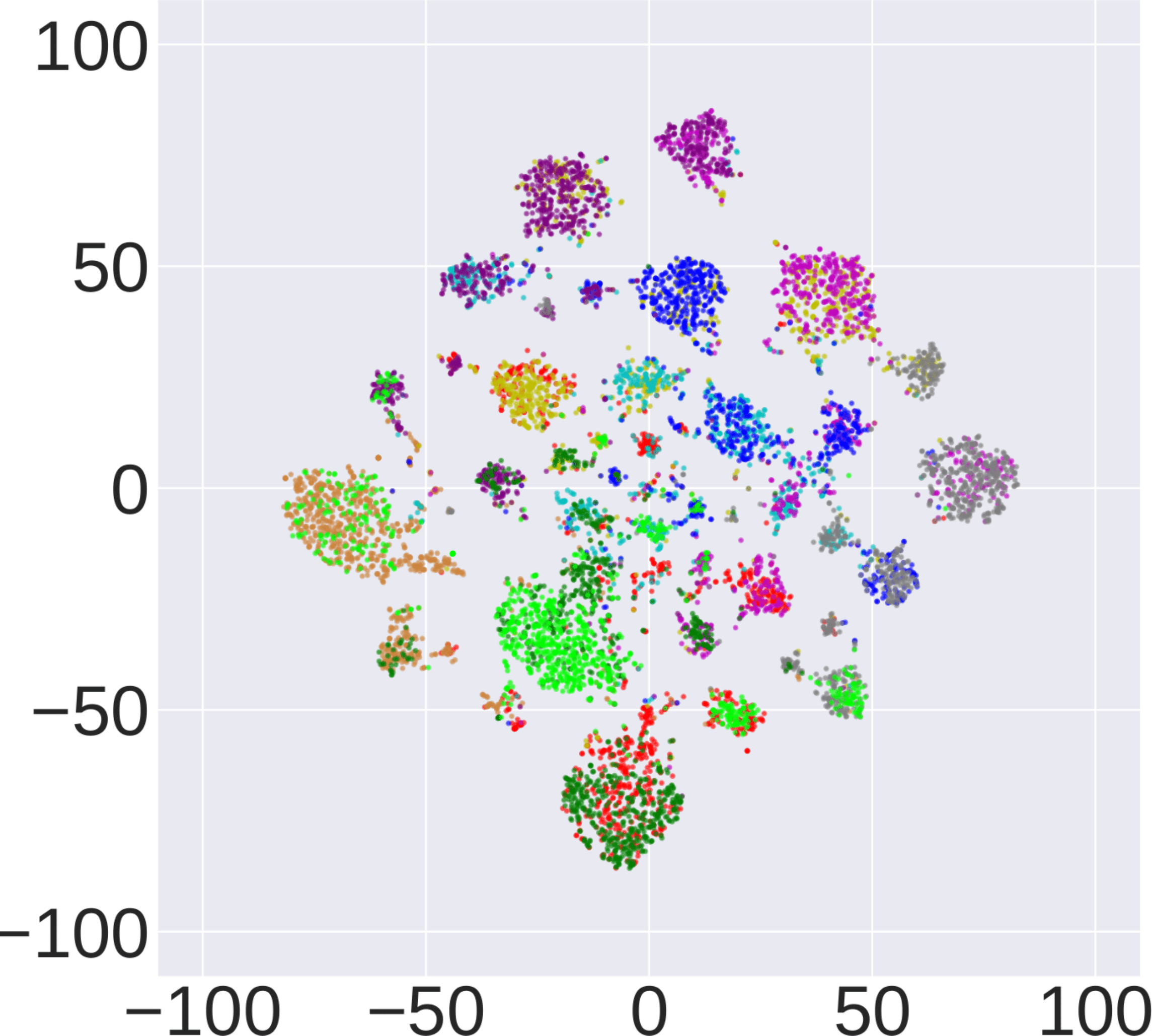}}
        %\vspace{-2em}
        \subfigure[RN34-Natural]
        {\includegraphics[width=0.134\textwidth]{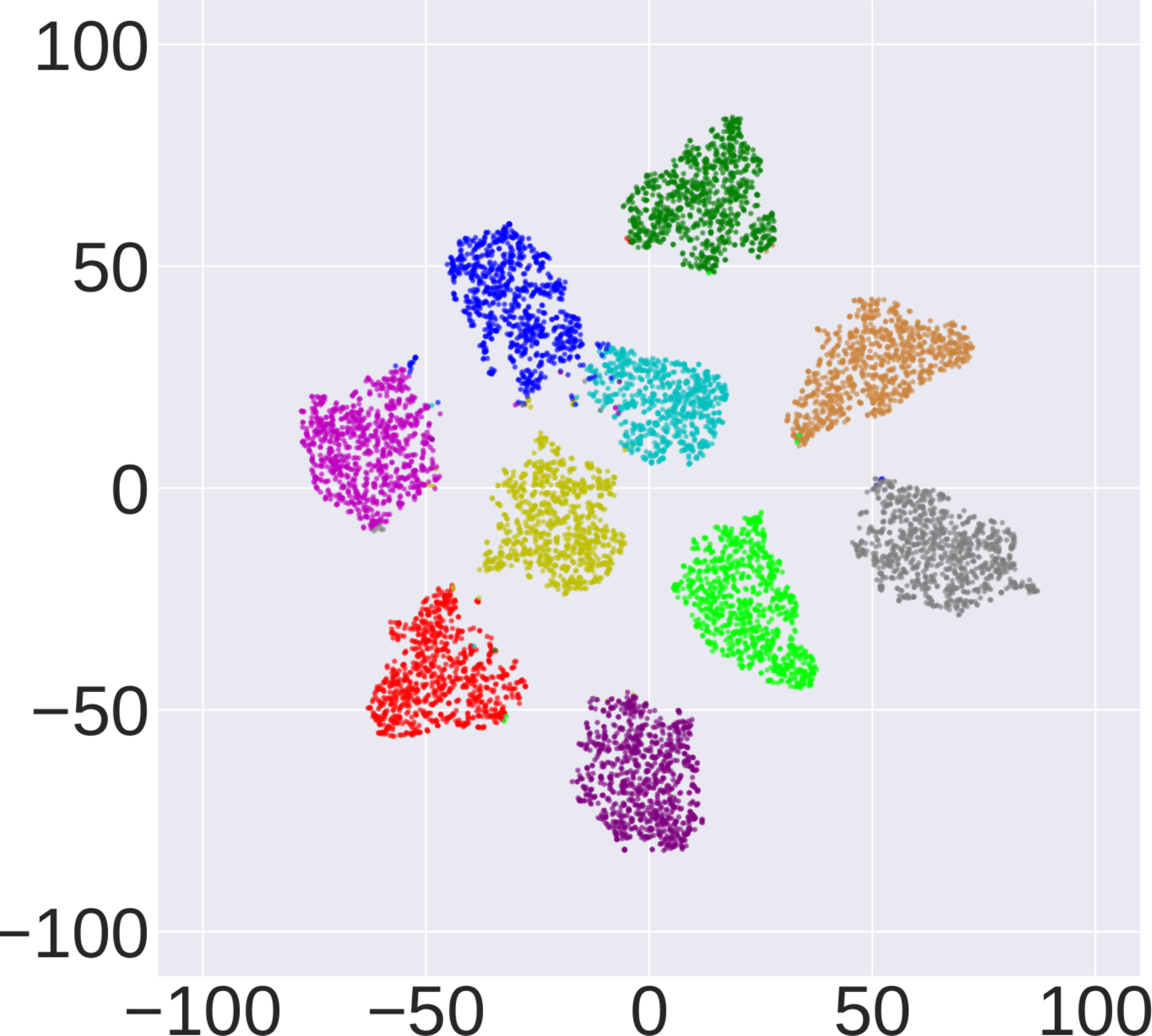}}
        \subfigure[RN34-FGSM]
        {\includegraphics[width=0.134\textwidth]{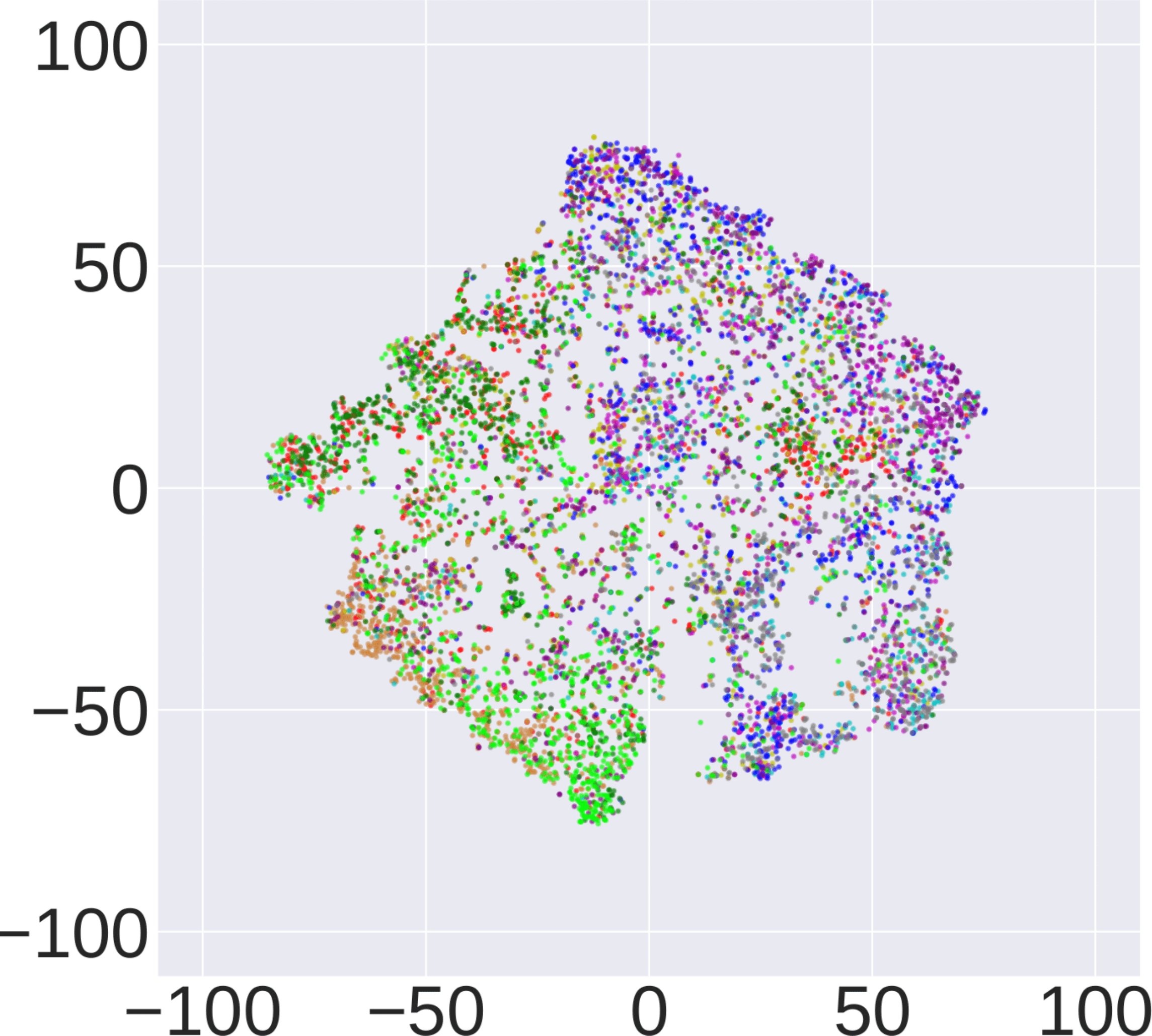}}
        \subfigure[RN34-BIM]
        {\includegraphics[width=0.134\textwidth]{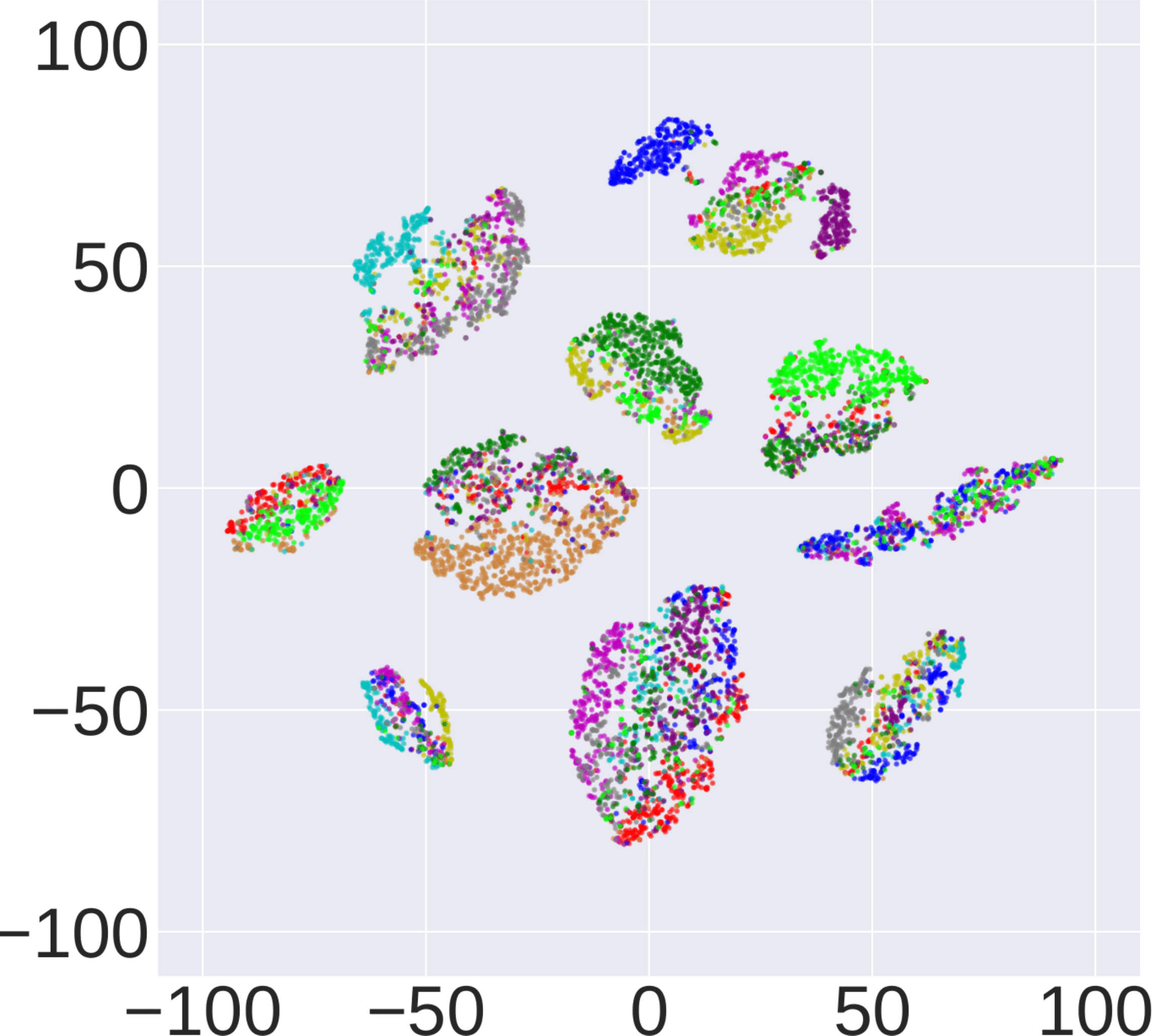}}
        \subfigure[RN34-PGD]
        {\includegraphics[width=0.134\textwidth]{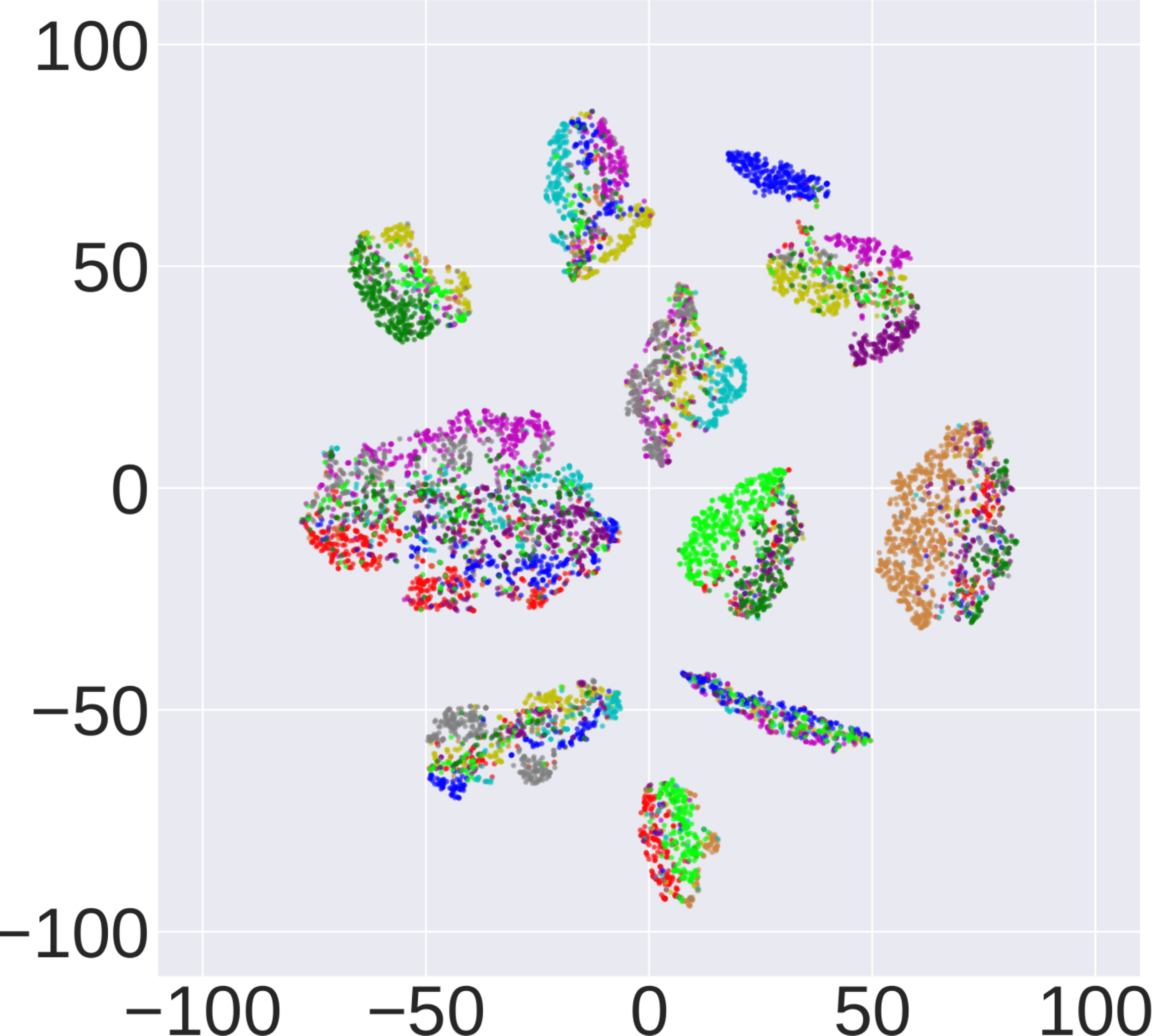}}
        \subfigure[RN34-CW]
        {\includegraphics[width=0.134\textwidth]{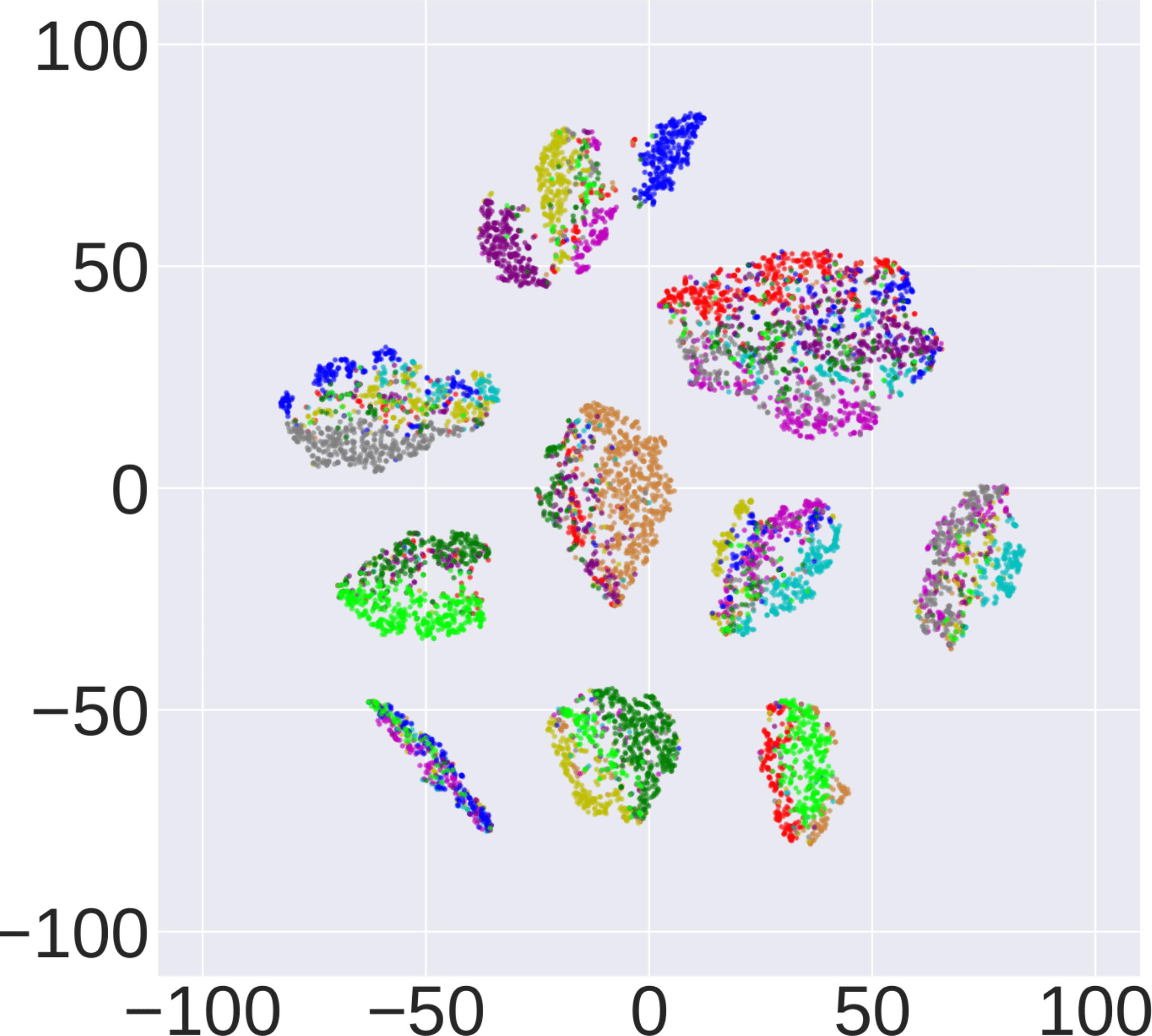}}
        \subfigure[RN34-AA]
        {\includegraphics[width=0.134\textwidth]{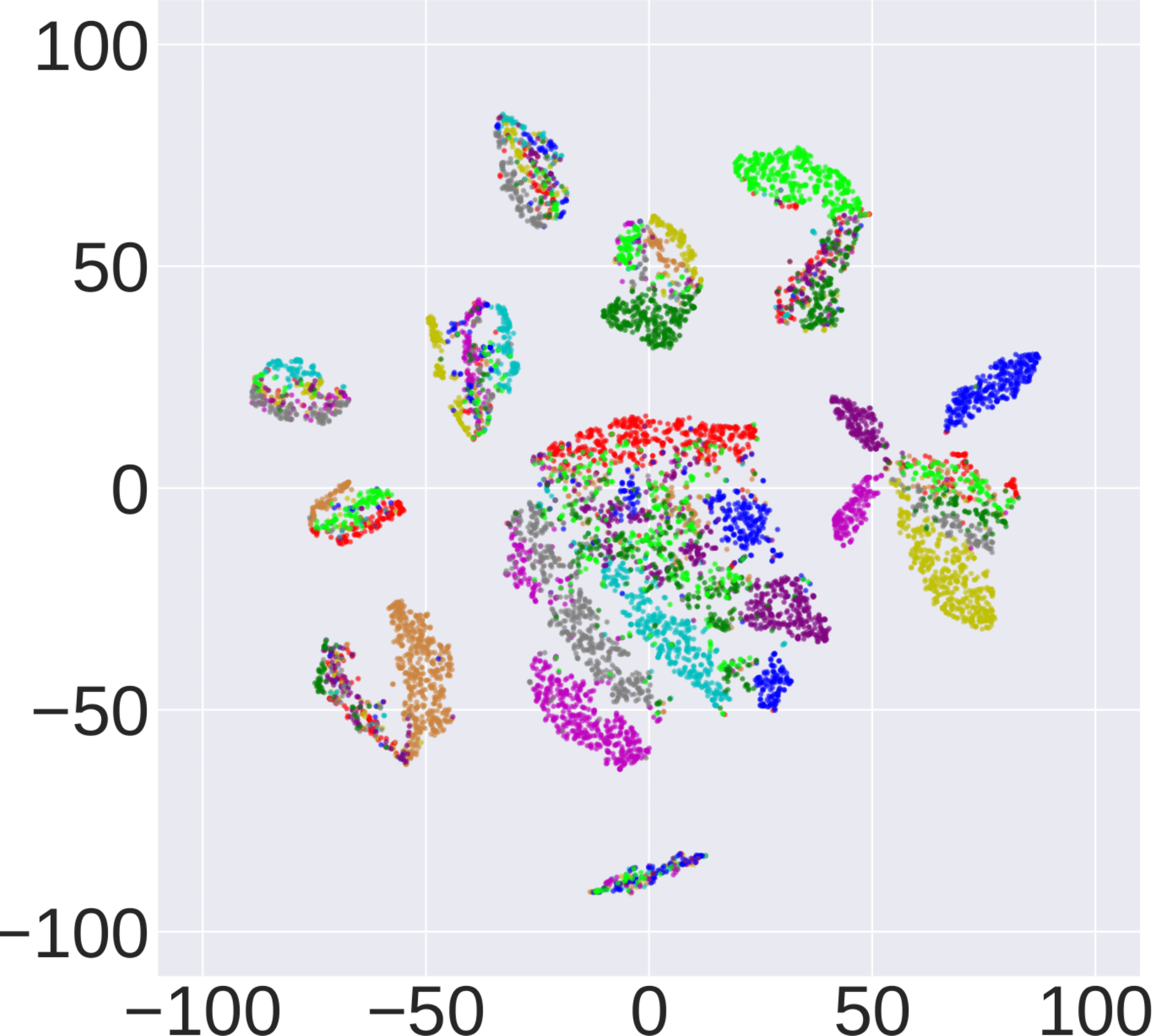}}
        \subfigure[RN34-Square]
        {\includegraphics[width=0.134\textwidth]{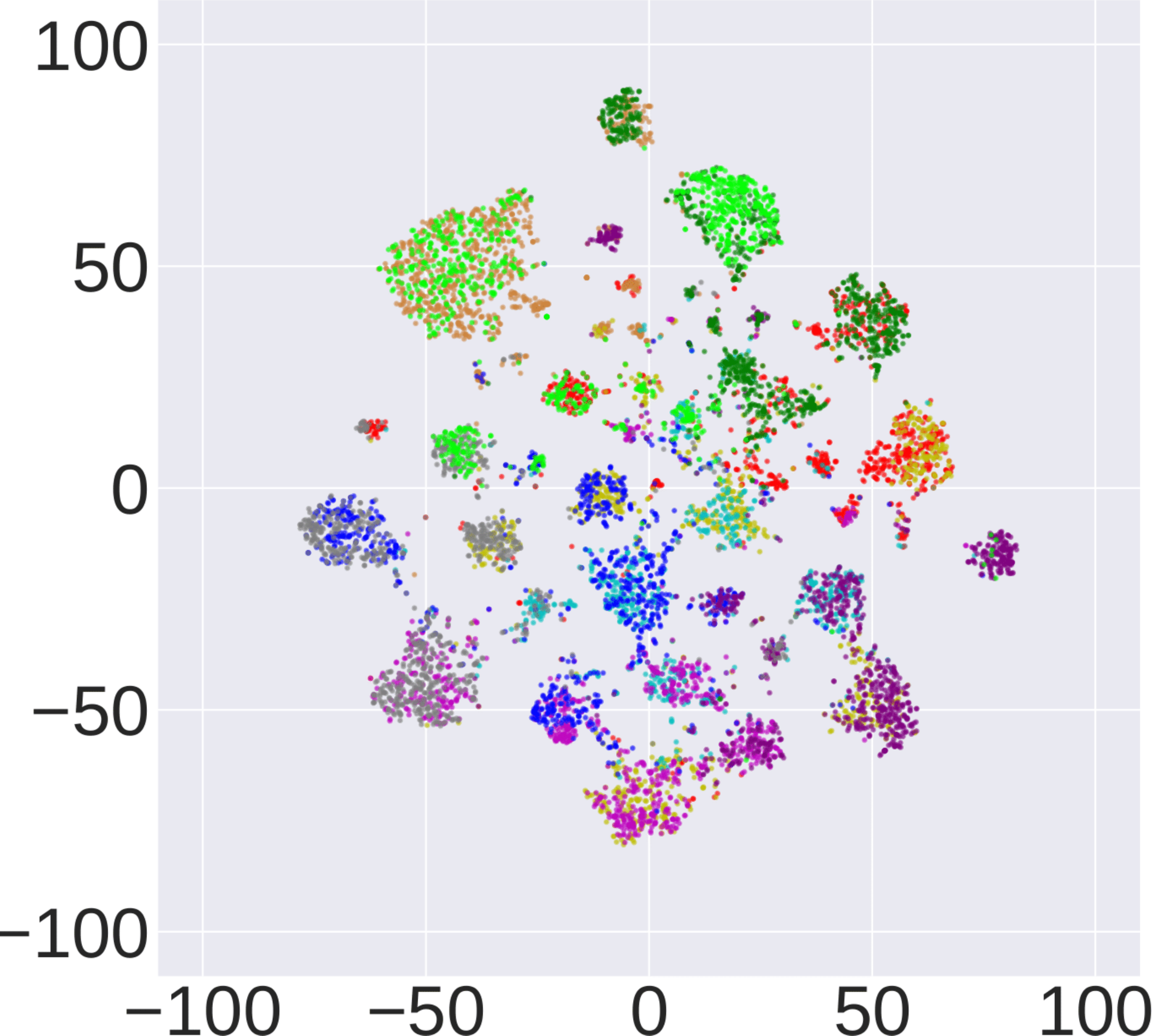}}

        \caption{\footnotesize
       Visualization of outputs using t-SNE. This figure visualizes outputs of the second to last layers in ResNet-18 and ResNet-34. Different colors represent different semantic meanings (i.e., different classes in the testing set of the \textit{CIFAR-10}). Apparently, semantic information contained in natural data is lost in adversarial data. This phenomenon can help us distinguish adversarial data and natural data.}
        \label{fig:moti_SAMMD2}
    \end{center}
\end{figure*}

%\begin{figure*}[tp]
    %\begin{center}
        %\subfigure[Visualization of outputs of dense layers in DNNs using t-SNE]
       % {\includegraphics[width=1.0\textwidth]{TSNE_overview.pdf}}
       % \vspace{-2em}
        %\caption{\footnotesize
       %Visualization of outputs using t-SNE. This figure visualizes outputs of the second to last layers in ResNet-18 and ResNet-34. Apparently, semantic information contained in natural data is lost in adversarial data. This phenomenon can help us distinguish adversarial data and natural data.}
        %\label{fig:moti_SAMMD2}
    %\end{center}
    %\vspace{-2em}
%\end{figure*}

\section{Dependence in Adversarial Data}
\label{sec:dependence_within_data}

As discussed in Section~\ref{sec:failure_MMDG}, this section investigates whether there is dependence within adversarial data. 
% This section analyzes when the generated adversarial data are not IID and gives empirical experiments to show the dependence within adversarial data.
% \subsection{Precise Definition of Distributions of Adversarial Data}

\paragraph{Dependence within adversarial data.}
% \label{sec:dependence_adv}
In the real world, since we do not know the attacking strategies of attackers, the dependence within adversarial data probably exists. For example, if attackers use one natural image to generate many adversarial images, the adversarial data is obviously not independent (the Non-IID (b) in Table~\ref{tab:HSIC}). 
% Besides, we also show that adversarial data are dependent under certain assumptions in Appendix~\ref{Asec:proof_dep}.
To empirically show the dependence within adversarial data, we use HSIC \citep{gretton2005measuring} as the statistic to represent the dependence score within adversarial data (Appendix~\ref{Asec: HSIC} presents detailed procedures to compute the HSIC values between two datasets). The larger value of HSIC represents the stronger dependence.

We generated two typical non-IID adversarial datasets that the Non-IID (a) and the Non-IID (b). Given natural images from the \textit{CIFAR-10} training set, we generated the Non-IID (a) using FGSM with the $L_\infty$-norm bounded perturbation $\epsilon \in [0.0235,0.0784]$. Given natural images from the \textit{CIFAR-10} testing set, we used Square with the $L_\infty$-norm bounded perturbation $\epsilon \in [0.0235,0.0784]$ to generate the adversarial data four times and mixed them into the Non-IID (b). For each dataset, we randomly selected two disjoint subsets (containing $500$ images) and compute the HSIC value over the two subsets. Repeating the above process $100$ times, we obtained the average value of the $100$ HSIC values in Table~\ref{tab:HSIC}. Since the HSIC value of adversarial data is higher than that of natural data (i.e., $\epsilon = 0$), the dependence within adversarial data is stronger than that within IID natural data. 
% This empirical result shows that adversarial data contains stronger dependence compared to natural data.
% We denote the two subsets by $X$ and $Y$, then, we can compute the $\textnormal{HSIC}(X,Y)$.  
% We fix the widths of $X$ and $Y$, so for the same natural images and corresponding adversarial data, the higher test statistic means the higher dependency. 

\paragraph{Dependence meets MMD tests.}
\citet{grosse2017statistical} and \citet{carlini2017adversarial} used the permutation based bootstrap \cite{oden1975arguments} to implement MMD-G test (i.e., the ordinary MMD test). Specifically, they initialize $a$ by $\MMD(S_X,S_Y)$. Then, they shuffle the elements of $S_X$ and $S_Y$ into two new sets $G_X$ and $G_Y$, and let $b = \MMD(G_X,G_Y)$. Repeating the shuffling process $K$ times, they can obtain a sequence $\{b_k\}_{k=1}^K$. If $a$ is greater than the $1-\alpha$ quantile of $\{b_k\}_{k=1}^K$, then the null hypothesis is rejected \cite{grosse2017statistical,carlini2017adversarial}. Namely, the adversarial attacks are detected. 

However, the permutation-based MMD-G test only works when facing IID data \cite{Kacper14wildbtp}. Since adversarial data may not be IID, according to \citet{Kacper14wildbtp}, \textit{the permutation based MMD-G} (previously used by \citet{grosse2017statistical} and \citet{carlini2017adversarial}) could be invalid to detect adversarial data.

% The validity of ordinary MMD tests (e.g., MMD-G) is based on the data being IID. But the dependence within adversarial data as shown in Section~\ref{sec:dependence_within_data} means that this basic assumption of MMD test may not hold when we solve Problem~\ref{problem1}. For statistical tests including MMD test, consistency is guaranteed if and only if the observations are IID. The asymptotic behaviour of kernel test statistics becomes quite different when temporal dependencies exist within the samples \cite{Kacper14wildbtp}. Therefore, the dependence between adversarial data is the key issue leading to failure of MMD-G to solve Problem~\ref{problem1} \cite{grosse2017statistical}, which is discussed in Section~\ref{sec:failure_MMDG}. 

\section{A Semantic-aware MMD Test}
\label{sec:SAMMD}

To take care of three factors missed by previous studies \cite{grosse2017statistical,carlini2017adversarial}, we design a simple and effective test motivated by 
the most important characteristic of adversarial data. Namely, semantic meaning of adversarial data (in the view of a well-trained classifier on natural data) is very different from that of natural data. Based on this characteristic, \emph{semantic-aware MMD} (SAMMD) is proposed to measure the discrepancy between natural and adversarial data in this section. 

% We first introduce how to obtain semantic features of adversarial and natural data. Then, SAMMD is proposed based on the semantic features. After analyzing the asymtotics and test power of SAMMD, we optimize parameters in SAMMD by maximizing its test power. Finally, we propose the SAMMD test.

% In existing two-sample tests, only raw features (i.e., pixels in images) are used to measure the similarity between two images, e.g., the MMD-D test \cite{liu2020learning}. However, adversarial data can mislead a well-trained DNN on natural data to make false label predictions without significantly changing the raw features of natural data.  
% what's special about adversarial data is that it can mislead a well-trained DNN on natural data to make false label predictions. 
% Hence, the label information of natural data, also known as semantic information, should be considered to help measure the similarity between two images. In this section, we present a novel test statistic SAMMD that takes care of raw and semantic features. 
% conduct experiments to verify the validity of semantic information in MMD tests and present a new statistic \emph{semantic-aware MMD}. Then we show how to use wild bootstrap in our SAMMD test process to reliably test if the (in)dependent upcoming data are from the distribution of natural data.

\paragraph{Semantic features.}As mentioned above, the semantic meaning of data plays an important role to distinguish between natural and adversarial data. Thus, we will first introduce how to represent the semantic meaning of each image, i.e., to construct \emph{semantic features} of images, in this part. Since the success of deep learning mainly takes roots in its ability to extract features that can be used to classify images well, outputs of the layers of a well-trained deep neural network have already contained semantic meaning. Namely, we can construct semantic features of images using outputs of the layers of the well-trained network.

Figure~\ref{fig:moti_SAMMD2} visualizes outputs of the second to last full connected layers of a well-trained ResNet-18 and ResNet-34 using t-SNE \cite{maaten2008visualizing}, showing that these outputs indeed contain clear semantic meanings (in the view of natural data). Thus, we use these outputs as semantic features in this paper. This figure also shows that natural data and adversarial data are quite different in the view of semantic features. In addition, we also show the MMD values between semantic features of natural and adversarial data in Figure~\ref{fig:moti_SAMMD1}. Results show that, in the second to last full connected layer of ResNet-18, outputs of natural and adversarial data have the largest distributional discrepancy. Thus, the semantic features we constructed can help us distinguish adversarial data and natural data well.

% We conduct two experiments to illustrate MMD values between outputs of each layer of DNN whose inputs are natural and adversarial data (see Figure~\ref{fig:moti_SAMMD}\textcolor{mydarkred}{a}). It is clear that, in last three dense layers, their outputs of natural and adversarial data have larger distributional discrepancy compared to outputs of other layers. Figure~\ref{fig:moti_SAMMD}\textcolor{mydarkred}{b} visualizes outputs of the last three dense layers using t-SNE. It is clear that natural data and adversarial data are quite different in the view of outputs of each layer of DNN the right semantic features contained in natural data is lost in adversarial data. 
% Thus outputs containing semantic features of the last few layers of DNN are considered to help measure the distributional discrepancy of natural and adversarial data. 
%in this section, we present a new statistic \emph{semantic-aware MMD} (SAMMD) to better measure the distributional discrepancy between natural data and adversarial data by using raws features and \emph{semantic features} extracted by a well-trained DNN on natural data. Then we show how to use wild bootstrap in our SAMMD test\footnote{\checkmark tl: I thought the wild bootstrap is a part of SAMMD test.} process to reliably test if the (in)dependent upcoming data are from the distribution of natural data.

\textbf{Semantic-aware MMD.}~~
Based on the semantic features, we consider the following semantic-aware deep kernel $k_{\omega}(\vx,\vy)$ to measure the similarity between two images:
\begin{align}
\label{eq:deepkernel_SAMMD}
 k_{\omega}(\vx,\vy) = \Big[(1-\epsilon_0)s_{\hat{f}}(\vx,\vy)+\epsilon_0\Big]q(\vx,\vy),
\end{align}
where $s_{\hat{f}}(\vx,\vy) = \kappa(\phi_p(\vx),\phi_p(\vy))$
is a deep kernel function that measures the similarity between $\vx$ and $\vy$ using semantic features extracted by $\hat{f}$; 
we use $\phi_p$, the second to the last fully connected layer in $\hat{f}$, to extract semantic features (according to Figure~\ref{fig:moti_SAMMD1});
the $\kappa$ is the Gaussian kernel (with bandwidth $\sigma_{\phi_p}$);  
$\epsilon_0\in(0,1)$ and $q(\vx,\vy)$ (the Gaussian kernel with bandwidth $\sigma_q$) are key components to ensure that $k_{\omega}(\vx,\vy)$ is a characteristic kernel \cite{liu2020learning} (ensuring that, SAMMD equals zero if and only if two distributions are the same \cite{liu2020learning}). Since $\hat{f}$ is fixed, the set of parameters of $k_{\omega}$ is $\omega=\{\epsilon_0,\sigma_{\phi_p},\sigma_q\}$. Based on $k_\omega(\vx,\vy)$ in Eq.~(\ref{eq:deepkernel_SAMMD}), $\textnormal{SAMMD}(\P,\Q)$ is
\begin{align*}
\sqrt{\E\left[ k_\omega(X, X') + k_\omega(Y, Y') - 2 k_\omega(X, Y) \right]},
\end{align*}
where $X,X'\sim\P$, $Y,Y'\sim\Q$. We can estimate $\textnormal{SAMMD}(\P,\Q)$ using the $U$-statistic estimator, which is unbiased for $\textnormal{SAMMD}^2(\P,\Q)$: 
\begin{align}\label{eq:sammd_estimator}
\widehat{\textnormal{SAMMD}}_u^2(S_X, S_Y; k_\omega)
= \frac{1}{n (n-1)} \sum_{i \ne j} H_{ij},
\end{align}
where $
H_{ij} = k_\omega(\vx_i, \vx_j) + k_\omega(\vy_i, \vy_j) - k_\omega(\vx_i, \vy_j) - k_\omega(\vy_i, \vx_j)$. 

%        Discrepancy of MMD value between different layers' outputs. The figure shows MMD value between outputs of $5$ different layers of ResNet-18. FC(stl) is the second to last full connected layer. It is clear that, in the FC(stl) layer, outputs of natural and adversarial data have larger distributional discrepancy compared to outputs of other $4$ convolution layers.
\begin{figure}[t]
    \begin{center}
        %\subfigure[Distributional discrepancy]
        {\includegraphics[width=0.4\textwidth]{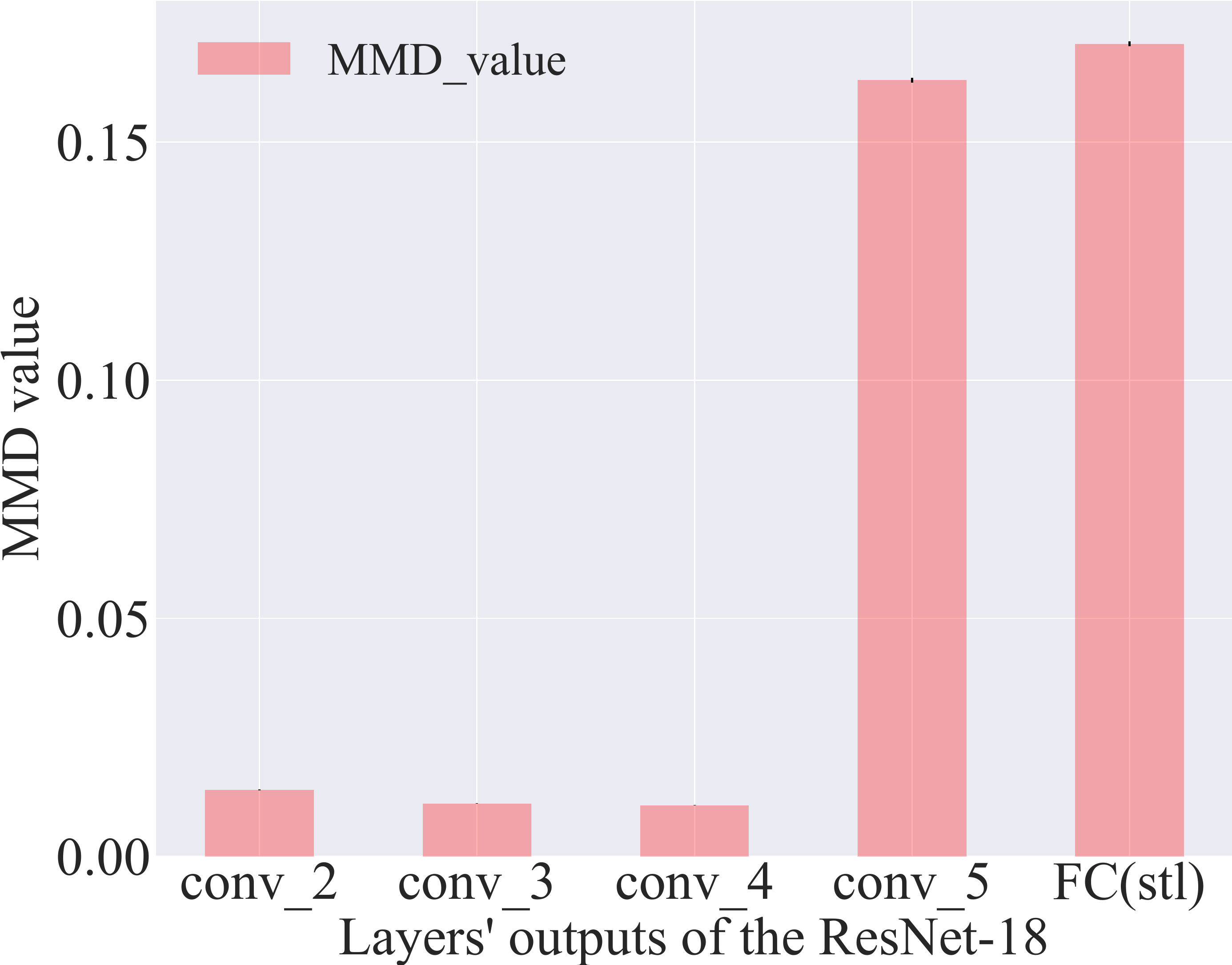}}
        \caption{\footnotesize
        Discrepancy of MMD value between different layers' outputs in $\hat{f}$. The figure shows MMD value between outputs of $5$ different layers of ResNet-18. It is clear that, in the conv$\_5$ layers, outputs of natural and adversarial data have larger distributional discrepancy compared to outputs of other $3$ convolutional layers. FC(stl) is the second to the last fully-connected layer and also an average pooling layer. Compared to the conv$\_5$ layer, the FC(stl) layer has fewer dimensions and its outputs can help measure the discrepancy between natural and adversarial data well.}
        \label{fig:moti_SAMMD1}
    \end{center}
\end{figure}

\begin{figure*}[htb]
    \begin{center}
        \subfigure[Type I error]
        {\includegraphics[width=0.245\textwidth]{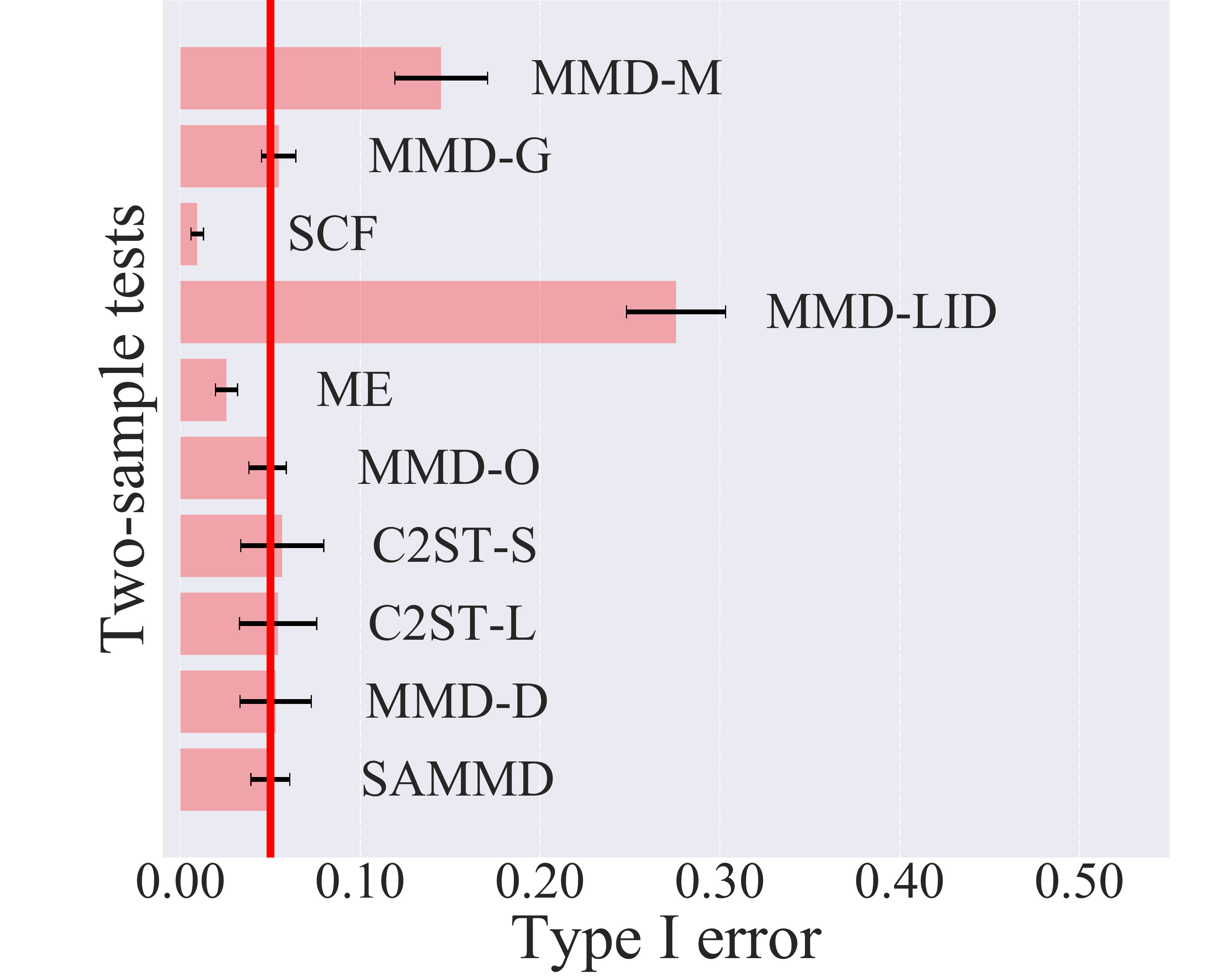}}
        \subfigure[Six different attacks]
        {\includegraphics[width=0.245\textwidth]{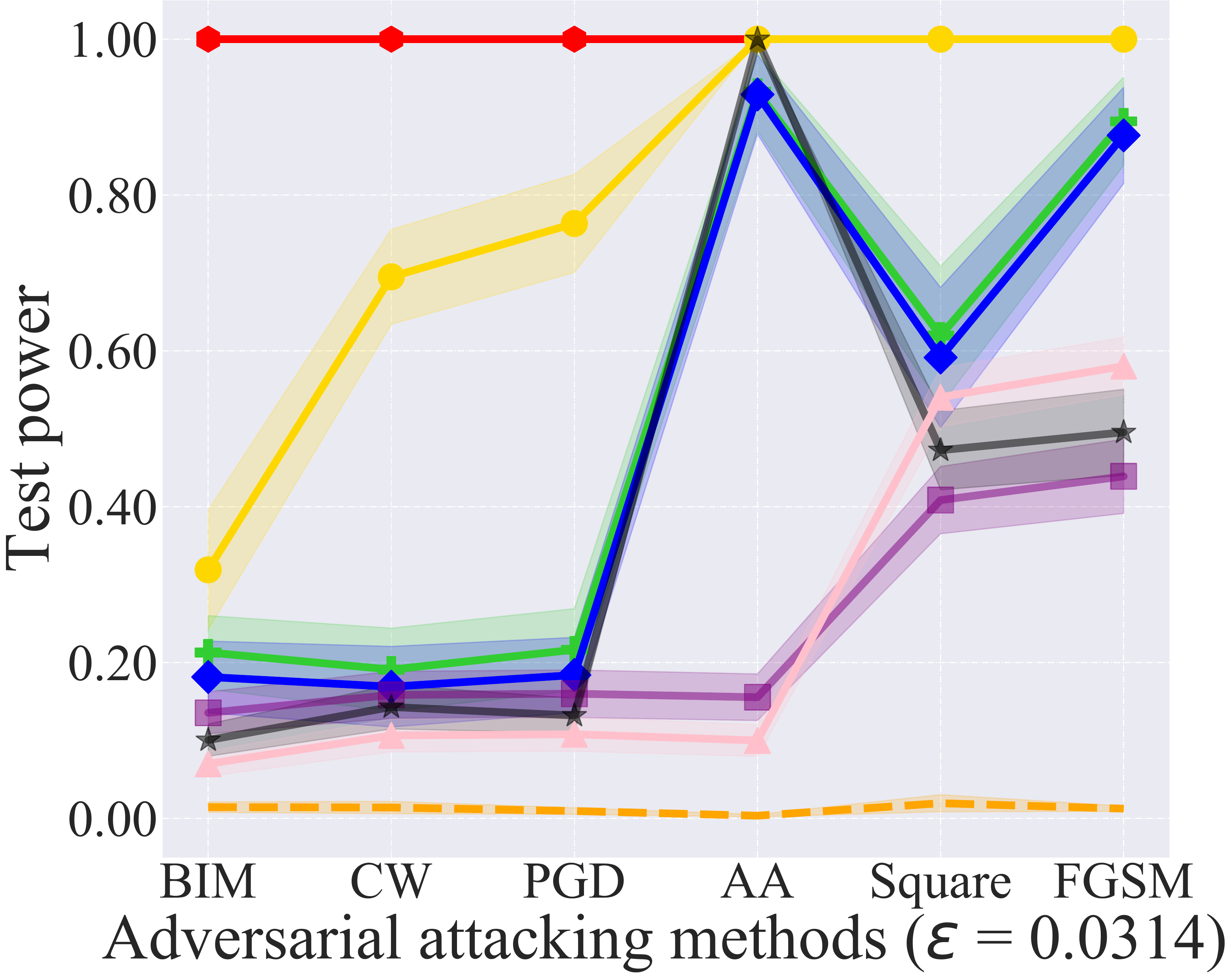}}
        \subfigure[Different $\epsilon$ of FGSM]
        {\includegraphics[width=0.245\textwidth]{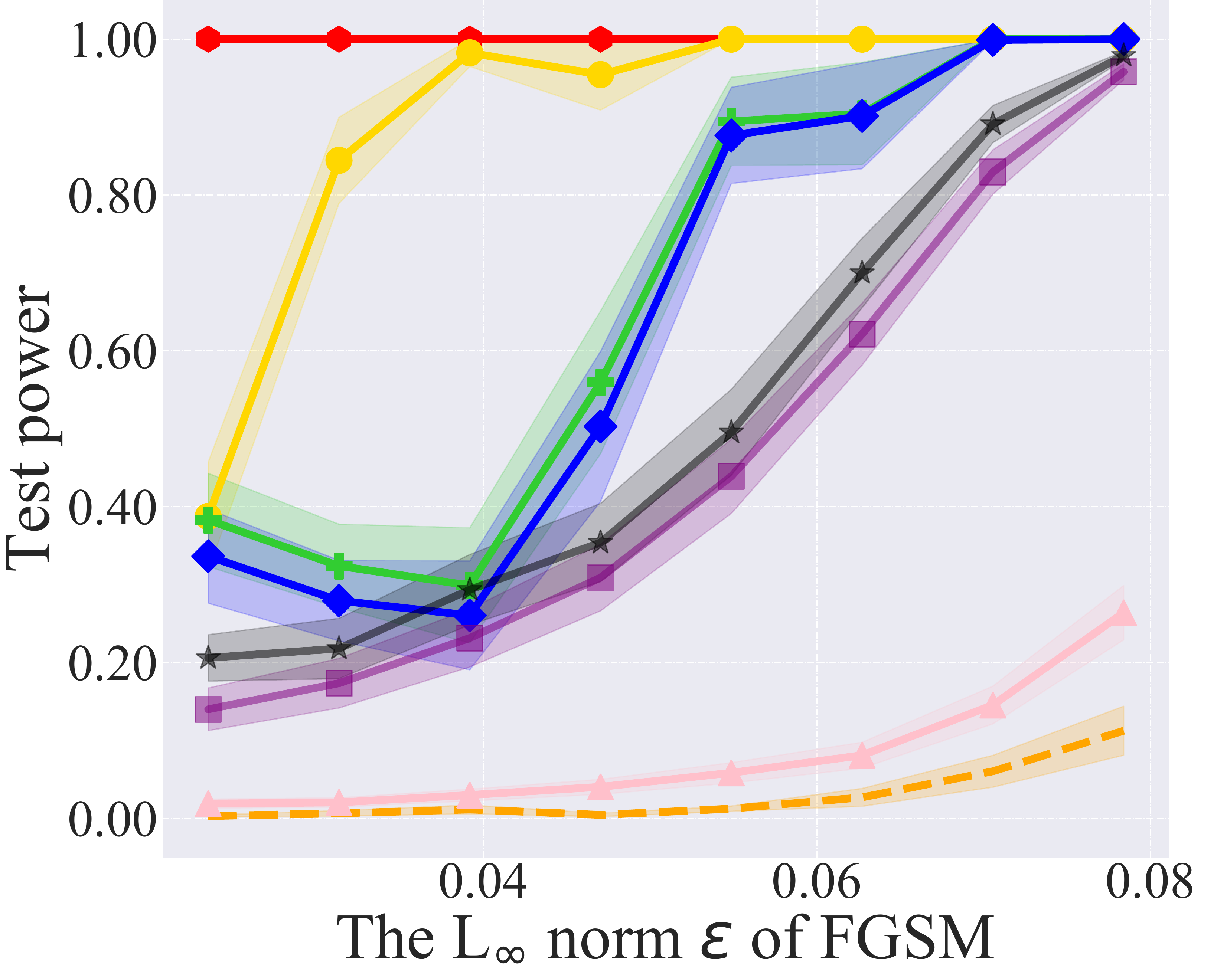}}
        \subfigure[Different $\epsilon$ of BIM]
        {\includegraphics[width=0.245\textwidth]{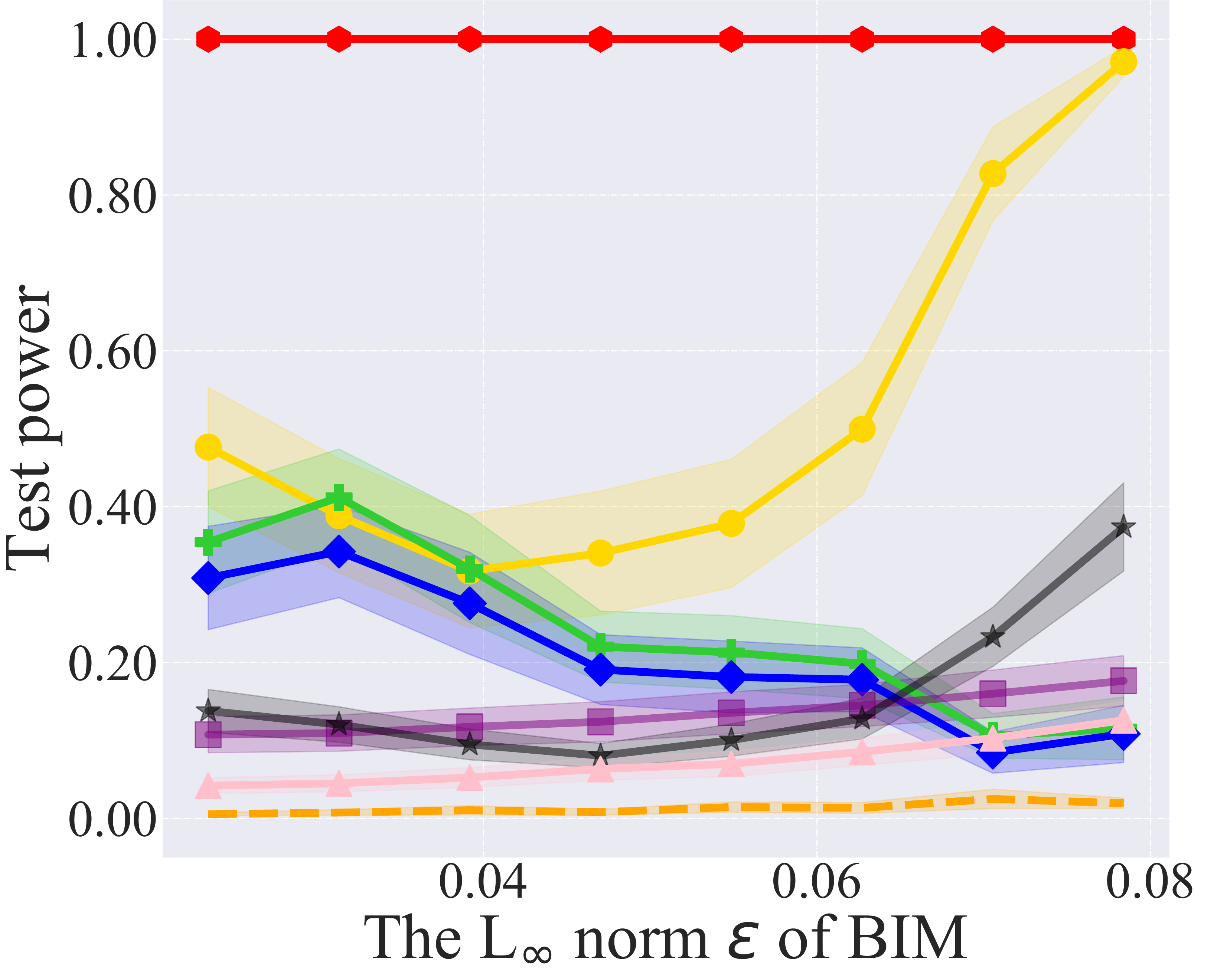}}
        \subfigure[Different $\epsilon$ of CW]
        {\includegraphics[width=0.245\textwidth]{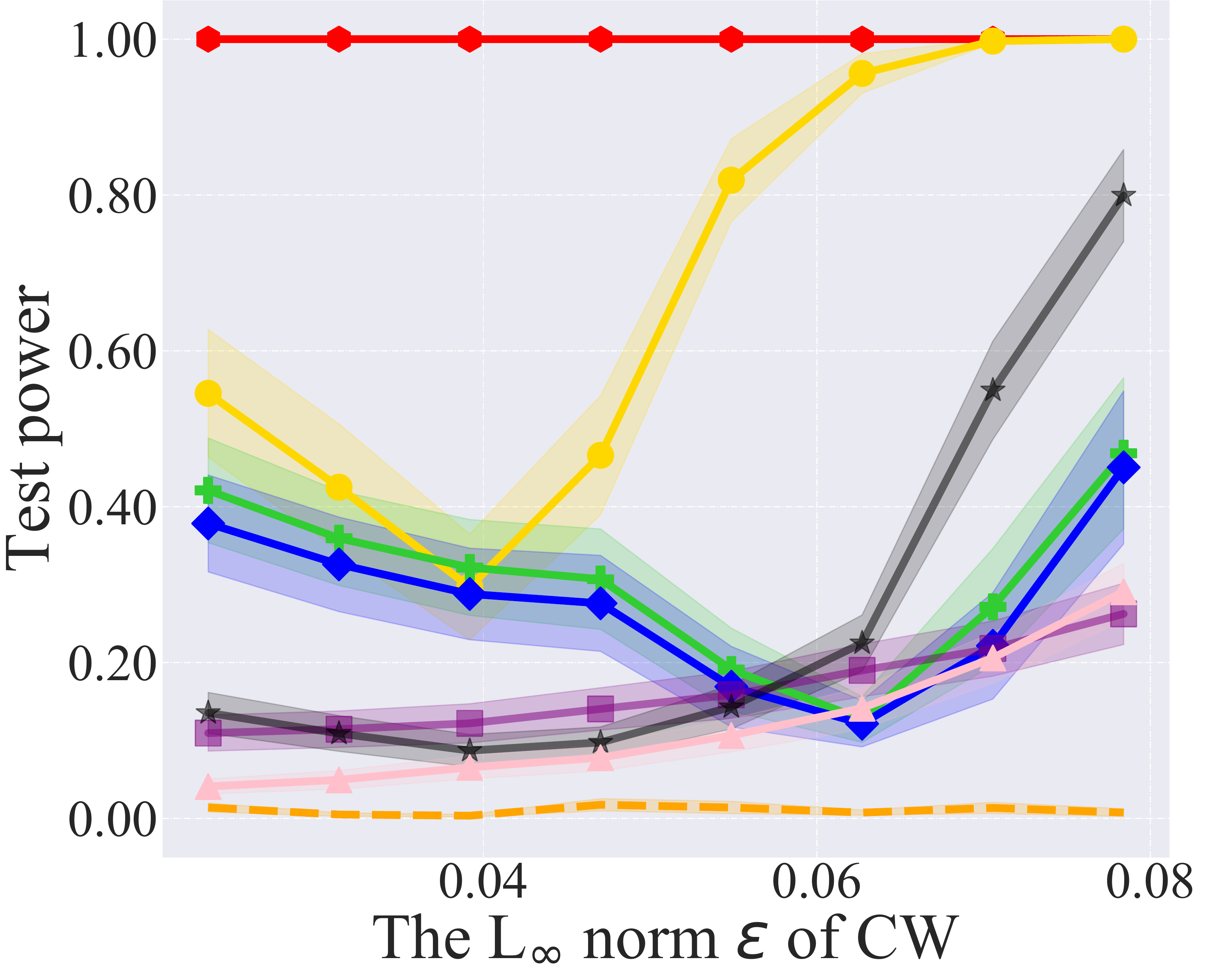}}
        \subfigure[Different $\epsilon$ of AA]
        {\includegraphics[width=0.245\textwidth]{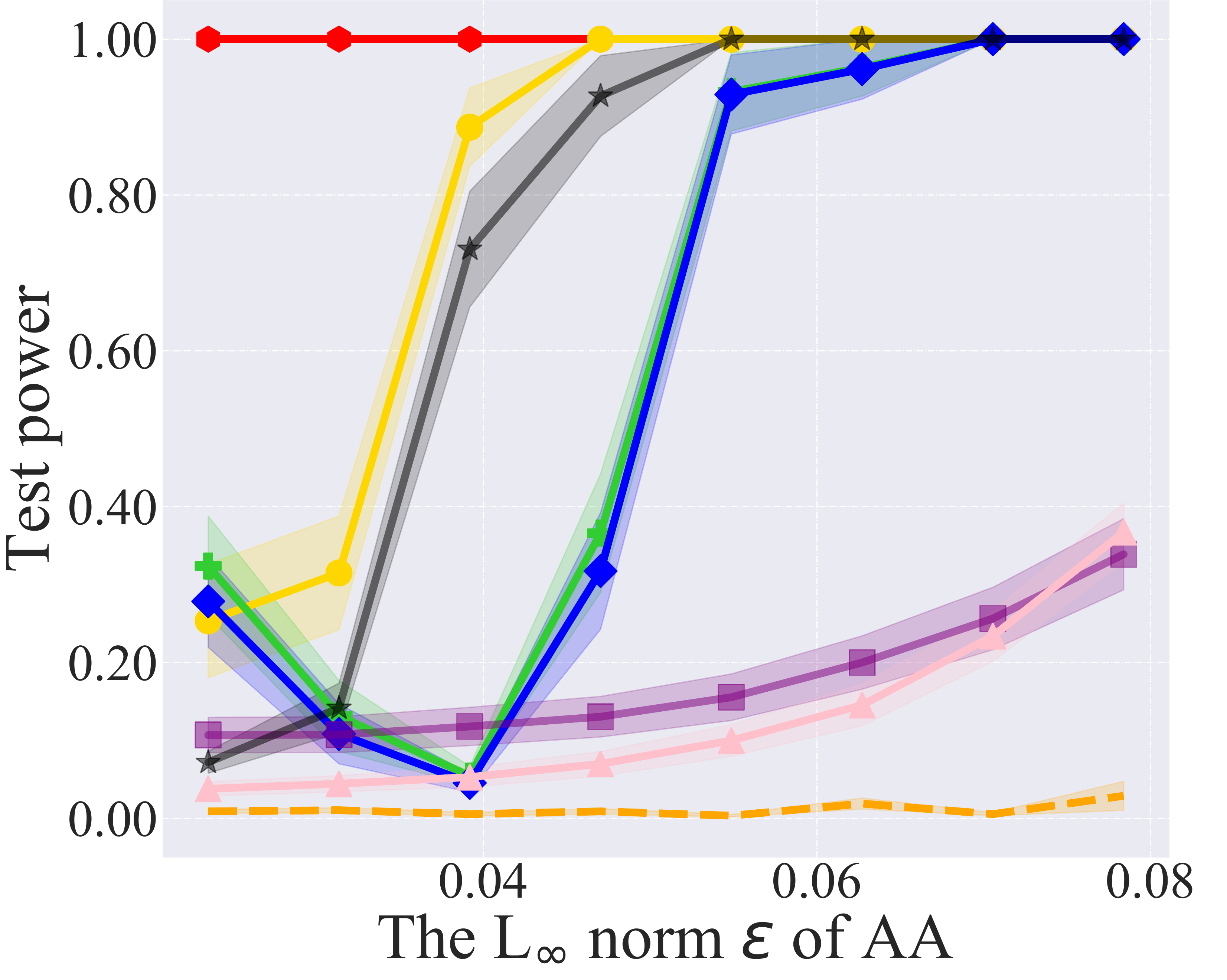}}
        \subfigure[Different $\epsilon$ of PGD]
        {\includegraphics[width=0.245\textwidth]{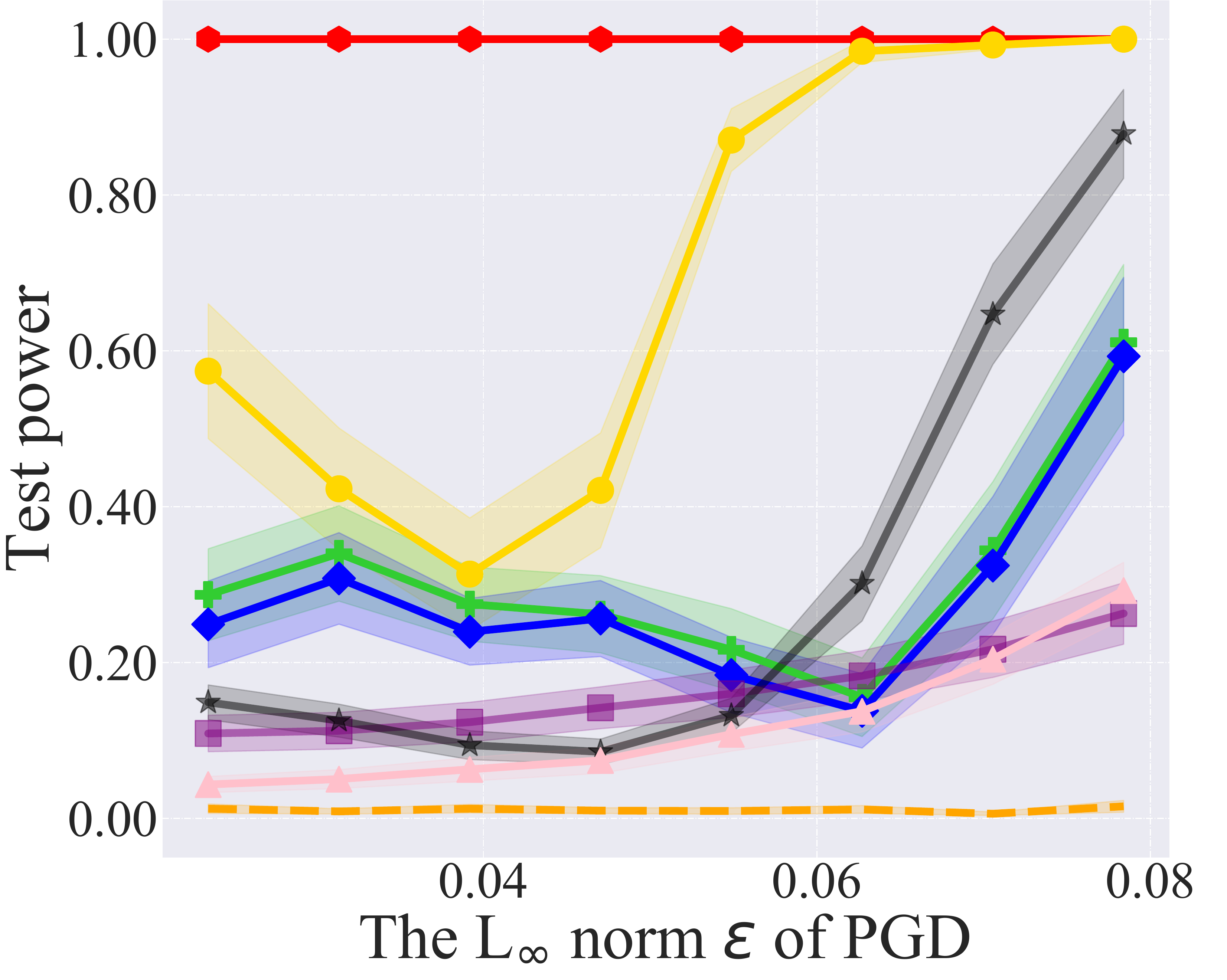}}
        \subfigure[Different set sizes]
        {\includegraphics[width=0.245\textwidth]{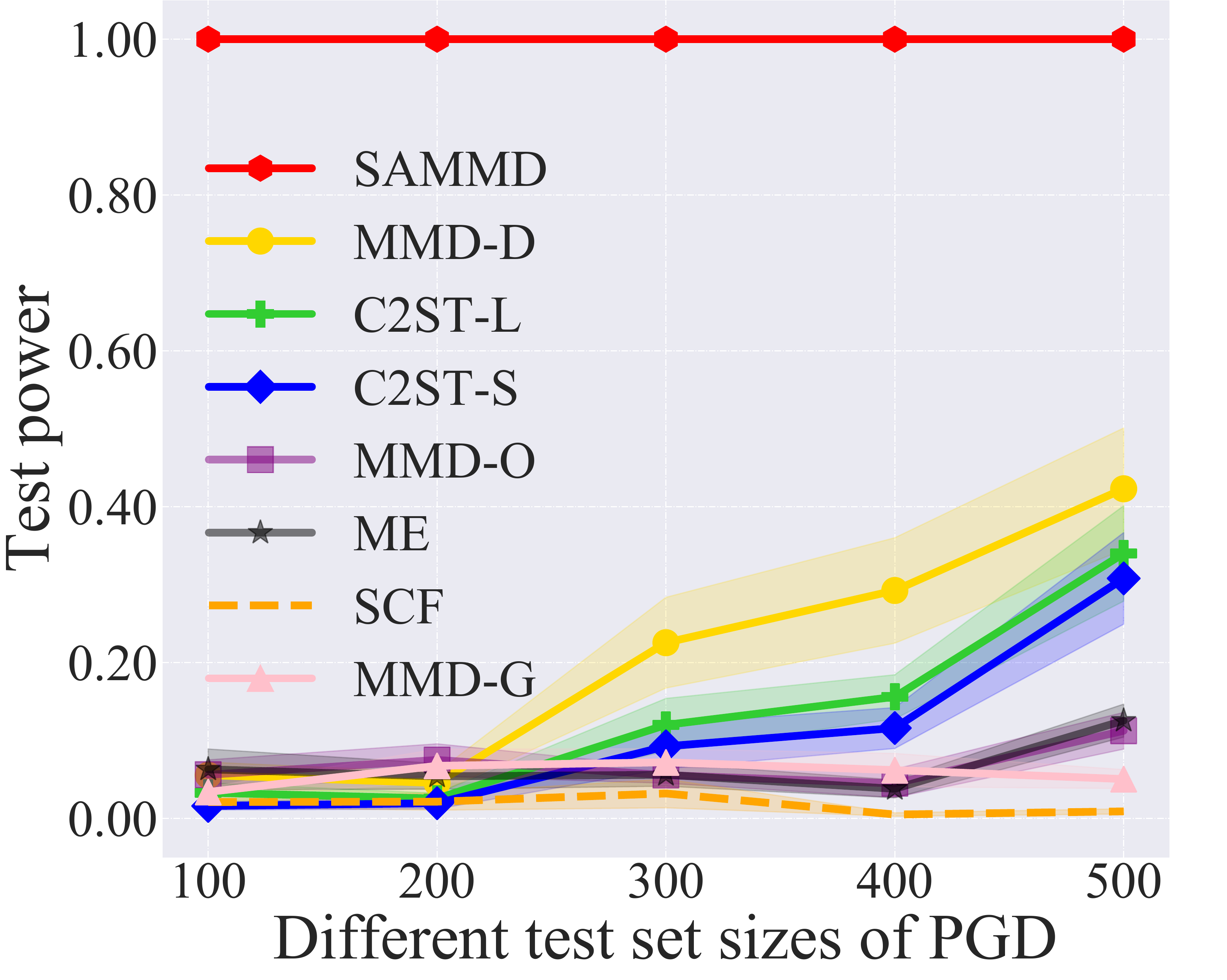}}
        \subfigure[Mixture of adv and natural]
        {\includegraphics[width=0.245\textwidth]{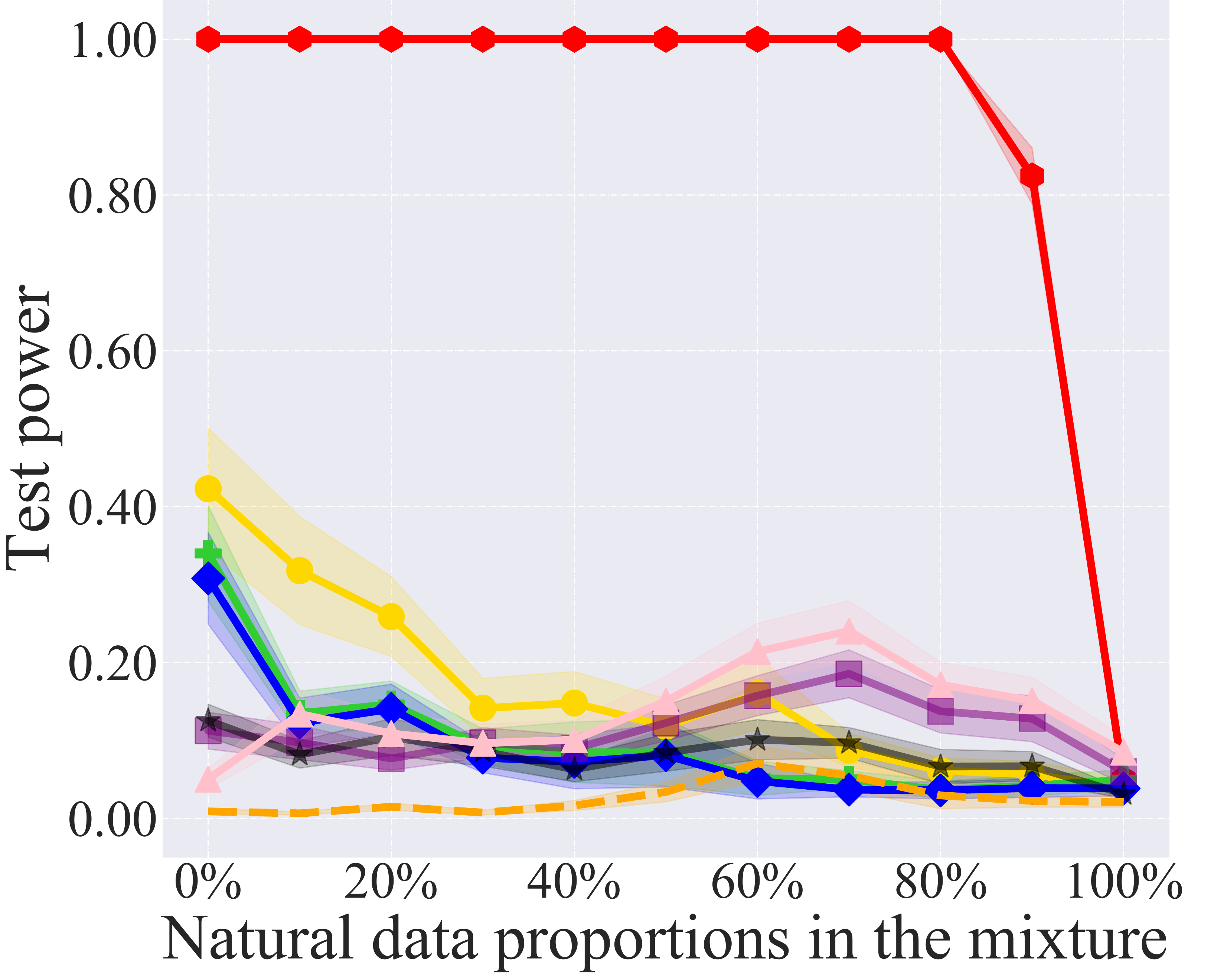}}
        \subfigure[SAMMD: Details of set sizes]
        {\includegraphics[width=0.245\textwidth]{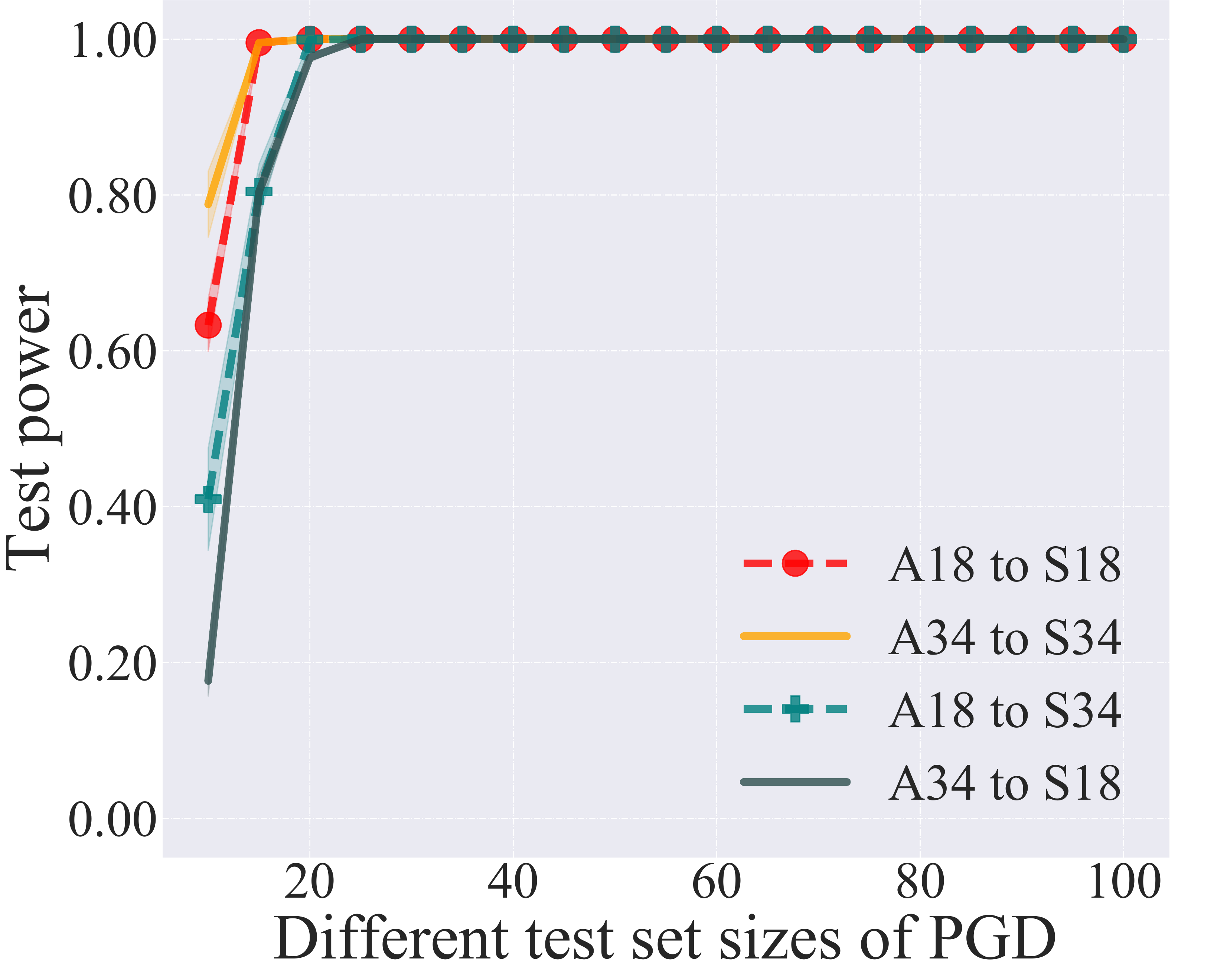}}
        \subfigure[SAMMD: Details of mixture]
        {\includegraphics[width=0.245\textwidth]{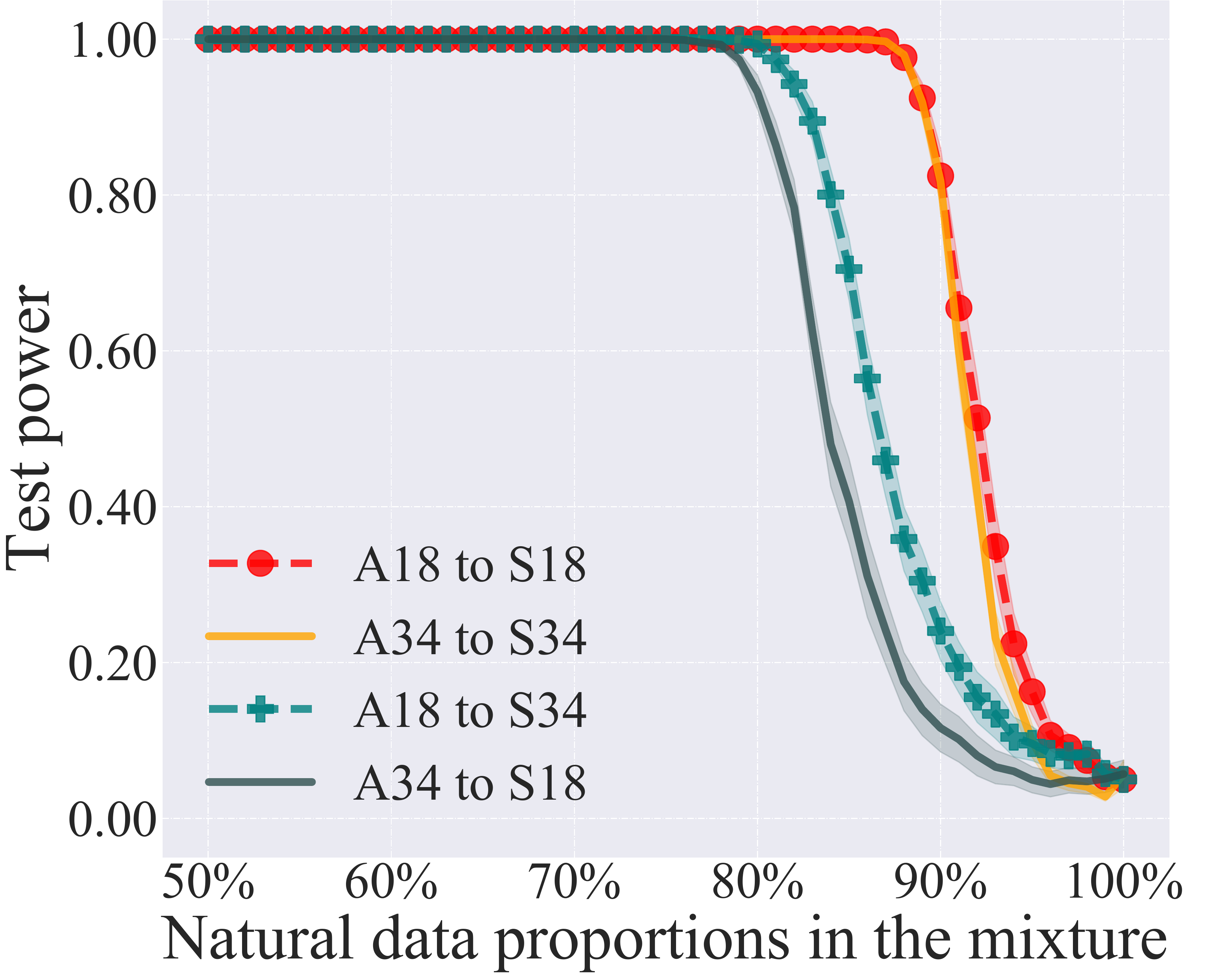}}
        \subfigure[Semantic features' study]
        {\includegraphics[width=0.245\textwidth]{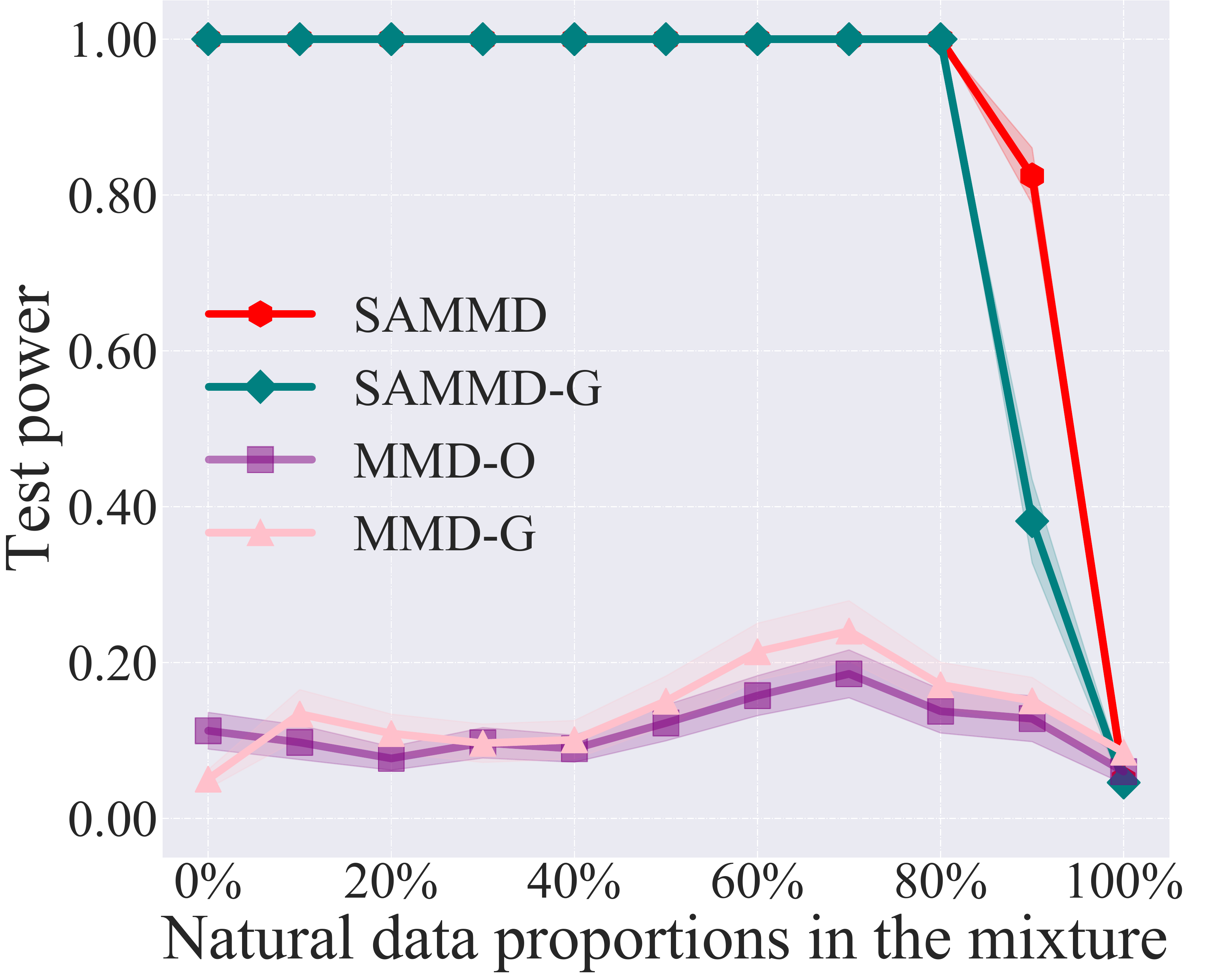}}
        \caption{\footnotesize Results of adversarial data detection. Subfigure (a) reports the type I error when $S_Y$ are natural data. 
        % The x-coordinates are 'SAMMD', 'MMD-D', 'C2ST-L', 'C2ST-S', 'MMD-O', 'ME', 'MMD-LID', 'SCF', 'MMD-G', 'MMD-M'. 
        The ideal type I error should be around $\alpha$ (red line, $\alpha=0.05$ in this paper). Subfigures (b)-(l) report the test power (i.e., the detection rate) when $S_Y$ are adversarial data (or the mixture of adversarial and natural data). The ideal test power is $1$ (i.e., $100\%$ detection rate). Subfigures (b) - (i) share the same legend presented in subfigure (h). Details of subfigures are explained in Section~\ref{experiments}.
    %   Although natural data and adversarial data looks very similar (see the subfigure (a)), they are actually from different distributions from a view of hypothesis testing (see subfigure (b)).
      }
        \label{fig:results}
    \end{center}
    \vspace{-1em}
\end{figure*}

\begin{figure*}[htb]

    \begin{center}
        \subfigure[Non-IID (a): FGSM]
        {\includegraphics[width=0.245\textwidth]{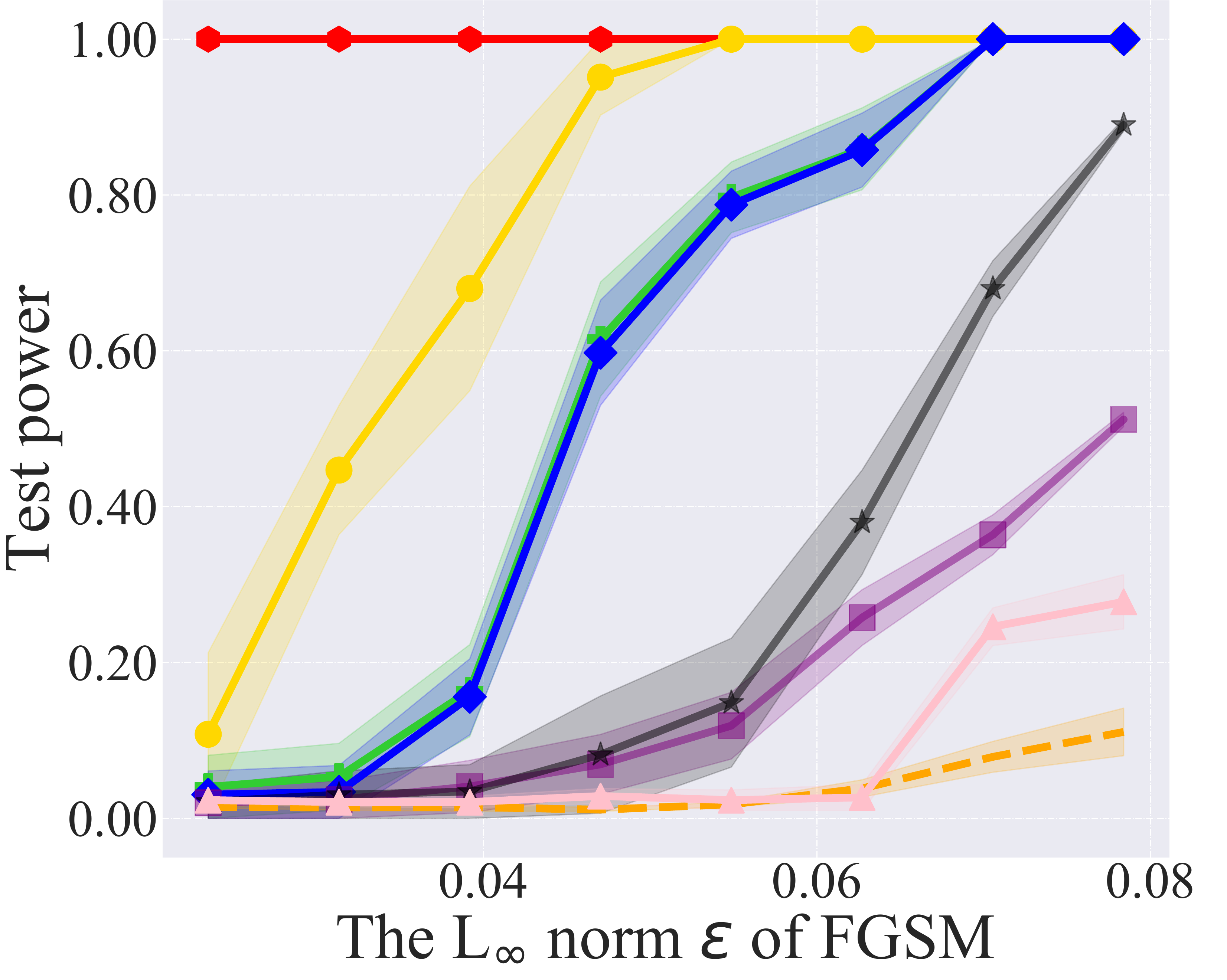}}
        \subfigure[Non-IID (a): BIM]
        {\includegraphics[width=0.245\textwidth]{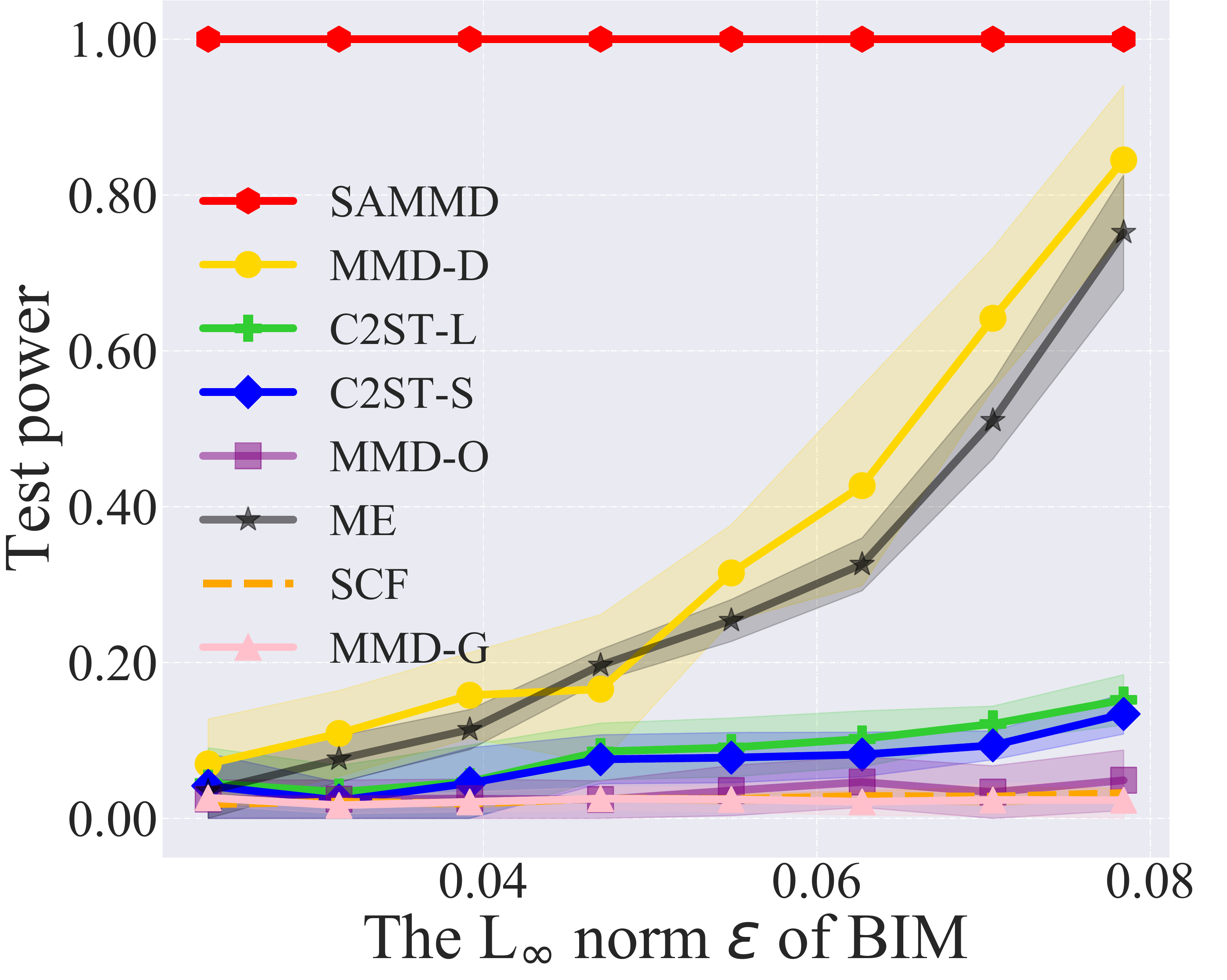}}
        \subfigure[Non-IID (a): PGD]
        {\includegraphics[width=0.245\textwidth]{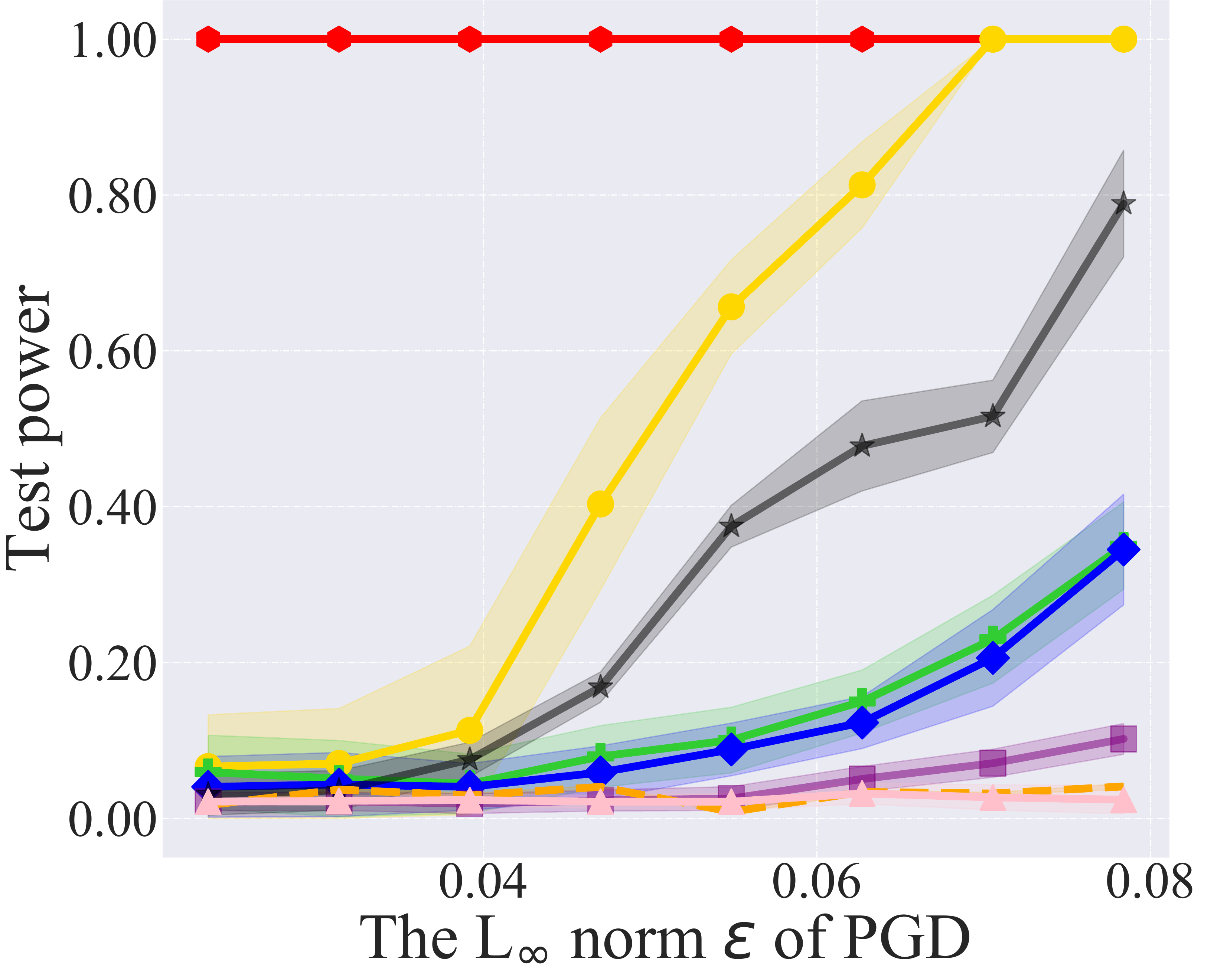}}
        \subfigure[Non-IID (a): CW]
        {\includegraphics[width=0.245\textwidth]{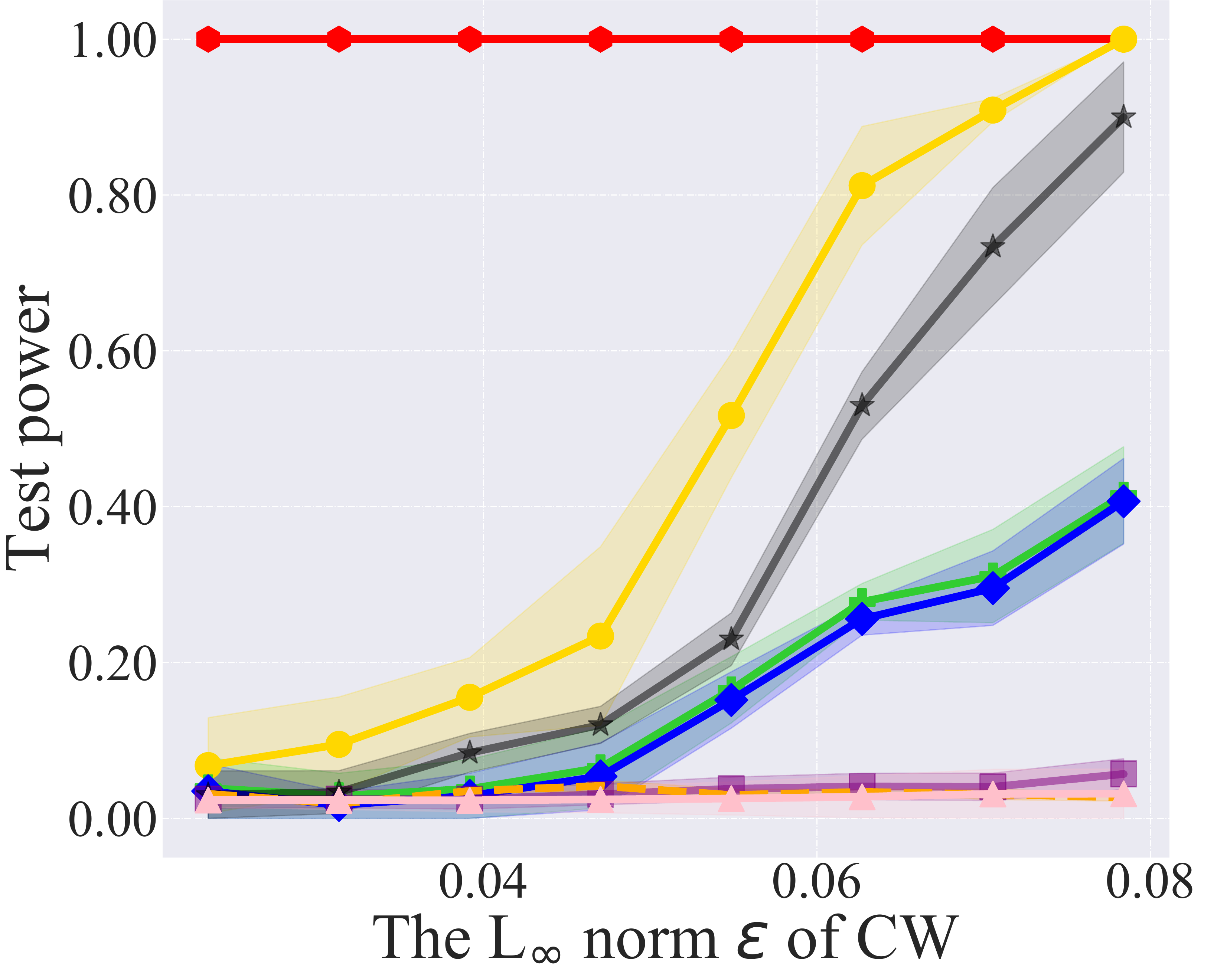}}
        \subfigure[Non-IID (a): AA]
        {\includegraphics[width=0.245\textwidth]{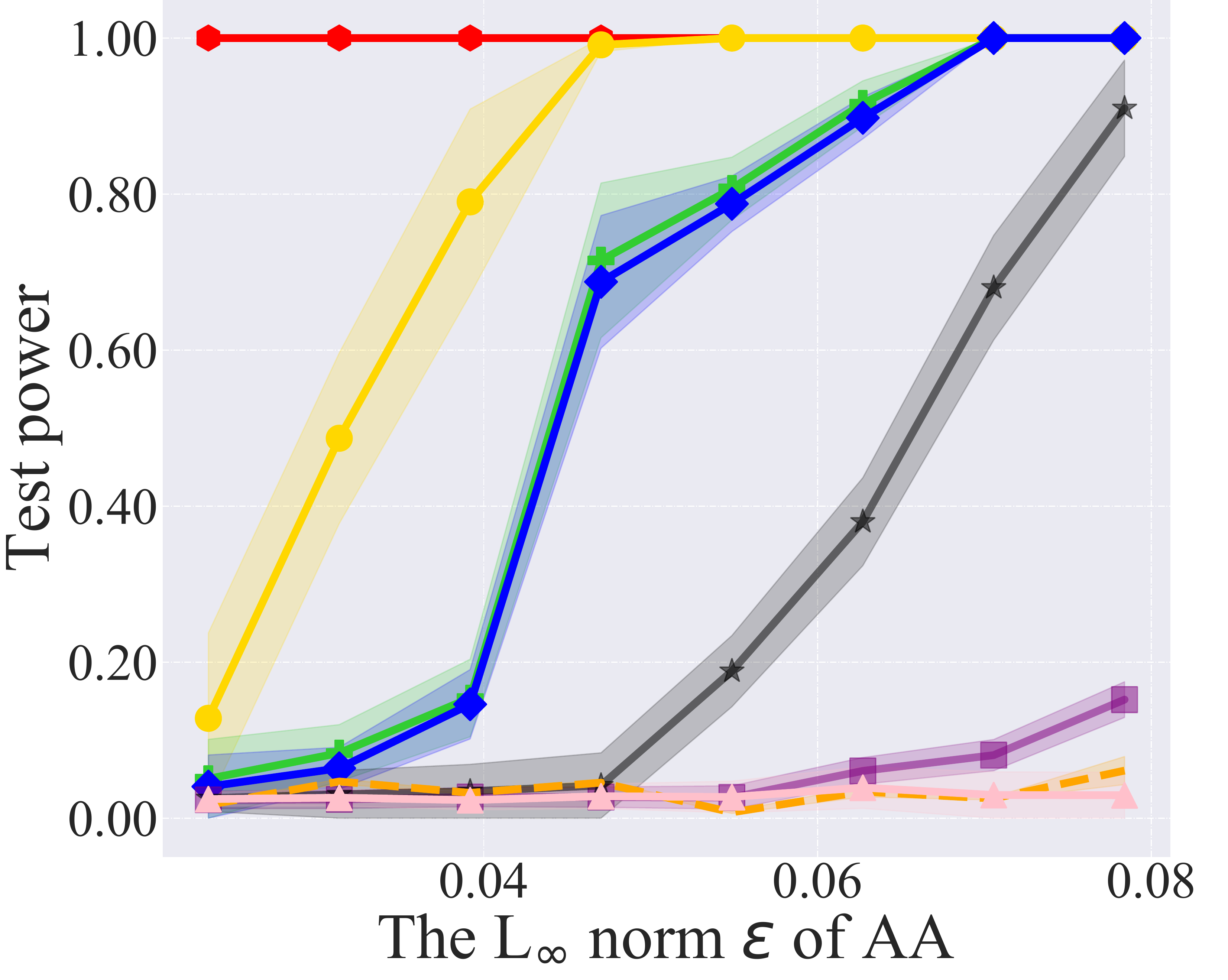}}
        \subfigure[Non-IID (b): Square]
        {\includegraphics[width=0.245\textwidth]{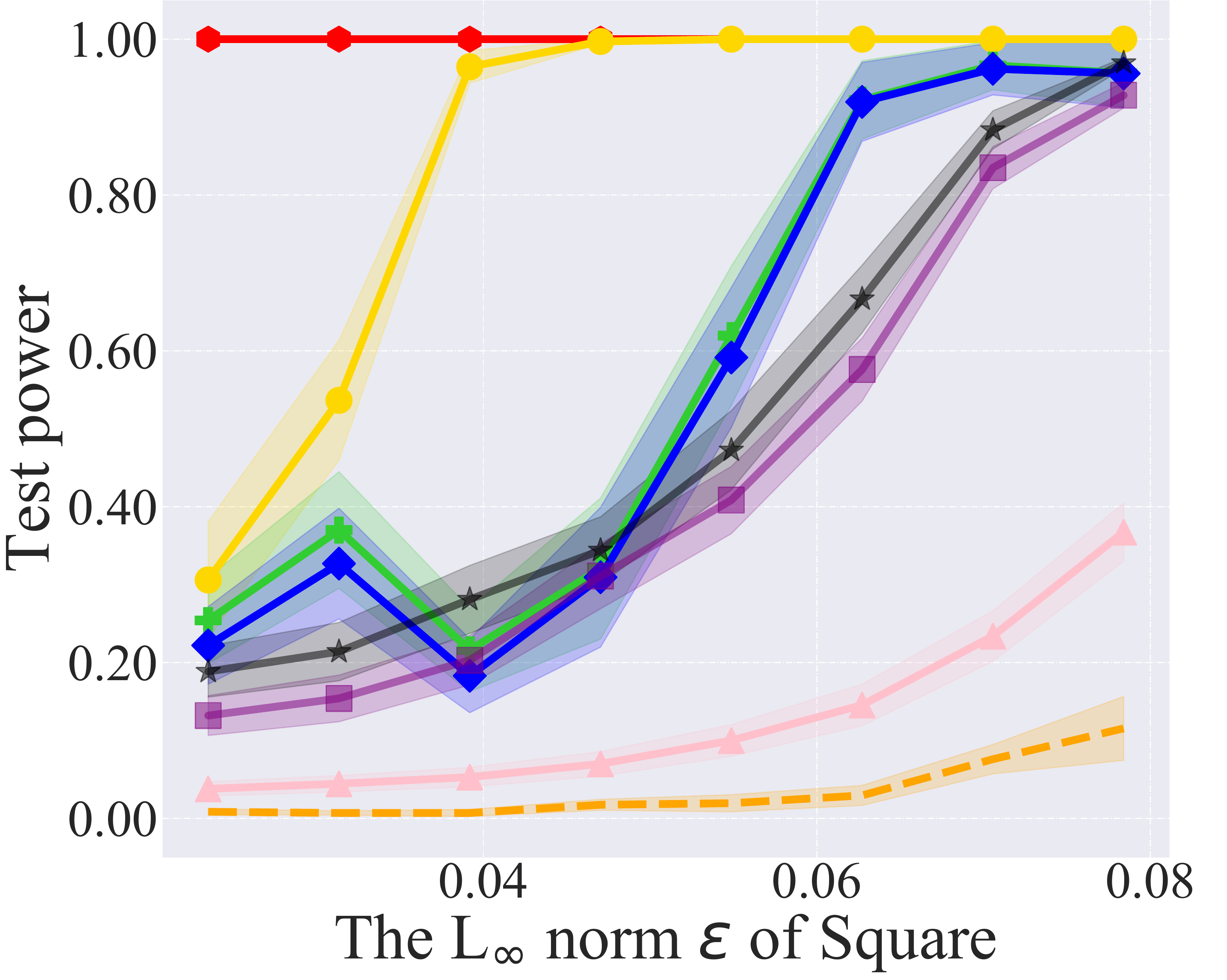}}
        \subfigure[Adaptive attacks]
        {\includegraphics[width=0.245\textwidth]{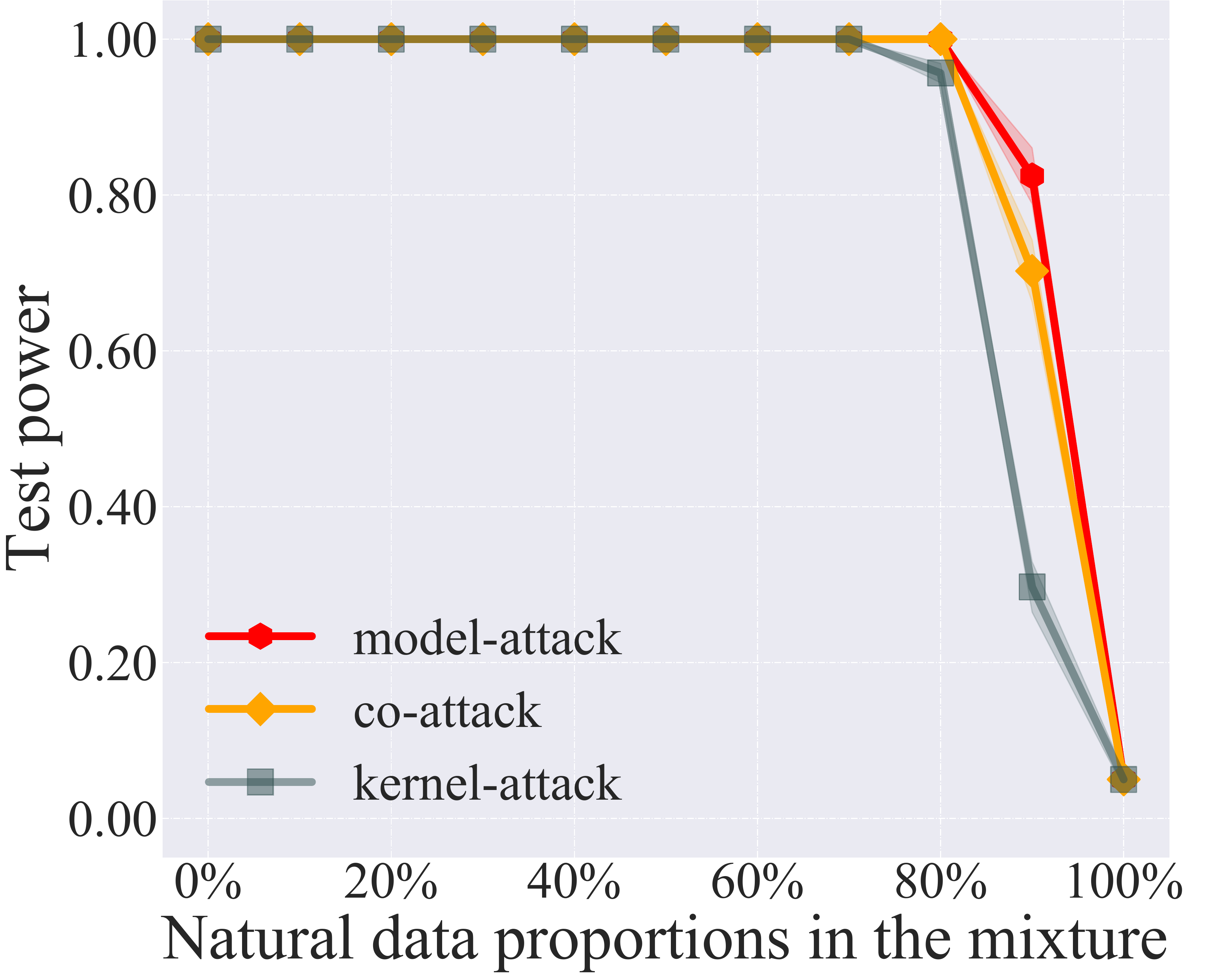}}
        \subfigure[Ablation study]
        {\includegraphics[width=0.245\textwidth]{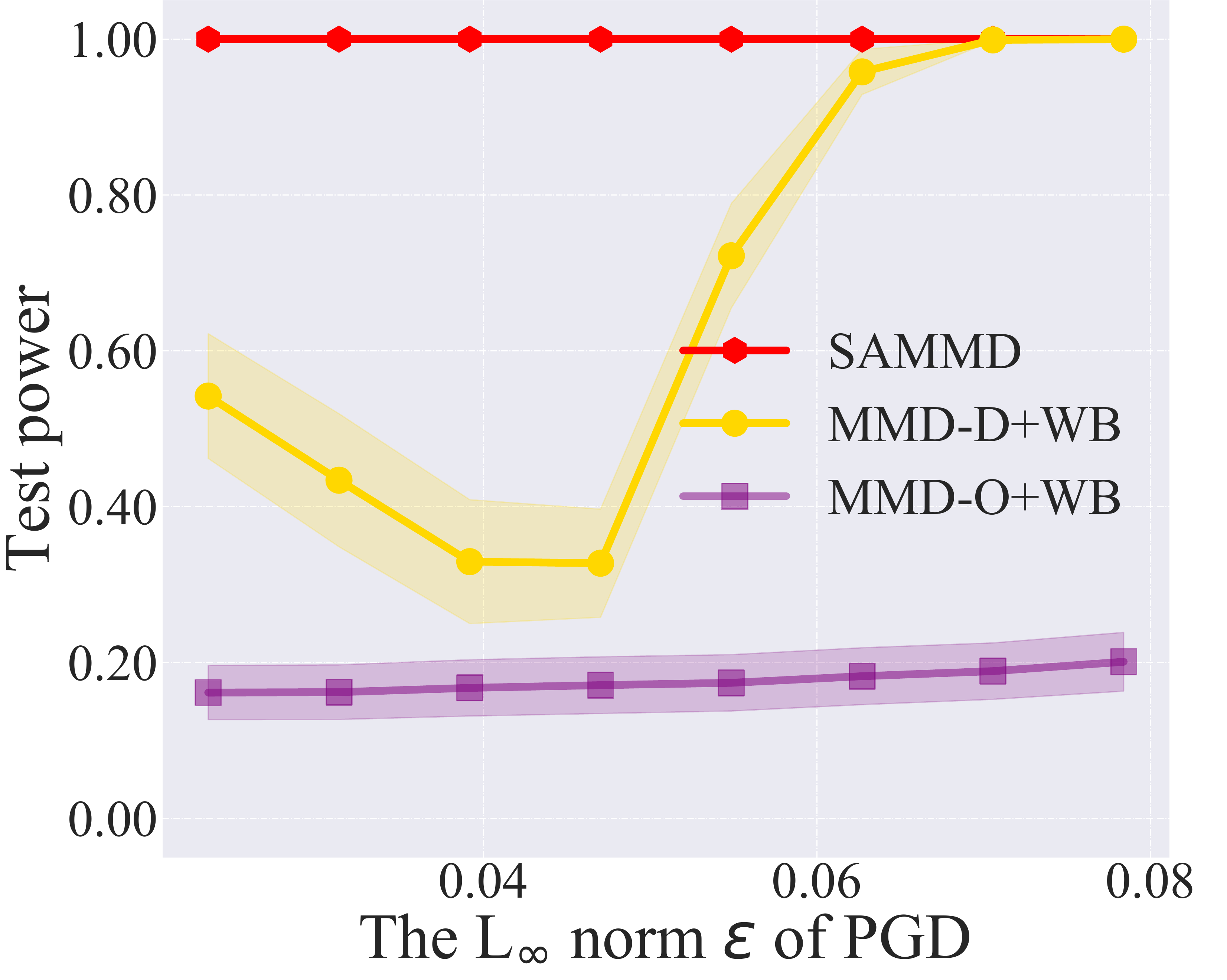}}
        \caption{\footnotesize Results of adversarial data detection. Subfigures (a)-(f) report the test power (i.e., the detection rate) when $S_Y$ are non-IID adversarial data. Subfigure (g) reports the test power when $S_Y$ are adaptive adversarial data. Subfigure (h) reports an ablation study. Subfigures (a)-(f) share the same legend presented in subfigure (b). Details of subfigures are explained in Section~\ref{experiments}.
    %   Although natural data and adversarial data looks very similar (see the subfigure (a)), they are actually from different distributions from a view of hypothesis testing (see subfigure (b)).
      }
        \label{fig:nonIID}
    \end{center}
\end{figure*}

\paragraph{Asymtotics and test power of SAMMD.}
% \vspace{-1em}
In this part, we analyze the asymtotics of SAMMD when $S_Y$ are adversarial data. Based on the asymtotics of SAMMD, we can estimate its test power that can be used to optimize the SAMMD (i.e., optimizing parameters in $k_\omega(\vx,\vy)$).
\begin{theorem}[Asymptotics under $H_1$] \label{prop:asymptotics}
Under the alternative $H_1 :$ $S_Y$ are from a stochastic process $\{Y_i\}_{i=1}^{+\infty}$, under mild assumptions, 
we have
% \cite[Section 5.5.1]{serfling}
\begin{gather*}
\sqrt{n}(\widehat{\textnormal{SAMMD}}_u^2 - \textnormal{SAMMD}^2) \overset{\textbf{d}}{\to} \N(0, C_1^2\sigma^2_{H_1}),
\end{gather*}
where $Y_i=\mathcal{G}_{\ell,\hat{f}}(\epsball[X_i])\sim\Q$, $X_i\sim \P$, $\sigma_{H_1}^2 = 4(\E_{Z}[(\E_{Z'}h(Z,Z'))^2]-[(\E_{Z,Z'}h(Z,Z'))^2])$, $h(Z,Z')$ $=k_{\omega}(X,X')+k_{\omega}(Y,Y')-k_{\omega}(X,Y')-k_{\omega}(X',Y)$, $Z:=(X,Y)$ and $C_1<+\infty$ is a constant for a given ${\omega}$.
% $\sigma_{H_1}^2 = 4 \left( \E[H_{12} H_{13}] - \E[H_{12}]^2 \right)$
% .
% where $\sigma^2_{H_1}=4(\E_{z}[(\E_{z'}h(z,z'))^2]-[(\E_{z,z'}h(z,z'))^2])$
% and $h(z,z')=k(x,x')+k(y,y')-k(x,y')-k(x',y)$.
\end{theorem}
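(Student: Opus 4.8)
The plan is to treat $\widehat{\textnormal{SAMMD}}_u^2$ as a non-degenerate U-statistic under $H_1$ and establish asymptotic normality via a Hoeffding-type projection, with the twist that the combined sample $\{Z_i\}=\{(X_i,Y_i)\}$ is \emph{not} IID but a weakly dependent stationary process (the $Y_i$ are generated by attacking a classifier $\hat{f}$ whose dependence structure is unknown). First I would record that the kernel $k_\omega$ in Eq.~\eqref{eq:deepkernel_SAMMD} is bounded, since it is a convex-combination-weighted product of Gaussian kernels; hence $h(Z,Z')$ is bounded and all moments needed below exist. This boundedness is what lets the ``mild assumptions'' reduce to a weak-dependence condition on $\{Y_i\}$ (e.g.\ stationarity together with a mixing rate giving summable autocovariances) plus the non-degeneracy requirement $\sigma_{H_1}^2>0$.

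Second, apply Hoeffding's decomposition. Define the first-order projection $g(z)=\E_{Z'}[h(z,Z')]-\textnormal{SAMMD}^2$, which is centered since $\E[g(Z_1)]=\E[h]-\textnormal{SAMMD}^2=0$, and write
\[\widehat{\textnormal{SAMMD}}_u^2-\textnormal{SAMMD}^2=\frac{2}{n}\sum_{i=1}^n g(Z_i)+R_n,\]
where $R_n$ is the completely degenerate second-order remainder built from $h_2(z,z')=h(z,z')-g(z)-g(z')-\textnormal{SAMMD}^2$. A direct computation gives $4\,\Var(g(Z_1))=4\big(\E_Z[(\E_{Z'}h(Z,Z'))^2]-(\E_{Z,Z'}h)^2\big)=\sigma_{H_1}^2$, exactly the quantity in the statement, so the linear term already carries the stated variance scale.

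Third, treat the two pieces separately. For the linear term I would invoke a central limit theorem for sums of functionals of a weakly dependent stationary sequence, yielding $\tfrac{2}{\sqrt n}\sum_{i=1}^n g(Z_i)\overset{d}{\to}\N(0,\sigma_{\mathrm{lr}}^2)$ with long-run variance $\sigma_{\mathrm{lr}}^2=4\big[\Var(g(Z_1))+2\sum_{k\ge1}\Cov(g(Z_1),g(Z_{1+k}))\big]$, where the autocovariances are absolutely summable under the dependence assumption. Setting $C_1^2=1+2\sum_{k\ge1}\Cov(g(Z_1),g(Z_{1+k}))/\Var(g(Z_1))<+\infty$ identifies the constant $C_1$ of the statement (and $C_1=1$ recovers the IID case), so that $\sigma_{\mathrm{lr}}^2=C_1^2\sigma_{H_1}^2$. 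For the remainder I would show $\sqrt n\,R_n=o_P(1)$: the degenerate U-statistic in $h_2$ is $O_P(n^{-1})$ in the IID case, and under summable dependence the same order survives after bounding the additional cross-covariance terms using boundedness of $h_2$. Combining by Slutsky gives $\sqrt n(\widehat{\textnormal{SAMMD}}_u^2-\textnormal{SAMMD}^2)\overset{d}{\to}\N(0,C_1^2\sigma_{H_1}^2)$.

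The main obstacle is the dependent-data step: proving the CLT for $\sum_i g(Z_i)$ and the $o_P(n^{-1/2})$ control of the degenerate part \emph{without} IID-ness. This is precisely where the form of the ``mild assumptions'' is load-bearing---they must guarantee stationarity and a mixing/near-epoch-dependence rate fast enough to make $\sum_k|\Cov(g(Z_1),g(Z_{1+k}))|<\infty$ and to render the second-order term negligible. It is also the conceptual reason the dependence-agnostic wild bootstrap of Eq.~\eqref{eq:wb_generator} is later required to calibrate the null distribution, rather than simply plugging the marginal variance $\sigma_{H_1}^2$ into a Gaussian quantile.
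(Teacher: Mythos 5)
Your proposal follows, in substance, the same route as the paper's own proof: the paper also identifies the limiting variance as the long-run variance of the first-order Hoeffding projection, $\sigma^2=\E[h_1(Z_1)]^2+2\sum_{j\ge 1}\mathrm{cov}\big(h_1(Z_1),h_1(Z_j)\big)$, and defines $C_1$ exactly as you do, via $2\sum_{j\ge 1}\mathrm{cov}\big(h_1(Z_1),h_1(Z_j)\big)=(C_1^2-1)\,\E[h_1(Z_1)]^2$. The differences are in execution. The paper's ``mild assumptions'' are concrete: $\{Y_i\}$ is an absolutely regular ($\beta$-mixing) process with $\sum_{k}\beta_k^{\delta/(2+\delta)}<+\infty$. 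From this it first proves that the pair process $\{Z_i\}=\{(X_i,Y_i)\}$ inherits the same mixing coefficients (using independence of the $X_i$ from the $Y$-process --- a step you skip by postulating weak dependence of $Z_i$ directly), and then invokes Theorem 1 of Denker (1983), a CLT for U-statistics of absolutely regular processes, which delivers asymptotic normality with the long-run variance in one stroke, including the control of the degenerate second-order term that you defer. Finally, where you \emph{assume} summable autocovariances, the paper \emph{derives} them: it reduces $\mathrm{cov}\big(h_1(Z_1),h_1(Z_j)\big)$ to a covariance involving only $(Y_1,Y_j)$ and bounds it by $4\beta_j^{\delta/(2+\delta)}$ using Lemma 1 of Yoshihara (1976), whence the series converges and $C_1<+\infty$.

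This comparison exposes the one genuine soft spot in your plan: the assertion that the degenerate remainder stays $o_P(n^{-1/2})$ ``after bounding the additional cross-covariance terms using boundedness of $h_2$'' is not a proof. Boundedness of the kernel together with summable autocovariances of the projection $g$ does not, by itself, control a degenerate U-statistic of a dependent sequence; what is needed is a covariance inequality for functions of the process evaluated at several time points, which is precisely what Yoshihara's lemma (and hence Denker's theorem, which builds on it) supplies under the stated $\beta$-mixing rate. So your outline is correct and matches the paper's skeleton, but to close it rigorously you would have to reprove or cite exactly those two results --- at which point you recover the paper's proof.
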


The detailed version of Theorem~\ref{prop:asymptotics} can be found in Appendix~\ref{Asec:asym_MMD}.
% \subsection{Test power of SAMMD}
% The main criterion of efficacy of a two-sample test is its \emph{power}:
% the probability that, for a particular $\P \ne \Q$ and $n$, we correctly reject $H_0$.
Using Theorem~\ref{prop:asymptotics}, we have
\begin{align}
\label{eq:test_power}
    {\Pr}_{H_1,r}^{\textnormal{SAMMD}}\to \Phi\Big( \frac{\sqrt{n} \textnormal{SAMMD}^2}{C_1\sigma_{H_1}} - \frac{r}{\sqrt{n} \, C_1\sigma_{H_1}} \Big),
\end{align}
where ${\Pr}_{H_1,r}^{\textnormal{SAMMD}}={\Pr}_{H_1}\Big( n \widehat{\textnormal{SAMMD}}_u^2> r \Big)$ is the test power of SAMMD, $\Phi$ is the standard normal CDF and $r$ is the rejection threshold related to $\P$ and $\Q$. Via Theorem~\ref{prop:asymptotics},
we know that $r$, $\textnormal{SAMMD}(\P,\Q)$, and $\sigma_{H_1}$ are constants.
Thus, for reasonably large $n$,
the test power of SAMMD is dominated by the first term (inside $\Phi$),
and we can optimize $k_{\omega}$ by maximizing 
% the kernel yielding the most powerful test will approximately maximize the following objective \cite{sutherland:mmd-opt}.
\begin{align*}
    J(\P, \Q; k_{\omega})= \textnormal{SAMMD}^2(\P, \Q; k_{\omega}) / \sigma_{H_1}(\P, \Q; k_{\omega}).
\end{align*}
Note that, we omit $C_1$ in $J(\P, \Q; k_{\omega})$, since $C_1$ can be upper bounded by a constant $C_0$ (see Appendix~\ref{Asec:asym_MMD}).

\begin{algorithm}[!t]
\footnotesize
\caption{The SAMMD Test}
\label{alg:learn_deep_kernel}
\begin{algorithmic}
\STATE \textbf{Input:} $S_X$, $S_Y$, $\hat{f}$,  various hyperparameters used below;

\vspace{1mm}
\STATE $\omega \gets \omega_0$; $\lambda \gets 10^{-8}$;  %$\epsilon \gets 10^{-10}$;

\STATE Split the data as $S_X = S^{tr}_X\cup S^{te}_X$ and $S_Y = S^{tr}_Y\cup S^{te}_Y$;

\vspace{1mm}
\STATE \textit{\# Phase 1: train the kernel parameters $\omega$ and $\beta$; on $S^{tr}_X$ and $S^{tr}_X$\hfill}

\FOR{$T = 1,2,\dots,T_\mathit{max}$}

\STATE $S_X^{\prime} \gets$ minibatch from $S^{tr}_X$;

\STATE $S_Y^{\prime} \gets$ minibatch from $S^{tr}_Y$;

\STATE $k_\omega \gets$ kernel function with parameters $\omega$ using Eq. (\ref{eq:deepkernel_SAMMD});

\STATE $M(\omega) \gets \widehat{\textnormal{SAMMD}}_u^2(S_X^{\prime}, S_Y^{\prime}; k_\omega)$ using Eq.~(\ref{eq:sammd_estimator});

\STATE $V_\lambda(\omega) \gets \hat{\sigma}^2_{H_1,\lambda}(S_X^{\prime}, S_Y^{\prime}; k_\omega)$ using Eq. (\ref{eq:estimate_sigma_H1});

\STATE $\hat J_\lambda(\omega) \gets {M(\omega)}/\sqrt{V_\lambda(\omega)}$ using Eq.~(\ref{eq:tpp-hat});

\STATE $\omega \gets \omega + \eta\nabla_{\textnormal{Adam}} \hat J_\lambda(\omega)$;
       \hfill \textit{\# maximize $\hat J_\lambda(\omega)$}

\ENDFOR

\vspace{1mm}
\STATE \textit{\# Phase 2: testing with $k_\omega$ on $S^{te}_X$ and $S^{te}_Y$}

\STATE $\mathit{est} \gets \widehat{\textnormal{SAMMD}}_b^2(S^{te}_X, S^{te}_Y; k_\omega)$

\FOR{$i = 1, 2, \dots, n_\mathit{perm}$}
\STATE Generate $\{W^X_i\}_{i=1}^n$ and $\{W^Y_i\}_{i=1}^m$ using Eq.~(\ref{eq:wb_generator});

\STATE $\{\tilde{W}^X_i\}_{i=1}^n \gets \{{W}^X_i\}_{i=1}^n - \frac{1}{n}\sum_{i=1}^n{W}^X_i$;

\STATE $\{\tilde{W}^Y_i\}_{i=1}^m \gets \{{W}^Y_i\}_{i=1}^m - \frac{1}{m}\sum_{i=1}^m{W}^Y_i$;

\STATE $\mathit{perm}_i \gets \frac{1}{n (n-1)} \sum_{i,j} H_{ij}\tilde{W}^X_i\tilde{W}^Y_j$; \hfill \textit{\# resample}
\ENDFOR

\STATE \textbf{Output:} $k_\omega$, $\mathit{est}$, $p$-value: $\frac{1}{n_\mathit{perm}} \sum_{i=1}^{n_\mathit{perm}} \bm{1}(\mathit{perm}_i \ge \mathit{est})$
\end{algorithmic}
\end{algorithm}
% \vspace{-2.5em}

\paragraph{Optimization of SAMMD.}
Although the higher value of criterion $J(\P, \Q; k_{\omega})$ means higher test power of SAMMD, we cannot directly maximize $J(\P, \Q; k_{\omega})$, since $\textnormal{SAMMD}^2(\P, \Q; k_{\omega})$ and $ \sigma_{H_1}(\P, \Q; k_{\omega})$ depend on the particular $\P$ and $\Q$ that are unknown. However, we can estimate it with
\begin{equation} \label{eq:tpp-hat}
  \hat J_\lambda(S_X, S_Y; k_{\omega}) := \frac{\widehat{\textnormal{SAMMD}}_u^2(S_X, S_Y; k_{\omega})}{\hat\sigma_{H_1,\lambda}(S_X, S_Y; k_{\omega})}
,\end{equation}
where $\hat\sigma_{H_1,\lambda}^2$ is a regularized estimator of $\sigma_{H_1}^2$ \cite{liu2020learning}: 
\begin{align} 
\label{eq:estimate_sigma_H1}
    \frac{4}{n^3} \sum_{i=1}^n \left( \sum_{j=1}^n H_{ij} \right)^2
    - \frac{4}{n^4}\left( \sum_{i=1}^n \sum_{j=1}^n H_{ij} \right)^2
    + \lambda
.\end{align}
Then we can optimize SAMMD by maximizing $\hat J_\lambda(S_X, S_Y; k_{\omega})$ on the training set (see Algorithm~\ref{alg:learn_deep_kernel}). Note that, although
\citet{sutherland:mmd-opt} and \citet{unbiased-var-ests} have given an unbiased estimator for $\sigma_{H_1}^2$,
it is much more complicated to implement.

\paragraph{The SAMMD test.}
Since adversarial data are probably not IID, we cannot simply use a permutation-based bootstrap method to simulate the null distribution of the SAMMD test \cite{Kacper14wildbtp}. To address this issue, wild bootstrap \cite{shao2010dependentwildbtp} is used to help simulate the null distribution of SAMMD, then we can test if $S_Y$ are from $\P$. To the end, the Algorithm~\ref{alg:learn_deep_kernel} shows the whole procedure of the SAMMD test. In Appendix~\ref{Asec:asym_MMD}, it has been shown that, under mild assumptions, the proposed SAMMD test is a provably consistent test to detect adversarial attacks.

\section{Experiments}
\label{experiments}

% \vspace{0em}\textbf{Datasets, architectures \& training methods.}
%\subsection{Datasets, Architectures \& Training Methods}
% \label{sec:exp_1}
% For both datasets, we pre-train models on ResNet-18 and ResNet-34 for the acquisition of adversarial examples and semantic features. As sufficient default attack strategies for these protected target models, we use $L_\infty$ norm constrained FGSM, BIM, PGD, APGD and CW that 5 white-box attacks and a score-based black-box attack that Square. 

We verify detection methods on the ResNet-18 and ResNet-34 trained on the \textit{CIFAR-10} and the \textit{SVHN}. We also validate performance of SAMMD on the large network Wide ResNet (WRN-32-10) \cite{zagoruyko2016wide} and the large dataset \textit{Tiny-Imagenet}. Configuration of all experiments is in Appendix~\ref{Asec:exp_set}. Detailed experimental results are presented in Appendix~\ref{Asec:add_exp}. The code of our SAMMD test is available at \httpsurl{github.com/Sjtubrian/SAMMD}.
% The code of our SAMMD test is available at \url{https://github.com/Sjtubrian/SAMMD}.
%Detailed results on $6$ different attacks about different datasets and different networks are presented in Appendix~\ref{Asec:add_exp}.
\paragraph{Baselines.}
%\subsection{Baselines}
We compare SAMMD test with $6$ existing two-sample tests: 1) MMD-G test used by \cite{grosse2017statistical}; 2) MMD-O test \cite{sutherland:mmd-opt}; 3) Mean embedding (ME) test \cite{Jitkrittum2016}; 4) Smooth characteristic functions (SCF) test \cite{Chwialkowski2015}; 5) Classifier two-sample test (C2ST) \cite{liu2020learning,Lopez:C2ST}; 6) MMD-D test \cite{liu2020learning}. 

Besides, we also try to construct two new MMD tests based on features commonly used by adversarial data classification methods: 1) MMD-LID: the \underline{MMD} with a Gaussian kernel whose inputs are \emph{\underline{l}ocal \underline{i}ntrinsic \underline{d}imensionality} (LID) features \cite{ma2018characterizing} of two samples. Then we optimize the Gaussian kernel by maximizing its test power; and 2) MMD-M: the \underline{MMD} with a Gaussian kernel whose inputs are \underline{mahalanobis distance} based features \cite{lee2018simple} of two samples. Then, we optimize the bandwidth of the Gaussian kernel by maximizing the test power.
% \vspace{-1em}
% \begin{itemize}
%     \setlength\itemsep{-0.3em}
%     \item MMD-LID: \underline{MMD} with a Gaussian kernel whose inputs are \emph{\underline{l}ocal \underline{i}ntrinsic \underline{d}imensionality} (LID) features \cite{ma2018characterizing} of two samples. Then we optimize the Gaussian kernel by maximizing its test power.
%     \item MMD-M: \underline{MMD} with a Gaussian kernel whose inputs are \underline{mahalanobis distance} features \cite{lee2018simple} of two samples. Then we optimize the bandwidth of the Gaussian kernel by maximizing the test power.
% \end{itemize}
% \vspace{-1em}

\paragraph{Test power on different attacks.}
%\subsection{Test Power on Different Attacks}
We first report the type I error of our SAMMD test and $9$ baselines when $S_Y$ are natural data in Figure~\ref{fig:results}{a}. 
It is clear that MMD-LID test and MMD-M test have much higher type I error than the given threshold $\alpha=0.05$ (the red line in Figure~\ref{fig:results}{a}). That is, both baselines are invalid two-sample tests.
The main reason is that LID features and mahalanobis-distance features are sensitive to any perturbation. The sensitivity leads to that MMD-LID test and MMD-M test will recognize natural data as adversarial data. Other methods except for SCF maintain reasonable type I errors. Since MMD-LID test and MMD-M test are invalid two-sample tests, we do not validate the test power of them in the remaining experiments.

For $6$ different attacks, FGSM, BIM, PGD, AA, CW and Square (Non-IID(b)), we report the test power of all tests when $S_Y$ are adversarial data ($L_\infty$ norm $\epsilon = 0.0314$; set size $= 500$) in Figure~\ref{fig:results}{b}.
Results show that SAMMD test performs the best and achieves the highest test power.

\paragraph{Test power on different $\epsilon$.}
%\subsection{Test Power on Different $\epsilon$}
In addition to different adversarial attacks, different perturbation bound $\epsilon$ can also affect the adversarial data generation process. If the adversarial attack is within a small perturbation bound, the generated adversarial data is not sufficient to fool the well-trained natural-data classifier \cite{tramer2020fundamental}. However, if the adversarial attack is within a big perturbation bound, natural information contained in images will be completely lost \cite{tramer2020fundamental,zhang2020attacks}. 

Following previous studies \cite{carlini2017towards,Madry18PGD,wang2019convergence,tramer2020fundamental,zhang2020attacks,Chen_2020_CVPR,wu2020adversarial,zhang2020dual}, we set the $L_\infty$-norm bounded perturbation $\epsilon \in [0.0235,0.0784]$. The lower bound of $0.0235$ is calculated by $6$/$255$, i.e., the maximum variation of each pixel value is $6$ intensities, and the upper bound of $0.0784$ is calculated by $20$/$255$. This range covers all possible $\epsilon$ used in the literature \cite{Madry18PGD,zhang2020attacks}. 

We report the average test power (with its standard error) on different $\epsilon$ of FGSM, BIM, CW, AA and PGD (set size $=500$) in Figure~\ref{fig:results}(c)-(g). For the non-IID adversarial data mentioned in Section~\ref{sec:dependence_within_data}, we also report the average test power on different $\epsilon$ of the Non-IID (a) and the Non-IID (b) in Figure~\ref{fig:nonIID}(a)-(f). Given the training set of \textit{CIFAR-10}, we use FGSM, BIM, CW, AA and PGD to generate the Non-IID (a). Given the testing set of \textit{CIFAR-10}, we use Square to generate the adversarial data four times and mix them into the Non-IID (b). Results show that our SAMMD test also achieves the highest test power all the $\epsilon$.

\paragraph{Test power on different set sizes.}
%\subsection{Test Power on Different Set Sizes}
The effective of previous kernel non-parametric two-sample tests like C2ST and MMD-D test \cite{Lopez:C2ST,liu2020learning} depends on a large size of data. Namely, they can only measure the discrepancy well when there are a large batch of data. Hence, we evaluate the performance of our SAMMD test and baselines with different set sizes. Experiments results are reported in Figure~\ref{fig:results}{h}, which shows that our SAMMD test is suitable to different data sizes.

\paragraph{Test power on the mixture of adversarial data and natural data.}
%\subsection{Test Power on the Mixture of Adversarial Data and Natural Data}
For practical concerns, it is often that only part of data is adversarial. We analyze test power of the SAMMD test and baselines in this case, with natural data mixture proportion ranging from $0\%$ to $100\%$. The experimental results of PGD ($L_\infty$ norm $\epsilon = 0.0314$; set size $= 500$) are presented in Figure~\ref{fig:results}{i}. Results show that the performance of our SAMMD test is much better than all baselines.  

% \subsection{Test Power on Different Semantic Features}
% In the above setting, adversarial data and semantic features are both acquired by the same well-trained classifiers on natural data. Hence, we also analyze the performance when adversarial data and semantic features are acquired by different classifiers. We respectively train two classifiers (ResNet-18 and ResNet-34) on natural data. One is used to acquire semantic features and the other is used to acquire adversarial examples. The attack method is PGD ($L_\infty$ norm $\epsilon = 0.0314$; set size $= 500$). Experiments results are reported in Figure~\ref{fig:results}{g}, which shows that our SAMMD test can also work well. From the security standpoint, it is the existence of adversarial transferability that can help our SAMMD test defend against potential adaptive attacks.

\paragraph{Semantic featurizers meet unknown adversarial data.}
%\subsection{Semantic Featurizers Meet Unknown Adversarial Data}
In the above setting, the semantic featurizers $\phi_p(\cdot)$ are also the classifiers subjected to adversarial attacks. In this part, we also consider that a dataset to be tested contains the adversarial data acquired by unknown classifiers. Hence, we analyze the performance when adversarial data and semantic features are acquired by different classifiers. We train two classifiers (ResNet-18 and ResNet-34) on natural data. One is used to acquire adversarial data (A$18$/A$34$), and the other is used to acquire semantic features (S$18$/S$34$).

Experiments results of our SAMMD test with different set sizes (from $10$ to $100$) are presented in Figure~\ref{fig:results}{j}. Experiments results of our SAMMD test with mixture proportion (from $0\%$ to $100\%$) are presented in Figure~\ref{fig:results}{k}. In Figures~\ref{fig:results}{j} and \ref{fig:results}{k}, the attack method is PGD ($L_\infty$ norm $\epsilon = 0.0314$; set size $= 500$). Results clearly show that our SAMMD test can also work well in this case. It is the existence of adversarial transferability that can help our SAMMD test defend against such attacks. %Furthermore, in the case where the attacker is aware of our SAMMD kernel, experiments in Appendix~\ref{Asec:add_exp} also show that attacking a statistic test will reduce the ability of adversarial data to mislead a well-trained classifier. 

\paragraph{Study of Semantic features.}
\label{Semantic_results}
In order to verify that semantic features are better to help measure the distribution discrepancy between adversarial data and natural data than raw features, we also test the semantic features (the same with the SAMMD test) and the Gaussian kernel of the fixed bandwidth (SAMMD-G in Figure~\ref{fig:results}{l}). Experiments in Figure~\ref{fig:results}{l} confirms the importance of semantic features.

\paragraph{The SAMMD test meets adaptive attacks.}
In the case where the attacker is aware of our SAMMD kernel, we evaluate our SAMMD test from the security standpoint. Compared to other defenses, the advantage of our method in security is adaptive defense. In our detection mechanism, the semantic-aware deep kernel is trained on part of unknown data (to be tested), that is to say, for each input of data in the test, parameters of our semantic-aware deep kernel can be adaptively trained to be powerful. Therefore, the target of the adaptive attack can only be the SAMMD-G mentioned above which has fixed parameters. 

First, we use the PGD white-box attack to minimize the ${M(\omega)}/\sqrt{V_\lambda(\omega)}$ in Eq.~(\ref{eq:tpp-hat}) and obtain examples (kernel-attack in Figure~\ref{fig:nonIID}{g}), and $89.08\%$ of them can fool the pre-trained ResNet-18. Then, we obtain examples using the PGD white-box attack to minimize the ${M(\omega)}/\sqrt{V_\lambda(\omega)}$ in Eq.~(\ref{eq:tpp-hat}) and maximize the cross entropy loss in Eq.~(\ref{Eq:madry_inner_maximization}) (co-attack in Figure~\ref{fig:nonIID}{g}), and $61.34\%$ of them can fool the pre-trained ResNet-18. The examples acquired by model attack is the adversarial examples, $100.00\%$ of them can fool the pre-trained ResNet-18. Experiments in Figure~\ref{fig:nonIID}{g} show that these examples fail to fool our SAMMD test. And attacking such a statistic test will also reduce the ability of adversarial data to mislead a well-trained classifier. 
\paragraph{Ablation study.}
%\subsection{Ablation Study}
To illustrate the effectiveness of semantic features, we compare 
our SAMMD test with MMD-O test and MMD-D test after wild bootstrap process (MMD-O+WB, MMD-D+WB).  Experiments results are reported in Figure~\ref{fig:nonIID}{h}, which verifies that semantic features are better to help measure the distribution discrepancy between adversarial data and natural data than raw features (MMD-O+WB) and the learned features (MMD-D+WB) \cite{liu2020learning}.

\section{Conclusion}

Two-sample tests could in principle detect any distributional discrepancy between two datasets. However, previous studies have shown that the MMD test, as the most powerful two-sample test, is unaware of adversarial attacks. 
In this paper, we find that previous use of MMD on adversarial data detection missed three key factors, which \emph{significantly} limits its power. To this end, we propose a simple and effective test that is cooperated with a new semantic-aware kernel---\emph{semantic-aware MMD} (SAMMD) test, to take care of the three factors simultaneously. 
% The proposed test incorporates a novel test statistic SAMMD and the wild bootstrap \cite{shao2010dependentwildbtp}, which provides a unified framework to use two-sample tests to detect adversarial attacks. 
Experiments show that our SAMMD test can successfully detect adversarial attacks. 

Thus, we argue that \emph{MMD is aware of adversarial attacks}, which lights up a novel road for adversarial attack detection based on two-sample tests. 
We also recommend practitioners to use our SAMMD test when they wish to check whether the dataset they acquired contains adversarial data.
% In general, we correct a false consensus in the field of adversarial-attack detection: statistical tests cannot detect adversarial attacks. Our contribution, in the long run, will provide insights on how to design powerful defense algorithms for adversarial attacks and open a door to a new world for adversarial-attack detection.

% \section*{Software and Data}

% % Acknowledgements should only appear in the accepted version.
\section*{Acknowledgements}
RZG and BH were supported by HKBU Tier-1 Start-up Grant and HKBU CSD Start-up Grant. BH was also supported by the RGC Early Career Scheme No. 22200720, NSFC Young Scientists Fund No. 62006202 and HKBU CSD Departmental Incentive Grant. TLL was supported by Australian Research Council Project DE-190101473. JZ, GN and MS were supported by JST AIP Acceleration Research Grant Number JPMJCR20U3, Japan, and MS was also supported by Institute for AI and Beyond, UTokyo. FL would like to thank Dr. Yiliao Song and Dr. Wenkai Xu for productive discussions.

% In the unusual situation where you want a paper to appear in the
% references without citing it in the main text, use \nocite
% \nocite{langley00}

\bibliography{example_paper}
\bibliographystyle{icml2021}

%%%%%%%%%%%%%%%%%%%%%%%%%%%%%%%%%%%%%%%%%%%%%%%%%%%%%%%%%%%%%%%%%%%%%%%%%%%%%%%
%%%%%%%%%%%%%%%%%%%%%%%%%%%%%%%%%%%%%%%%%%%%%%%%%%%%%%%%%%%%%%%%%%%%%%%%%%%%%%%
% DELETE THIS PART. DO NOT PLACE CONTENT AFTER THE REFERENCES!
%%%%%%%%%%%%%%%%%%%%%%%%%%%%%%%%%%%%%%%%%%%%%%%%%%%%%%%%%%%%%%%%%%%%%%%%%%%%%%%
%%%%%%%%%%%%%%%%%%%%%%%%%%%%%%%%%%%%%%%%%%%%%%%%%%%%%%%%%%%%%%%%%%%%%%%%%%%%%%%
\appendix
\onecolumn

\section{Related Works}
\label{sec:relationwork}

In this section, we briefly review related works used in our paper.

\subsection{Adversarial attacks}
A growing body of research shows that neural networks are vulnerable to adversarial attacks, i.e., test inputs that are modified slightly yet strategically to cause misclassification \citep{carlini2017adversarial,kurakin2016adversarial,wang2019convergence,zhang2020dual}, which seriously threaten the security-critical computer vision systems, such as autonomous driving and medical diagnostics \citep{chen2015deepdriving,ma2021understanding,nguyen2015deep,szegedy2013intriguing}. Thus, it is crucial to defend against adversarial attacks \citep{Chen_2020_CVPR,wang2020once,zhu2021understanding}, for example, by injecting adversarial examples into training data, adversarial training methods have been proposed in recent years \citep{Madry18PGD,bai2019hilbert,wang2019improving,zhang2020attacks}. However, these defenses can generally be evaded by \emph{optimization-based} (Opt) attacks, either wholly or partially \citep{carlini2017adversarial,he2018decision,li2014feature}

\subsection{Adversarial data detection}
For the adversarial defense, in addition to improving models' robustness by more effective adversarial training \cite{Chen_2020_CVPR,wang2019convergence,wu2020adversarial,zhang2020geometry},
recent studies have instead focused on detecting adversarial data. Based on features extracted from DNNs, most works train classifiers to discriminate adversarial 
data from both natural and adversarial data. Recent studies include, a cascade detector based on the PCA projection of activations \cite{li2017adversarial}, 
detection subnetworks based on activations \cite{metzen2017detecting}, a logistic regression detector based on Kernel Density KD, and Bayesian Uncertainty (BU) 
features \cite{grosse2017statistical}, an augmented neural network detector based on statistical measures, a learning framework that covers unexplored space 
invulnerable models \cite{rouhani2017curtail}, a \emph{local intrinsic dimensionality} (LID) based characterization of adversarial data \cite{ma2018characterizing}, 
a generative classifier based on Mahalanobis distance-based score \cite{lee2018simple}.

\subsection{Statistical adversarial data detection}
In the safety-critical system, it is important to find reliable data (i.e., natural data) and eliminate adversarial data that is statistically different from natural data distribution. Thus, statistical detection methods are also proposed to detect if the upcoming data contains adversarial data (or saying that if upcoming data is from natural data distribution in the view of statistics). 
A number of these methods have been introduced, including the use of the Maximum Mean Discrepancy (MMD) \cite{Gretton2012,borgwardt2006integrating} with a simple polynomial-time approximation to test whether the upcoming data are all adversarial data, or all natural images \cite{grosse2017statistical}, and a kernel density estimation defense used a Gaussian Mixture Model to model outputs from the final hidden layer of a neural network, to test whether the upcoming data belongs to a different distribution than that of natural data \cite{feinman2017detecting}. However, recent studies have shown these statistical detection failed to work under attack evaluations \cite{carlini2017adversarial}. 

\subsection{Two-sample Tests}
Two-sample tests aim to check whether two datasets come from the same distribution. Traditional tests such as $t$-test and Kolmogorov-Smirnov test are the mainstream of statistical applications, but require strong assumptions on the distributions being studied. Researchers in statistics and machine learning have been focusing on relaxing these assumptions,
with methods specific to various real-world domains
\cite{Sugiyama11NN,Yamada11NeurIPS,Kanamori12TIT,Gretton2012,Jitkrittum2016,sutherland:mmd-opt,Chen2017,Ghoshdastidar2017,Lopez:C2ST,LiW18TIT,Matthias:deep-test,liu2020learning}. In order to involve distributions with complex structure such as images, deep kernel approaches has been proposed \cite{sutherland:mmd-opt,Kevin_ICML2019,Jean2018}, the foremost study has shown that kernels parameterized by deep neural nets, can be trained to maximize test power in high-dimensional distribution such as images \cite{liu2020learning}.
They propose statistical tests of the null hypothesis that the two distributions are equal against the alternative hypothesis that the two distributions are different. Such tests have applications in a variety of machine learning problems such as domain adaptation, covariate shift, label-noise learning, generative modeling, fairness and causal discovery \cite{MMD_GAN,zhang2020a,fang2020rethinking,Gong2016,fang2020open,liu2019butterfly,zhang2020clarinet,zhang2020domain,liu2020TFS,zhong2021how,yu2020label,DA_app_Stojanov,Lopez:C2ST,ODLCMP20:fair-reps}.
% Two-sample tests are hypothesis tests aiming to verify whether two sets of samples follow the same distribution. Traditional tests such as $t$-tests and Kolmogorov-Smirnov tests are mainstream of statistical applications, but require strong assumptions about the distributions being studied and are limited in high-dimensional spaces (e.g. images). Due to these drawbacks, a broad set of recent studies focused on relaxing these assumptions \cite{Jitkrittum2016,Chen2017,Ghoshdastidar2017,Lopez:C2ST,Gao18neurips,LiW18TIT,Graph_two_sample,Matthias:deep-test,Heller2016,ramdas2017wasserstein, Szekely2013,Blaschko2013}. In order to involve distributions with complex structure such as images, deep kernel approaches has been proposed \cite{sutherland:mmd-opt,Kevin_ICML2019,Jean2018}, the foremost study has shown that kernels parameterized by deep neural nets, can be trained to maximize test power in high-dimensional distribution such as images \cite{liu2020learning}.
% \section{Appendix}

\section{Real-world Scenarios regarding SADD}\label{Asec:examples}

\textbf{Scenario 1.} As an artificial-intelligence service provider, we need to acquire a client by modeling his/her task well, such as modeling the risk level of manufacturing factory. To finish this task, we need to hire distributed annotators to obtain labeled natural data regarding the risk level in the factory. However, our competitors may \textit{conspire} with several annotators against us, poisoning this training data by injecting malicious adversarial data ~\cite{barreno2010security,kloft2012security}. If training data contains adversarial ones, the test accuracy will drop~\cite{ZhangYJXGJ19TRADES}, which makes us lose the client unexpectedly. To beware of such adversarial attacks, we can use the MMD test to find reliable annotators providing natural training data. 
% Since the MMD test \cite{Gretton2012} is powerful to detect any difference between two datasets \cite{sutherland:mmd-opt}, it also should be a good option to detect adversarial attacks.

\textbf{Scenario 2.} As a client, we need to purchase artificial-intelligence services to model our task well, such as modeling the risk level of manufacturing factory mentioned above. Given a variety of models offered by many providers, we should select the optimal one and need to hire distributed annotators to obtain labeled natural data regarding the risk level in our factory. However, some artificial-intelligence service providers may \textit{conspire} with several our annotators, poisoning our testing data by injecting malicious adversarial data~\cite{barreno2010security,kloft2012security}. If the testing data contains adversarial ones which are only in the training set of those conspired providers, the test accuracy of conspired providers' models will surpass that of their competitors~\cite{Madry18PGD}, which makes us fail to select the optimal provider. To beware of such adversarial attacks, we can use the MMD test to find reliable annotators providing natural test data.

\begin{figure*}[tp]
    \begin{center}
        %\subfigure[Adversarial data]
        %{\includegraphics[width=0.8\textwidth]{Adv_overview.pdf}}\label{fig:moti:a}
        {\includegraphics[width=0.8\textwidth]{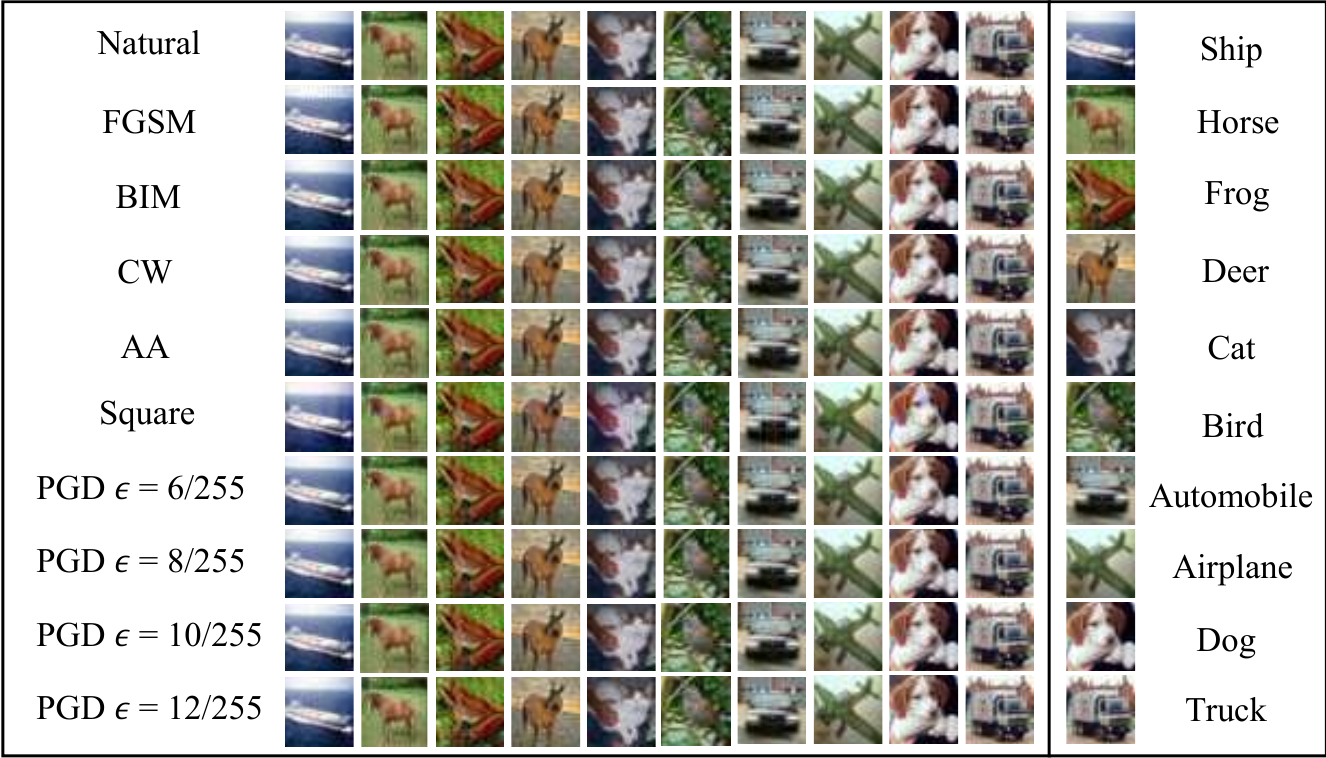}}\label{fig:moti:a}
        \caption{Adversarial data on \textit{CIFAR-10}. We output adversarial examples on a pre-trained ResNet-18, which are attacked by different methods or under different bounded perturbation $\epsilon$.
      }
    \label{fig:adv_data}
    \end{center}
\end{figure*}

\section{Hilbert-Schmidt Independence Criteria}\label{Asec: HSIC}
The HSIC \cite{gretton2008kernel,gretton2005measuring} is a 
% test of the independence hypothesis for one particular kernel independence measure, which is widely used as the
test statistic to work on independence testing \cite{gretton2005measuring}. HSIC can be interpreted as the distance between embeddings of the joint distribution and the product of the marginals in a
RKHS. More importantly, HSIC between two random variables is zero if and only if the two variables are independent \cite{sriperumbudur2010hilbert}. Under the null hypothesis of independence, $P_{XY} = P_XP_Y$, the minimum variance estimate of HSIC is a degenerate U-statistic. The formulation of the HSIC is as follows (more details can be found in \cite{gretton2005measuring}). Given two sets of data $S_X$ and $S_Y$ (with size $n$), the HSIC can be computed using
\begin{equation}
	\begin{split}
	\textnormal{HSIC}(S_X,S_Y)
	=& ~\E_{(x_1,y_1)\sim p(x,y),(x_2,y_2)\sim p(x,y)}[\kappa_X(x_1,x_2)\kappa_Y(y_1,y_2)] \\
	& +\E_{x_1\sim p(x),x_2\sim p(x),y_1\sim p(y),y_2\sim p(y)}[\kappa_X(x_1,x_2)\kappa_Y(y_1,y_2)]\\
	& -2\E_{(x_1,y_1)\sim p(x,y),x_2\sim p(x),y_2\sim p(y)}[\kappa_X(x_1,x_2)\kappa_Y(y_1,y_2)],
	\end{split}
\end{equation}
where $\kappa_X$ and $\kappa_Y$ are two Gaussian kernel functions whose bandwidths are set to two constants, and
\begin{align}
\E_{(x_1,y_1)\sim p(x,y),(x_2,y_2)\sim p(x,y)}[\kappa_X(x_1,x_2)\kappa_Y(y_1,y_2)] = \frac{1}{n^2}\sum_{i=1}^n\sum_{j=1}^n[\kappa_X(x_i,x_j)\kappa_Y(y_i,y_j)].
\end{align}

\section{Asymptotics of the SAMMD}
\label{Asec:asym_MMD}

In this section, we will first prove the asymptotics of the SAMMD by assuming that the adversarial data $\{Y_i\}_{i\in\Z^+}$ are an absolutely regular
process with mixing coefficients
$\{\beta_k\}_{k>0}$ defined in the following.
\begin{definition}[Absolutely regular
process]
(i) Let $(\Omega,\mathcal{A},\Q)$ be a probability space, and let $\mathcal{A}_1,\mathcal{A}_2$ be sub-$\sigma$-field of $\mathcal{A}$. We define
\begin{align}
\label{eq:def_beta}
    \beta(\mathcal{A}_1,\mathcal{A}_2) = \sup_{A_1,\dots,A_n,B_1,\dots,B_m}\sum_{i=1}^n\sum_{j=1}^m|\Q(A_i\cap B_j)-\Q(A_i)\Q(B_j)|,
\end{align}
where the supremum is taken over all partitions $A_1,\dots,A_n$ and $B_1,\dots,B_m$ of $\Omega$ into elements of $\mathcal{A}_1$ and $\mathcal{A}_2$, respectively.
\newline
(ii) Given a stochastic process $\{Y_i\}_{i\in\Z^+}$ and integers $1\leq a\leq b$, we denote by $\mathcal{A}_a^b$ the $\sigma$-field generated by the random variables $Y_{a+1},\dots,Y_{b}$. We define the mixing coefficients of absolute regularity by 
\begin{align}
    \beta_k = \sup_{n\in{\Z^+}}\beta(\mathcal{A}_{1}^n,\mathcal{A}_{n+k}^{\infty}).
\end{align}
The process $\{Y_i\}_{i\in\Z^+}$ is called absolutely regular if $\lim_{k\rightarrow \infty}\beta_k=0$.
\end{definition}
Then, we can obtain the main theorem in the following.

\begin{theorem}[Asymptotics under $H_1$] \label{prop:asymptotics_full}
Under the alternative, $H_1 :$ $S_Y$ are from a stochastic progress $\{Y_i\}_{i=1}^{+\infty}$, if $\{Y_i\}_{i=1}^{+\infty}$ is an absolutely regular
process with mixing coefficients
$\{\beta_k\}_{k>0}$ satisfying $\sum_{k=1}^{+\infty}\beta_k^{\delta/(2+\delta)}<+\infty$ for some $\delta>0$, then
$\widehat{\textnormal{SAMMD}}_u^2$ is $\bigO_P(1 / n)$,
and in particular
% \cite[Section 5.5.1]{serfling}
\begin{gather*}
\sqrt{n}(\widehat{\textnormal{SAMMD}}_u^2 - \textnormal{SAMMD}^2) \overset{\textbf{d}}{\to} \N(0, C_1^2\sigma^2_{H_1}),
\end{gather*}
where $Y_i=\mathcal{G}_{\ell,\hat{f}}(\epsball[X_i^{\prime\prime}])\sim\Q$, $X_i^{\prime\prime}\sim \P$, $\sigma_{H_1}^2 = 4(\E_{Z}[(\E_{Z'}h(Z,Z'))^2]-[(\E_{Z,Z'}h(Z,Z'))^2])$, $h(Z,Z')$ $=k_{\omega}(X,X')+k_{\omega}(Y,Y')-k_{\omega}(X,Y')-k_{\omega}(X',Y)$, $Z:=(X,Y)$, $X\sim\P$ and $X^{\prime\prime}$ are independent and $C_1<+\infty$ is a constant for a given ${\omega}$.
% $\sigma_{H_1}^2 = 4 \left( \E[H_{12} H_{13}] - \E[H_{12}]^2 \right)$
% .
% where $\sigma^2_{H_1}=4(\E_{z}[(\E_{z'}h(z,z'))^2]-[(\E_{z,z'}h(z,z'))^2])$
% and $h(z,z')=k(x,x')+k(y,y')-k(x,y')-k(x',y)$.
\end{theorem}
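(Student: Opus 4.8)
The plan is to treat $\widehat{\textnormal{SAMMD}}_u^2$ as a one-sample $U$-statistic of the paired process $Z_i := (X_i, Y_i)$ and then invoke a central limit theorem for $U$-statistics of absolutely regular sequences. Writing the kernel $h(Z,Z')$ exactly as in the statement, we have $\widehat{\textnormal{SAMMD}}_u^2 = \frac{1}{n(n-1)}\sum_{i\neq j} h(Z_i,Z_j)$ with $h$ symmetric, and $\theta := \E[h(Z,Z')] = \textnormal{SAMMD}^2$. First I would argue that $\{Z_i\}$ inherits absolute regularity: since $\{X_i\}$ is IID and independent of the mechanism producing $\{Y_i\}$, the joint process is absolutely regular with mixing coefficients bounded by $\{\beta_k\}$, hence still satisfies $\sum_k \beta_k^{\delta/(2+\delta)} < +\infty$. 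Because $k_\omega$ is a product of bounded Gaussian factors, $h$ is bounded and therefore has a finite $(2+\delta)$-th moment, which supplies the integrability side-condition required by the mixing CLT.

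Next I would carry out the Hoeffding decomposition. With $g(z) := \E_{Z'}[h(z,Z')]$ and $h_1(z) := g(z) - \theta$, one obtains
\[
\widehat{\textnormal{SAMMD}}_u^2 - \theta = \frac{2}{n}\sum_{i=1}^n h_1(Z_i) + R_n,
\]
where $R_n$ is the completely degenerate second-order term. Under $H_1$ we have $\theta = \textnormal{SAMMD}^2 > 0$ and the linear projection is nondegenerate, $\sigma_{H_1}^2 = 4\Var(h_1(Z_1)) > 0$; this matches the displayed $\sigma_{H_1}^2$ because $\E_Z[(\E_{Z'}h)^2] - (\E h)^2 = \Var(g(Z)) = \Var(h_1(Z_1))$. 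The limiting law is thus dictated entirely by the linear term, provided $R_n$ is negligible at the $\sqrt{n}$ scale.

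The central step is a CLT for $n^{-1/2}\sum_i h_1(Z_i)$ under absolute regularity, following Yoshihara and the $U$-statistic theory of \citet{leucht2013dependentwildbtp}. This yields asymptotic normality with limiting variance equal to the long-run variance
\[
\sigma^2 = 4\Big(\Var(h_1(Z_1)) + 2\sum_{k=1}^{\infty}\Cov\big(h_1(Z_1),h_1(Z_{1+k})\big)\Big).
\]
I would establish absolute convergence of this series by combining the $(2+\delta)$-moment bound with Davydov's covariance inequality for absolutely regular sequences, which bounds each covariance by a constant multiple of $\beta_k^{\delta/(2+\delta)}$; summability then follows from the hypothesis, and in particular $\sigma^2 < +\infty$. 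Setting $C_1^2 := \sigma^2/\sigma_{H_1}^2$, a finite constant (equal to $1$ in the IID case) that captures the variance inflation induced by serial dependence, delivers the claim $\sqrt{n}(\widehat{\textnormal{SAMMD}}_u^2 - \textnormal{SAMMD}^2)\overset{\textbf{d}}{\to}\N(0, C_1^2\sigma_{H_1}^2)$.

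I expect the main obstacle to be controlling the degenerate remainder $R_n$ under dependence, namely showing $R_n = \bigO_P(1/n) = o_P(n^{-1/2})$ so that it does not perturb the limit. For IID data this is immediate from the orthogonality of the H-projections, but under $\beta$-mixing the first-order summands $h_1(Z_i)$ and the degenerate second-order terms are no longer uncorrelated across indices, so one must instead invoke Yoshihara-type moment bounds for degenerate $U$-statistics of absolutely regular sequences---precisely the place where the rate condition $\sum_k \beta_k^{\delta/(2+\delta)} < +\infty$ is indispensable. A secondary point is verifying the stationarity that is part of the ``mild assumptions,'' so that the long-run variance is well defined and the covariance inequality applies uniformly in the lag; the finiteness of $C_1$ then follows from the summability bound established above.
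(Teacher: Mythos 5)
Your proposal is correct and follows essentially the same route as the paper's proof: both view $\widehat{\textnormal{SAMMD}}_u^2$ as a one-sample $U$-statistic of the paired process $Z_i=(X_i,Y_i)$, show that $\{Z_i\}$ inherits absolute regularity from $\{Y_i\}$ (the paper proves equality of the mixing coefficients using independence of the $X$- and $Y$-components), control the long-run covariance series by a mixing covariance inequality (Yoshihara's Lemma~1 in the paper, Davydov's inequality in your sketch---the same estimate with exponent $\delta/(2+\delta)$), and define $C_1^2$ as the ratio of the long-run variance to $\sigma_{H_1}^2$. The only substantive difference is presentational: the paper invokes Theorem~1 of Denker and Keller (1983) as a black box, which already packages your Hoeffding decomposition, the CLT for the linear term, and the control of the degenerate remainder $R_n$ that you correctly flag as the main technical obstacle.
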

\begin{proof}
Without loss of generality, let $Z$ be a random variable on a probability space $(\Omega^Z,\mathcal{A}^Z,\Q^Z)$. We will first prove that $\{Z\}_{i=1}^{+\infty}$ is an absolutely regular process. According to Eq.~(\ref{eq:def_beta}), we have
\begin{align}
    \beta^Z(\mathcal{A}_1^Z,\mathcal{A}_2^Z) = \sup_{A_1^Z,\dots,A_n^Z,B_1^Z,\dots,B_m^Z}\sum_{i=1}^n\sum_{j=1}^m|\Q^Z(A_i^Z\cap B_j^Z)-\Q^Z(A_i^Z)\Q^Z(B_j^Z)|,
\end{align}
where $\mathcal{A}_1^Z,\mathcal{A}_2^Z$ are sub-$\sigma$-field of $\mathcal{A}^Z$ generated by $\{Z\}_{i=1}^{+\infty}$ and the supremum is taken over all partitions $A_1^Z,\dots,A_n^Z$ and $B_1^Z,\dots,B_m^Z$ of $\Omega$ into elements of $\mathcal{A}_1^Z$ and $\mathcal{A}_2^Z$, respectively. Since $X$ and $Y$ are independent and $Z = (X,Y)$, $\Q^Z(Z\in A^Z) = \Q^Z(X\in A^X,Y\in A) = \P(A^X)\Q(A)$. Thus, we have $\Q^Z(A_i^Z\cap B_j^Z) = \P(A_i^X\cap B_i^X)\Q(A_i\cap B_i)$, $\Q^Z(A_i^Z) = \P(A_i^X)\Q(A_i)$, and $\Q^Z(B_i^Z) = \P(B_i^X)\Q(B_i)$. Since $X$ and $X^\prime$ are independent, we have $\P(A_i^X\cap B_i^X) = \P(A_i^X)\P(B_i^X)$, meaning that
\begin{align}
    \beta^Z(\mathcal{A}_1^Z,\mathcal{A}_2^Z) = \sup_{A_1^Z,\dots,A_n^Z,B_1^Z,\dots,B_m^Z}\sum_{i=1}^n\sum_{j=1}^m\P(A_i^X\cap B_i^X)|\Q(A_i\cap B_j)-\Q(A_i)\Q(B_j)|.
\end{align}
Due to the supremum, we can safely make $\P(A_i^X\cap B_i^X)$ be $1$. Thus, we have $\beta^Z(\mathcal{A}_1^Z,\mathcal{A}_2^Z)=\beta(\mathcal{A}_1,\mathcal{A}_2)$. Namely, $\{Z\}_{i=1}^{+\infty}$ is an absolutely regular process with mixing coefficients
$\{\beta_k\}_{k>0}$ satisfying $\sum_{k=1}^{+\infty}\beta_k^{\delta/(2+\delta)}<+\infty$. Based on Theorem $1$ in \cite{denker1983u}, since $h(\cdot,\cdot)\leq 2$, we know that
\begin{align}
    \sqrt{n}(\widehat{\textnormal{SAMMD}}_u^2 - \textnormal{SAMMD}^2) \overset{\textbf{d}}{\to} \N(0, 4\sigma^2),
\end{align}
where
\begin{align}
\label{eq:variance}
     \sigma^2 = \underbrace{\E[h_1(Z_1)]^2}_{\sigma^2_{H_1}} + 2\sum_{j=1}^{+\infty}\textnormal{cov}(h_1(Z_1),h_1(Z_j)),
\end{align}
$h_1(Z_j) = \E_{Z_i}h(Z_i,Z_j)-\theta$, and $\theta = \E_{Z_i,Z_j}h(Z_i,Z_j)$. Note that, due to $\P\neq\Q$, we know $\sigma>0$; due to the absolute regularity, $\sigma<+\infty$. Since the possible dependence between $Z_1$ and $Z_j$ are caused by $Y_1$ and $Y_j$, we will calculate the second term in the right side of Eq.~(\ref{eq:variance}) in the following. First, we introduce two notations for the convenience. 
\begin{align}
    \E_X^{(i)} = \E_X[k_\omega(X_i,X)-k_\omega(Y_i,X)], 
\end{align}
\begin{align}
\label{eq:EYi}
    \E_Y^{(i)} = \E_Y[k_\omega(X_i,Y)-k_\omega(Y_i,Y)].
\end{align}
Thus, we know
\begin{align}
    h_1(Z_1) = \underbrace{\E_X^{(1)}+\E_Y^{(1)}}_{\tilde{h}_1(Z_1)}-\theta,~h_1(Z_j) = \underbrace{\E_X^{(j)}+\E_Y^{(j)}}_{\tilde{h}_1(Z_j)}-\theta,~\theta = \E_{Z_1}[\E_X^{(1)}+\E_Y^{(1)}],
\end{align}
and 
\begin{align}
\label{eq:theta2}
    \theta^2 = \E_{Z_1}[\E_X^{(1)}+\E_Y^{(1)}]\E_{Z_j}[\E_X^{(j)}+\E_Y^{(j)}]=\Big(\E_{Z_1}[\E_X^{(1)}]+\E_{Z_1}[\E_Y^{(1)}]\Big)\Big(\E_{Z_j}[\E_X^{(j)}]+\E_{Z_j}[\E_Y^{(j)}]\Big).
\end{align}
Then, we can compute the $\textnormal{cov}(h_1(Z_1),h_1(Z_j))$.
\begin{align}
\label{eq:cov}
    \textnormal{cov}(h_1(Z_1),h_1(Z_j))
    =&\E_{Z_1,Z_j}[(\tilde{h}_1(Z_1)-\theta)(\tilde{h}_1(Z_j)-\theta)] \nonumber \\
    =&\E_{Z_1,Z_j}[\tilde{h}_1(Z_1)\tilde{h}_1(Z_j)-\theta \tilde{h}_1(Z_j)-\theta \tilde{h}_1(Z_1)+\theta^2] \nonumber \\
    =&\E_{Z_1,Z_j}[\tilde{h}_1(Z_1)\tilde{h}_1(Z_j)]-\theta^2 \nonumber \\
    =&\E_{Z_1,Z_j}[(\E_X^{(1)}+\E_Y^{(1)})(\E_X^{(j)}+\E_Y^{(j)})]-\theta^2 \nonumber \\
    =&\E_{Z_1,Z_j}[\E_X^{(1)}\E_X^{(j)}+\E_X^{(1)}\E_Y^{(j)}+\E_Y^{(1)}\E_X^{(j)}+\E_Y^{(1)}\E_Y^{(j)}]-\theta^2 \nonumber \\
    =&\E_{Z_1}[\E_X^{(1)}]\E_{Z_j}[\E_X^{(j)}]+\E_{Z_1}[\E_X^{(1)}]\E_{Z_j}[\E_Y^{(j)}]+\E_{Z_1}[\E_Y^{(1)}]\E_{Z_j}[\E_X^{(j)}]+\E_{Z_1,Z_j}[\E_Y^{(1)}\E_Y^{(j)}]-\theta^2.
\end{align}
Substituting Eq.~(\ref{eq:theta2}) into  Eq.~(\ref{eq:cov}), we have
\begin{align}
\label{eq:cov1}
    \textnormal{cov}(h_1(Z_1),h_1(Z_j))
    =\E_{Z_1,Z_j}[\E_Y^{(1)}\E_Y^{(j)}]-\E_{Z_1}[\E_Y^{(1)}]\E_{Z_j}[\E_Y^{(j)}].
\end{align}
Then, substituting Eq.~(\ref{eq:EYi}) into  Eq.~(\ref{eq:cov1}), we have
\begin{align}
\label{eq:cov2}
    \textnormal{cov}(h_1(Z_1),h_1(Z_j))
    =&\E_{Y_1,Y_j}\big[\E_Y\E_Y[k_\omega(Y_1,Y)k_\omega(Y_j,Y)]\big] -\E_{Y_1}\big[\E_Y[k_\omega(Y_1,Y)]\big]\E_{Y_j}\big[\E_Y[k_\omega(Y_j,Y)]\big] \nonumber \\
    =&\E_Y\E_Y\big[\E_{Y_1,Y_j}[k_\omega(Y_1,Y)k_\omega(Y_j,Y)]-\E_{Y_1}[k_\omega(Y_1,Y)]\E_{Y_j}[k_\omega(Y_j,Y)]\big].
\end{align}
Since $k_\omega(\cdot,\cdot)\leq 1$, according to Lemma $1$ in \cite{yoshihara1976limiting}, we have $\textnormal{cov}(h_1(Z_1),h_1(Z_j))<4\beta_j^{\delta/(2+\delta)}$. Because $\sum_{k=1}^{+\infty}\beta_k^{\delta/(2+\delta)}<+\infty$, we know, $\forall \epsilon'\in(0,1)$, there exists an $N$ such that $\sum_{k=N+1}^{+\infty}\beta_k^{\delta/(2+\delta)}<\epsilon'$. Hence
\begin{align}
    \sum_{j=1}^{+\infty}\textnormal{cov}(h_1(Z_1),h_1(Z_j))=\sum_{j=1}^{N}\E_Y\E_Y\big[\E_{Y_1,Y_j}[k_\omega(Y_1,Y)k_\omega(Y_j,Y)]-\E_{Y_1}[k_\omega(Y_1,Y)]\E_{Y_j}[k_\omega(Y_j,Y)]\big] + c',
\end{align}
where $c'$ is a small constant. Without loss of generality, we assume the small constant $c'$ is smaller than $\E[h_1(Z_1)]^2$. Thus, there exists a constant $C_1^2-1$ such that $2\sum_{j=1}^{+\infty}\textnormal{cov}(h_1(Z_1),h_1(Z_j))=(C_1^2-1)\E[h_1(Z_1)]^2$. Namely, $\sigma^2=C_1^2\sigma^2_{H_1}$, which completes the proof.
\end{proof}

Next, we will show that the bootstrapped SAMMD (shown in the following) has the same asymptotic null distribution as the empirical SAMMD. First, we restate the bootstrapped SAMMD in the following.
\begin{align}
    &\widehat{\textnormal{SAMMD}}_w(S_X,S_Y;k_\omega)\nonumber \\
    =~&\frac{1}{n^2}\sum_{i=1}^n\sum_{j=1}^n\tilde{W}_i^{x}\tilde{W}_j^{x}k_\omega(x_i,x_j) + \frac{1}{m^2}\sum_{i=1}^m\sum_{j=1}^m\tilde{W}_i^{y}\tilde{W}_j^{y}k_\omega(y_i,y_j) - \frac{2}{nm}\sum_{i=1}^n\sum_{j=1}^m\tilde{W}_i^{x}\tilde{W}_j^{y}k_\omega(x_i,y_j),
\end{align}
where 
\begin{align}
    \{\tilde{W}^X_i\}_{i=1}^n= \{{W}^X_i\}_{i=1}^n - \frac{1}{n}\sum_{i=1}^n{W}^X_i,~\{\tilde{W}^Y_i\}_{i=1}^m =\{{W}^Y_i\}_{i=1}^m - \frac{1}{m}\sum_{i=1}^m{W}^Y_i.
\end{align}
The ${W}^X_i$ and ${W}^Y_j$ are generated by
\begin{align}
    W_t = e^{-1/l}W_{t-1} + \sqrt{1-e^{-2/l}}\epsilon_t,
\end{align}
where $W_0, \epsilon_0,\dots,\epsilon_t$ are independent standard normal random variables. Then, following the Proposition $1$ in \cite{Kacper14wildbtp}, we can directly obtain the following proposition using the relation between $\beta$-mixing and $\tau$-mixing presented in Eq. (18) in \cite{Kacper14wildbtp}.
\begin{prop}
Let $\{Y_i\}_{i=1}^{+\infty}$ be an absolutely regular process with mixing coefficients
$\{\beta_k\}_{k>0}$ satisfying $\beta_k=O(k^{(-6-\epsilon'')(1+\delta)})$ for some $\epsilon>0$ and $\delta>0$, $n = \rho_xn'$ and $m = \rho_yn'$, where $n'=n+m$. Then, under the null hypothesis $Y_i\sim\P$, $\psi(\rho_x\rho_yn'\widehat{\textnormal{SAMMD}}_w(S_X,S_Y;k_\omega),\rho_x\rho_yn'\widehat{\textnormal{SAMMD}}(S_X,S_Y;k_\omega))\rightarrow 0$ in probability as $n'\rightarrow +\infty$, where $\psi$ is the Prokhorov metric.
\end{prop}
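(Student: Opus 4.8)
The strategy is to deduce the claim directly from Proposition~1 of \cite{Kacper14wildbtp}, which shows that for a degenerate $V$-statistic built from a bounded kernel over a $\tau$-mixing process, the wild-bootstrapped statistic and the original statistic converge to the \emph{same} limit in the Prokhorov metric. Thus the whole task reduces to three verifications: (i) writing the two-sample SAMMD as a (pooled) degenerate $V$-statistic; (ii) checking the boundedness and regularity hypotheses on the kernel and on the bootstrap multipliers; and (iii) translating our absolute-regularity ($\beta$-mixing) assumption into the $\tau$-mixing decay required by that proposition. Once these hold, the conclusion $\psi(\cdot,\cdot)\to 0$ follows with no further argument, which is exactly why the paper can assert the result ``directly.''

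\textbf{Reduction and degeneracy.} First I would pool $S_X$ and $S_Y$ and note that, under $H_0:\,Y_i\sim\P$, the core function $h$ from Theorem~\ref{prop:asymptotics_full} is degenerate: since $\P=\Q$, the first-order Hoeffding projection $z\mapsto\E_{Z'}h(z,Z')$ vanishes, so $\widehat{\textnormal{SAMMD}}$ is a genuinely degenerate statistic whose null limit is a quadratic functional of a Gaussian process (an element of the second Wiener chaos). This is precisely the regime treated in \cite{Kacper14wildbtp}. Because $k_\omega$ in Eq.~(\ref{eq:deepkernel_SAMMD}) is a product of Gaussian kernels composed with the fixed featurizer $\phi_p$, it is symmetric, continuous and bounded with $|k_\omega|\le 1$; hence $h$ is bounded and measurable, meeting the integrability conditions of the reference.

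\textbf{Bootstrap multipliers.} Next I would verify that the multiplier sequence $\{W_t\}$ generated by Eq.~(\ref{eq:wb_generator}) is exactly the stationary auto-regressive (Ornstein--Uhlenbeck-type) process used in \cite{Kacper14wildbtp,leucht2013dependentwildbtp}: each $W_t$ is standard normal, the sequence is strictly stationary with geometrically decaying mixing coefficients, and after the centering $\tilde W_i=W_i-\tfrac1n\sum_k W_k$ it satisfies the auxiliary moment and mixing conditions those works impose on the weights. No new computation is needed; this is a matter of matching definitions.

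\textbf{Main obstacle: the mixing-rate translation.} The only genuinely technical step is that Proposition~1 of \cite{Kacper14wildbtp} is phrased for $\tau$-mixing processes with a prescribed polynomial decay, whereas our hypothesis is stated via $\beta$-mixing. I would invoke the $\beta\!\to\!\tau$ comparison recorded in Eq.~(18) of \cite{Kacper14wildbtp}, which bounds the $\tau$-mixing coefficients of a bounded integrand by the corresponding $\beta$-coefficients up to a constant depending on $\|h\|_\infty$. Feeding in $\beta_k=\mathcal{O}\!\big(k^{-(6+\epsilon'')(1+\delta)}\big)$ then yields a $\tau$-rate of the same polynomial order, which clears the summability threshold (of order $k^{-6-\epsilon}$) needed for both the degenerate limit and bootstrap consistency. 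I would also record that pooling does not spoil mixing: $S_X$ is IID and independent of $S_Y$, so by the product-process argument already used in the proof of Theorem~\ref{prop:asymptotics_full} the concatenated sequence $\{Z_i\}$ inherits the same $\beta$-rate. With the $\tau$-mixing hypothesis thus verified, Proposition~1 of \cite{Kacper14wildbtp} applies verbatim and gives $\psi(\rho_x\rho_y n'\widehat{\textnormal{SAMMD}}_w,\,\rho_x\rho_y n'\widehat{\textnormal{SAMMD}})\to 0$ in probability. I expect essentially all the difficulty to sit in the exponent bookkeeping of this translation; everything else is assumption-checking.
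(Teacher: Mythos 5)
Your proposal is correct and follows essentially the same route as the paper: the paper's entire proof consists of invoking Proposition~1 of \cite{Kacper14wildbtp} directly, using the relation between $\beta$-mixing and $\tau$-mixing given in Eq.~(18) of that reference, which is exactly your reduction. The only difference is that you explicitly spell out the assumption-checking (degeneracy under $H_0$, boundedness of $k_\omega$, the multiplier-process conditions, and preservation of mixing under pooling) that the paper leaves implicit.
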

% Based on the unbiased estimator of MMD, we can analyze the asymptotics of the MMD under $H_0$ and $H_1$ as follows.

% \begin{prop}[Asymptotics of $\widehat{\MMD}_u^2$] \label{prop:asymptotics_MMD}
% Under the null hypothesis, $H_0: \P=\Q$, $\widehat\MMD_u^2$ is $\bigO_P(1 / n^2)$, with
% \[
% n\widehat\MMD_u^2 \overset{\textbf{d}}{\to} \sum_i \lambda_i(Z^2_i - 2);
% \]
% here $\lambda_i$ are the eigenvalues of the covariance operator under $\P$ of the centered kernel,
% and the $Z_i$ are i.i.d.\ Gaussian with mean $0$ and variance $2$
% \cite[Theorem 12]{Gretton2012}.

% Under the alternative, $H_1 : \P \ne \Q$,
% $\widehat\MMD_u^2$ is $\bigO_P(1 / n)$,
% and in particular
% \cite[Section 5.5.1]{serfling}
% \begin{gather*}
% \sqrt{n}(\widehat{\MMD}_u^2 - \MMD^2) \overset{\textbf{d}}{\to} \N(0, \sigma^2_{H_1}),
% \end{gather*}
% where $\sigma_{H_1}^2 = 4(\E_{z}[(\E_{z'}h(z,z'))^2]-[(\E_{z,z'}h(z,z'))^2])=4 \left( \E[H_{12} H_{13}] - \E[H_{12}]^2 \right)$, $h(z,z')=k(x,x')+k(y,y')-k(x,y')-k(x',y)$ and $z:=(x,y)$.
% % $\sigma_{H_1}^2 = 4 \left( \E[H_{12} H_{13}] - \E[H_{12}]^2 \right)$
% % .
% % where $\sigma^2_{H_1}=4(\E_{z}[(\E_{z'}h(z,z'))^2]-[(\E_{z,z'}h(z,z'))^2])$
% % and $h(z,z')=k(x,x')+k(y,y')-k(x,y')-k(x',y)$.
% \end{prop}

% Based on this Proposition, we can analyze the test power of MMD.

\begin{figure*}[tp]
    \begin{center}
        %\subfigure[Visualization of outputs of dense layers in DNNs using t-SNE]
        \subfigure[RN18-Natural]
        {\includegraphics[width=0.135\textwidth]{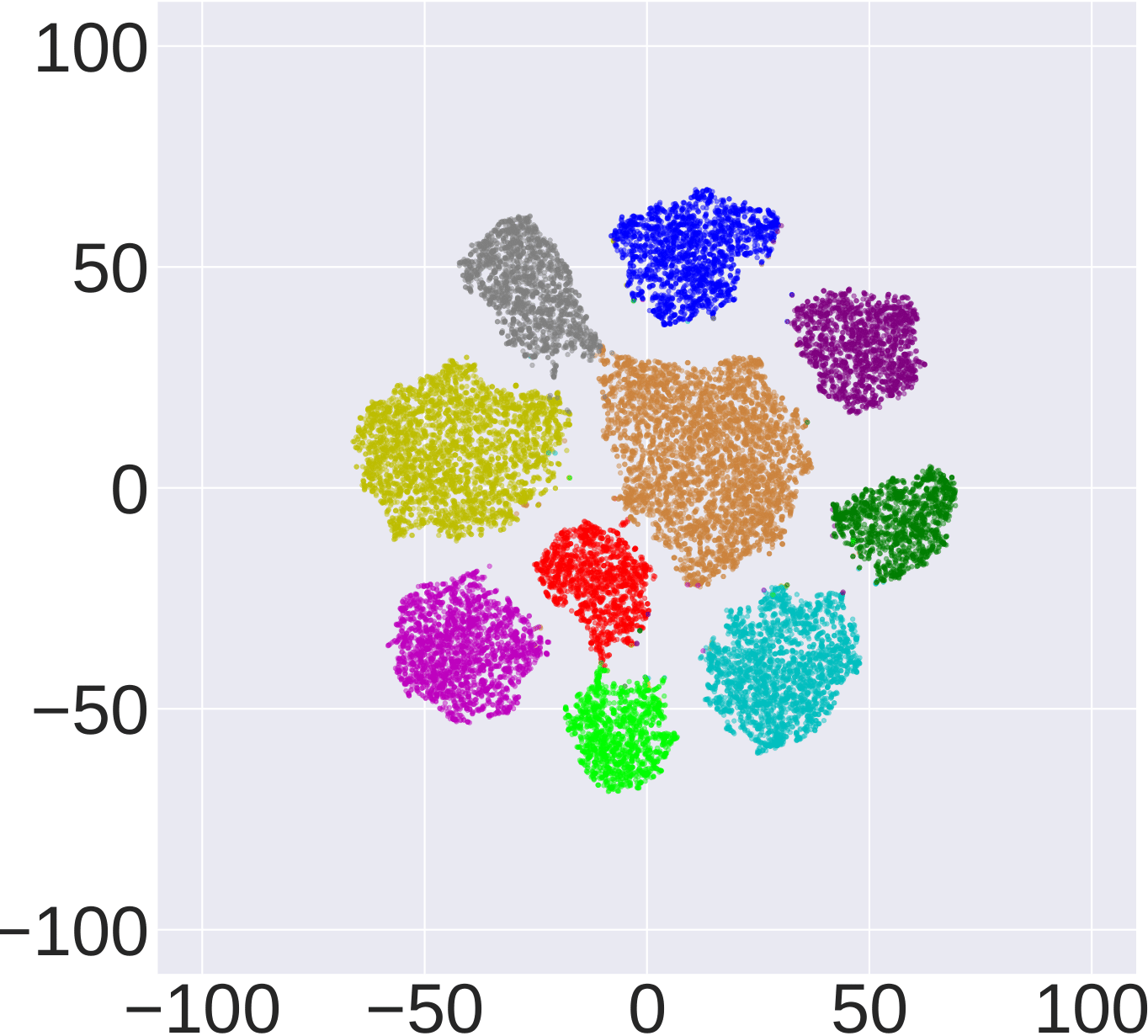}}
        \subfigure[RN18-FGSM]
        {\includegraphics[width=0.135\textwidth]{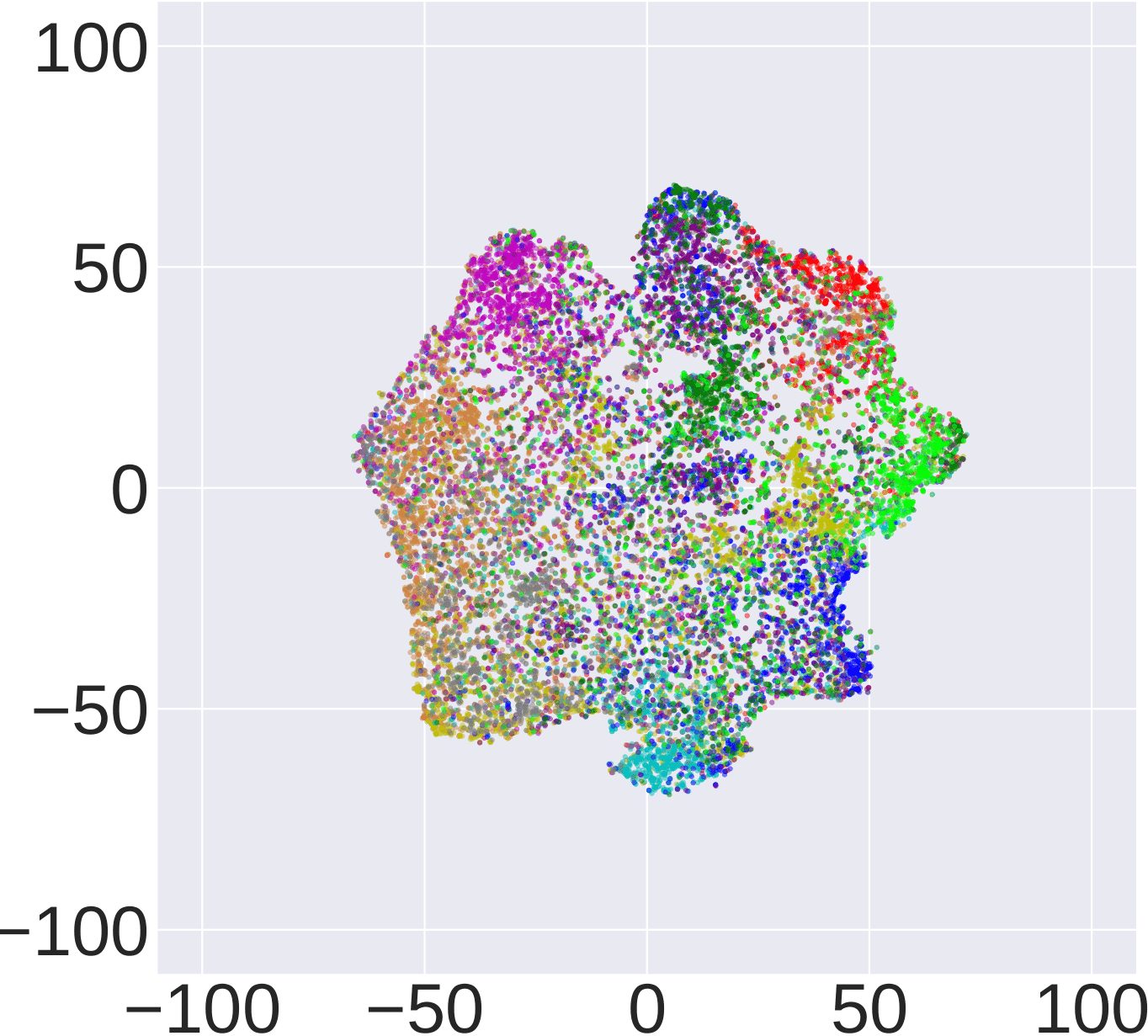}}
        \subfigure[RN18-BIM]
        {\includegraphics[width=0.135\textwidth]{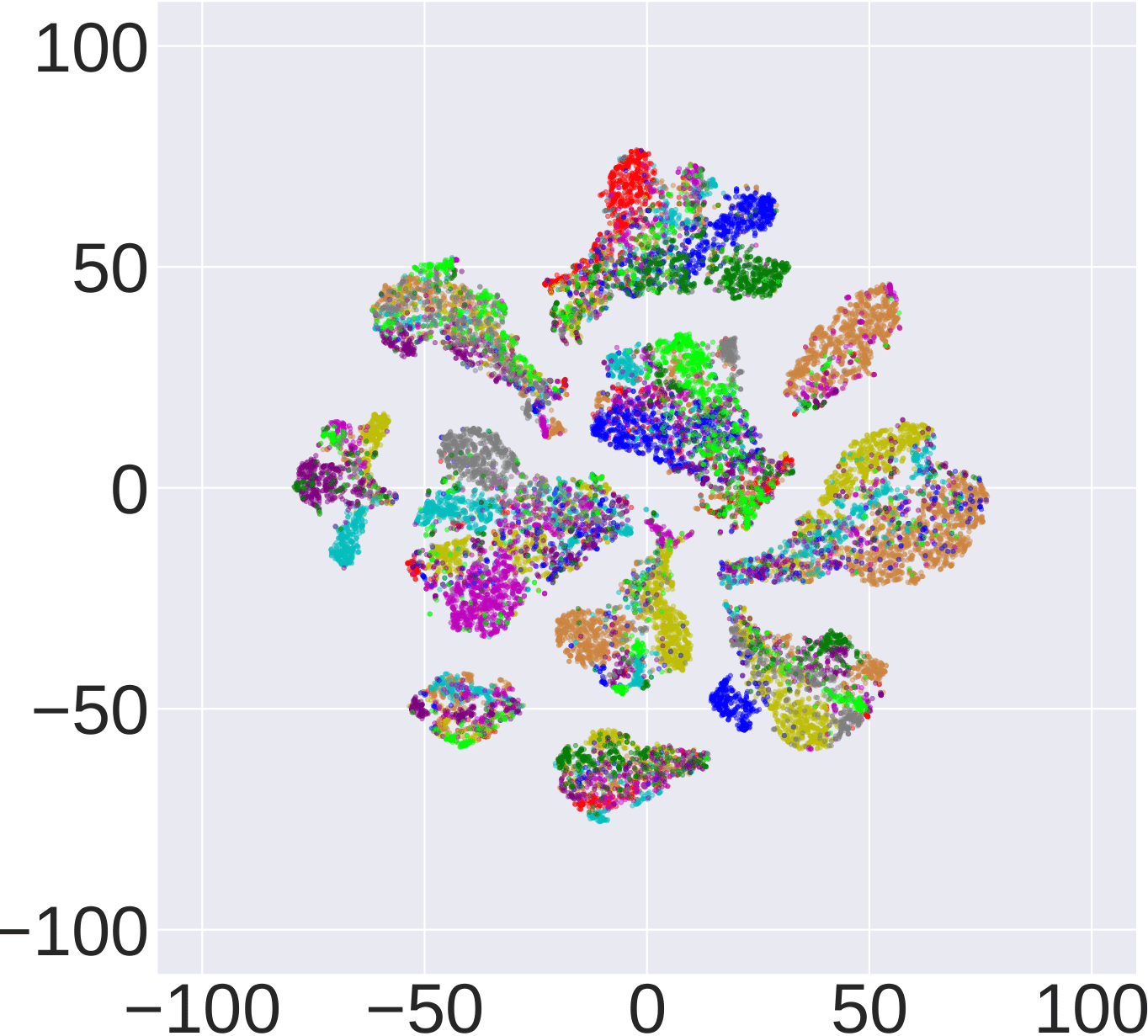}}
        \subfigure[RN18-PGD]
        {\includegraphics[width=0.135\textwidth]{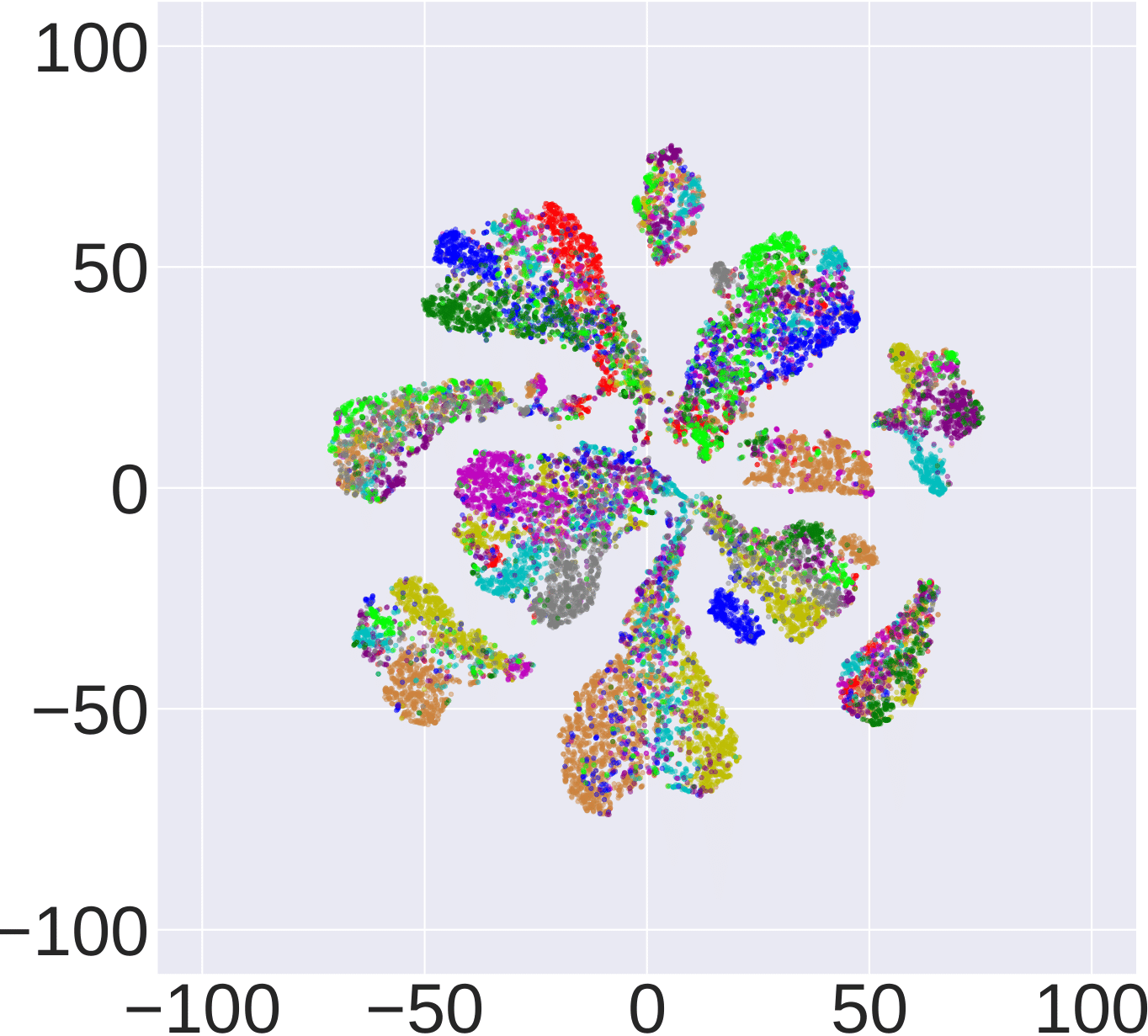}}
        \subfigure[RN18-CW]
        {\includegraphics[width=0.135\textwidth]{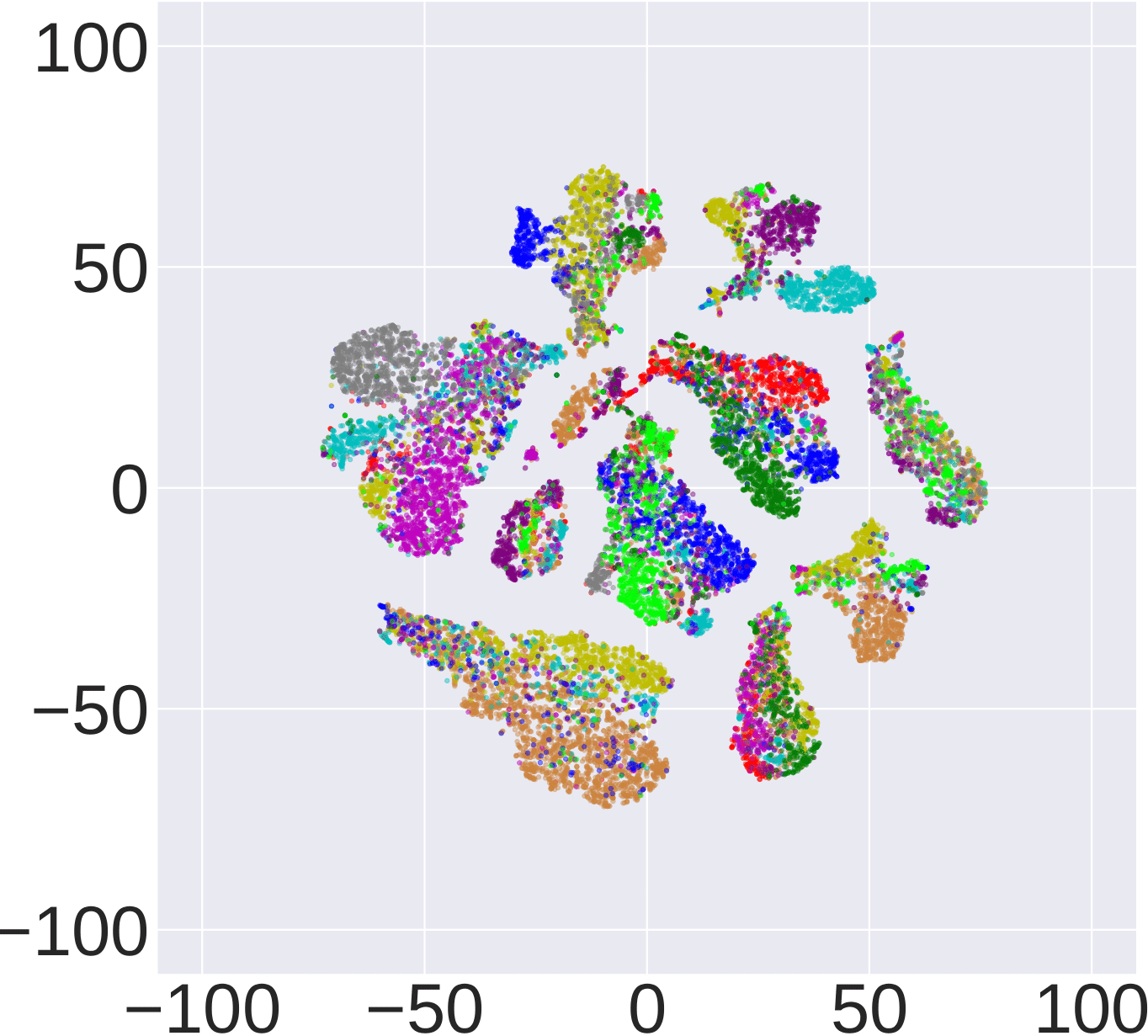}}
        \subfigure[RN18-AA]
        {\includegraphics[width=0.135\textwidth]{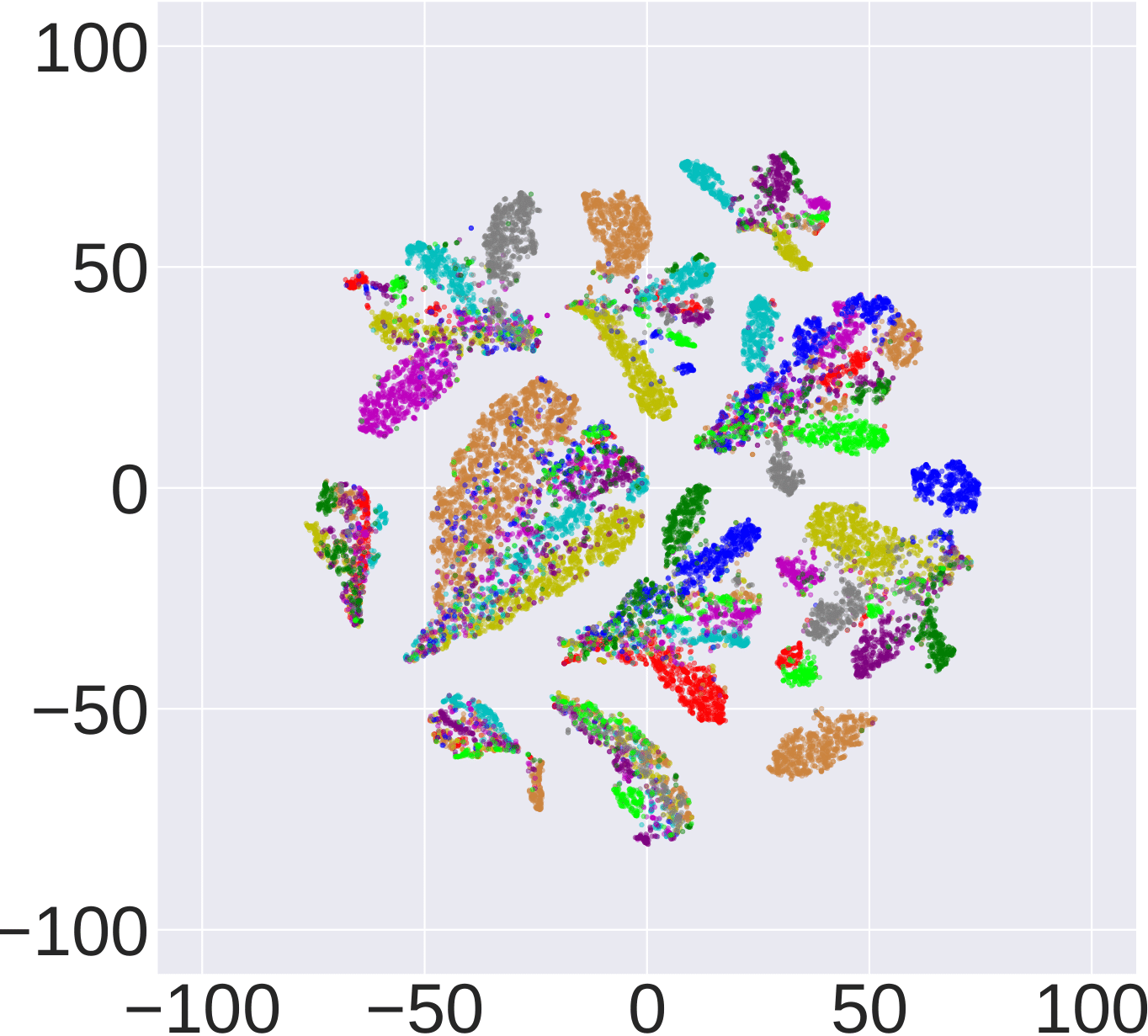}}
        \subfigure[RN18-Square]
        {\includegraphics[width=0.135\textwidth]{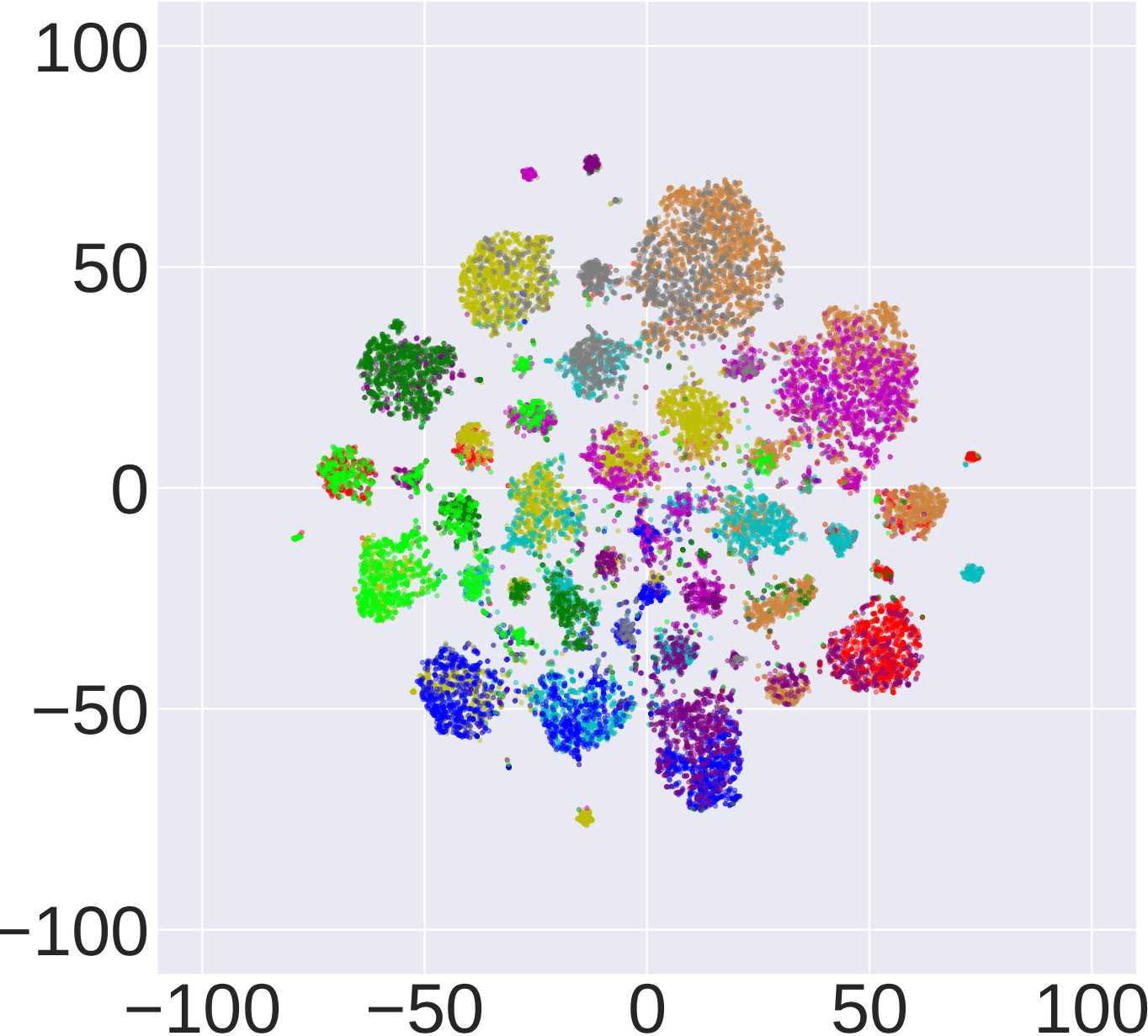}}
        % \vspace{-2em}
        \subfigure[RN34-Natural]
        {\includegraphics[width=0.135\textwidth]{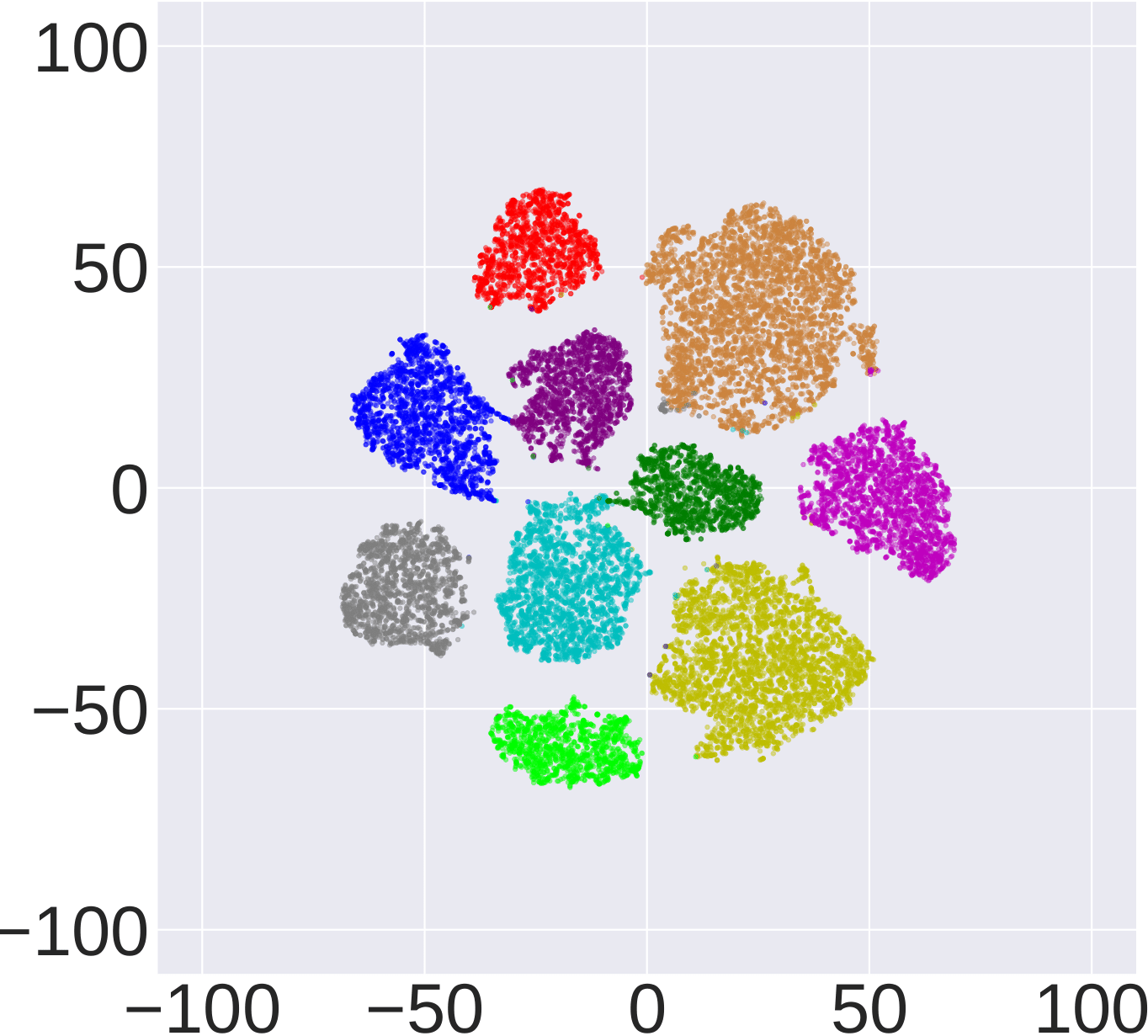}}
        \subfigure[RN34-FGSM]
        {\includegraphics[width=0.135\textwidth]{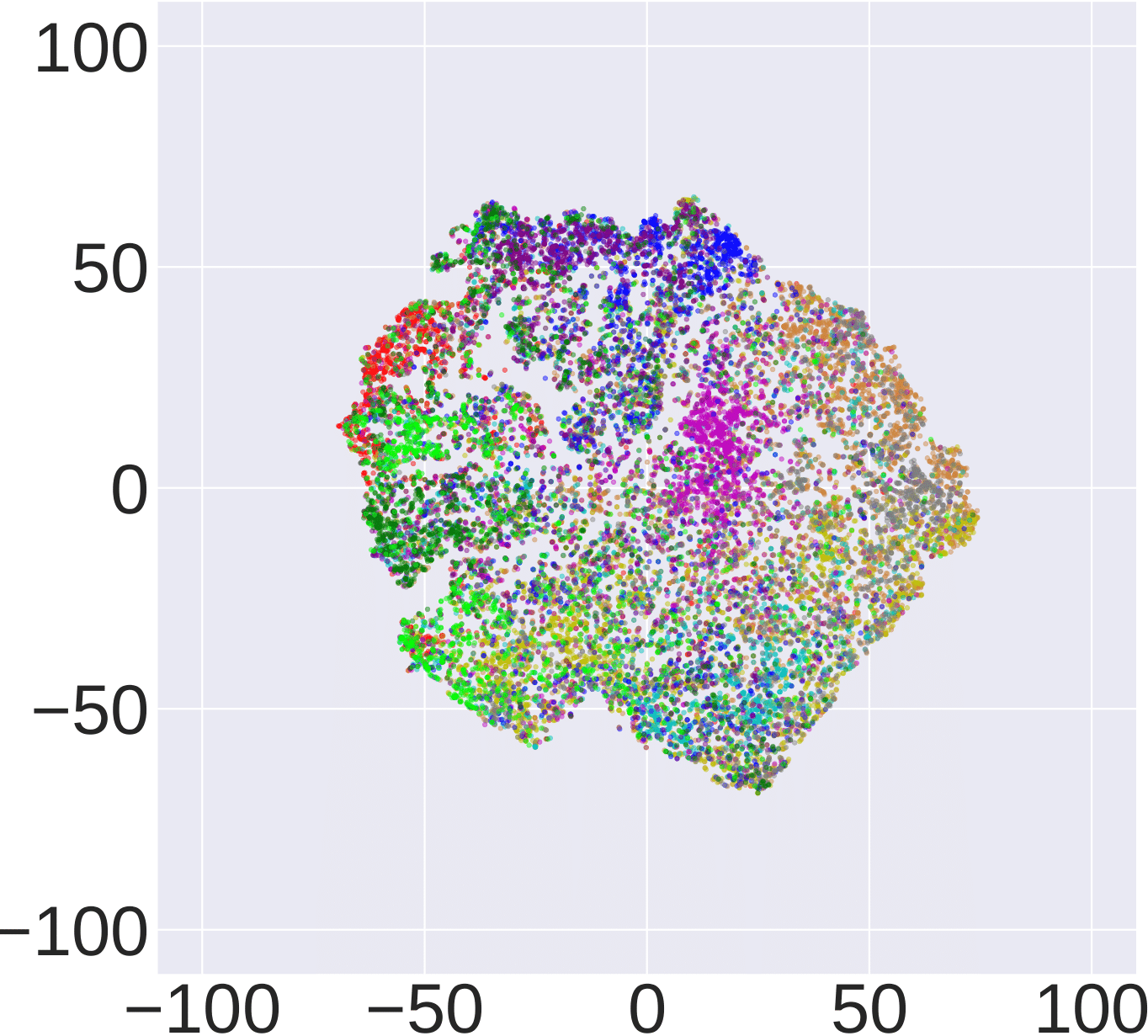}}
        \subfigure[RN34-BIM]
        {\includegraphics[width=0.135\textwidth]{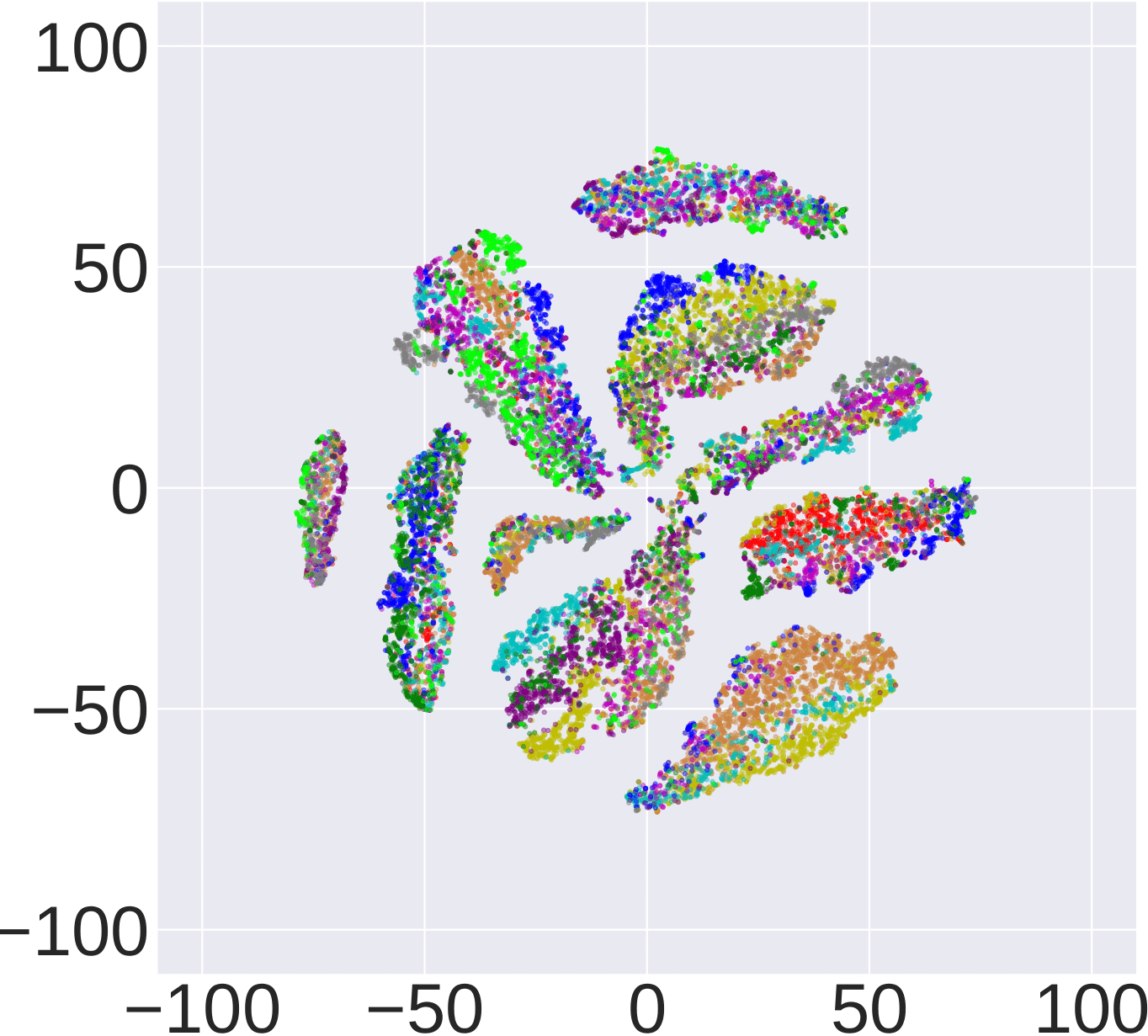}}
        \subfigure[RN34-PGD]
        {\includegraphics[width=0.135\textwidth]{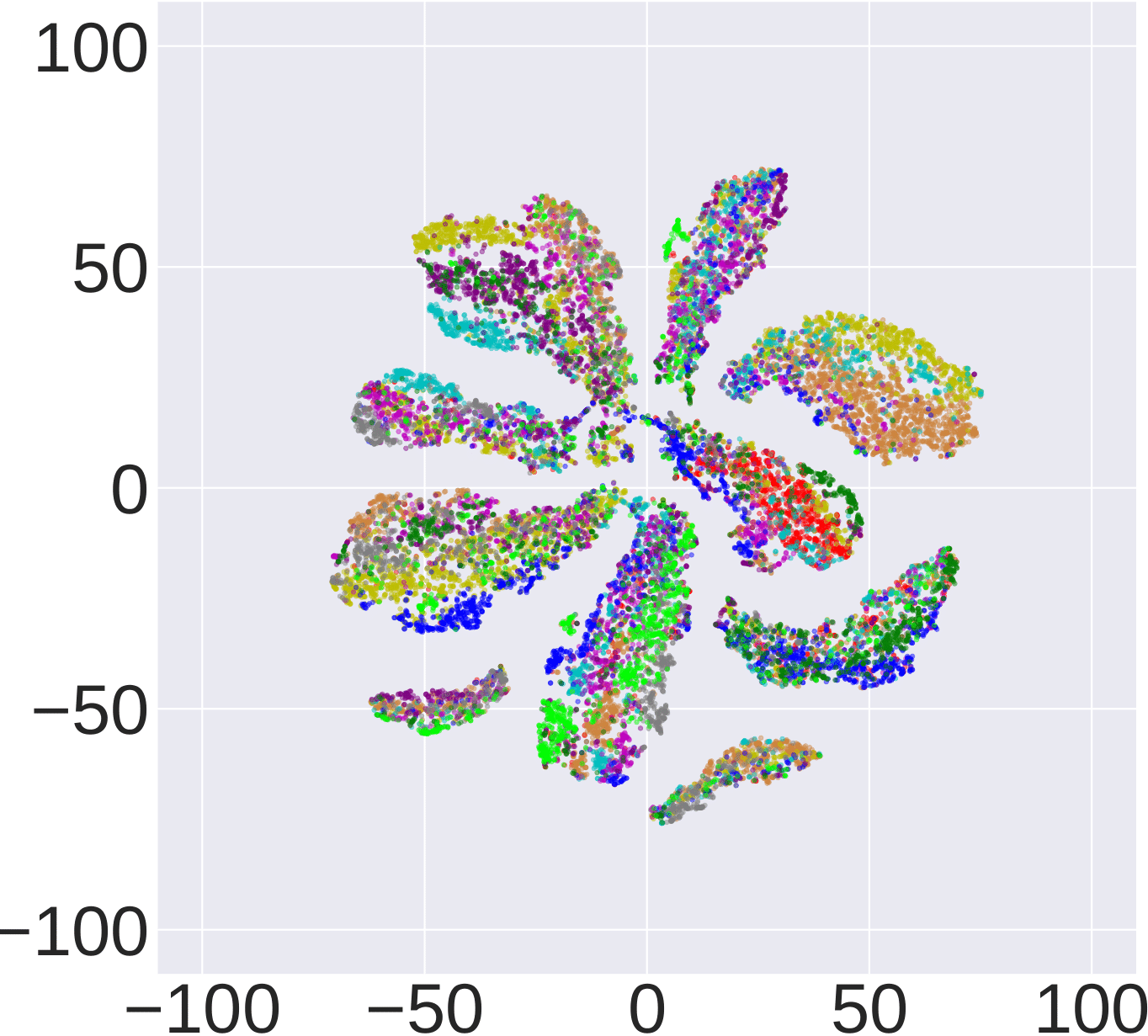}}
        \subfigure[RN34-CW]
        {\includegraphics[width=0.135\textwidth]{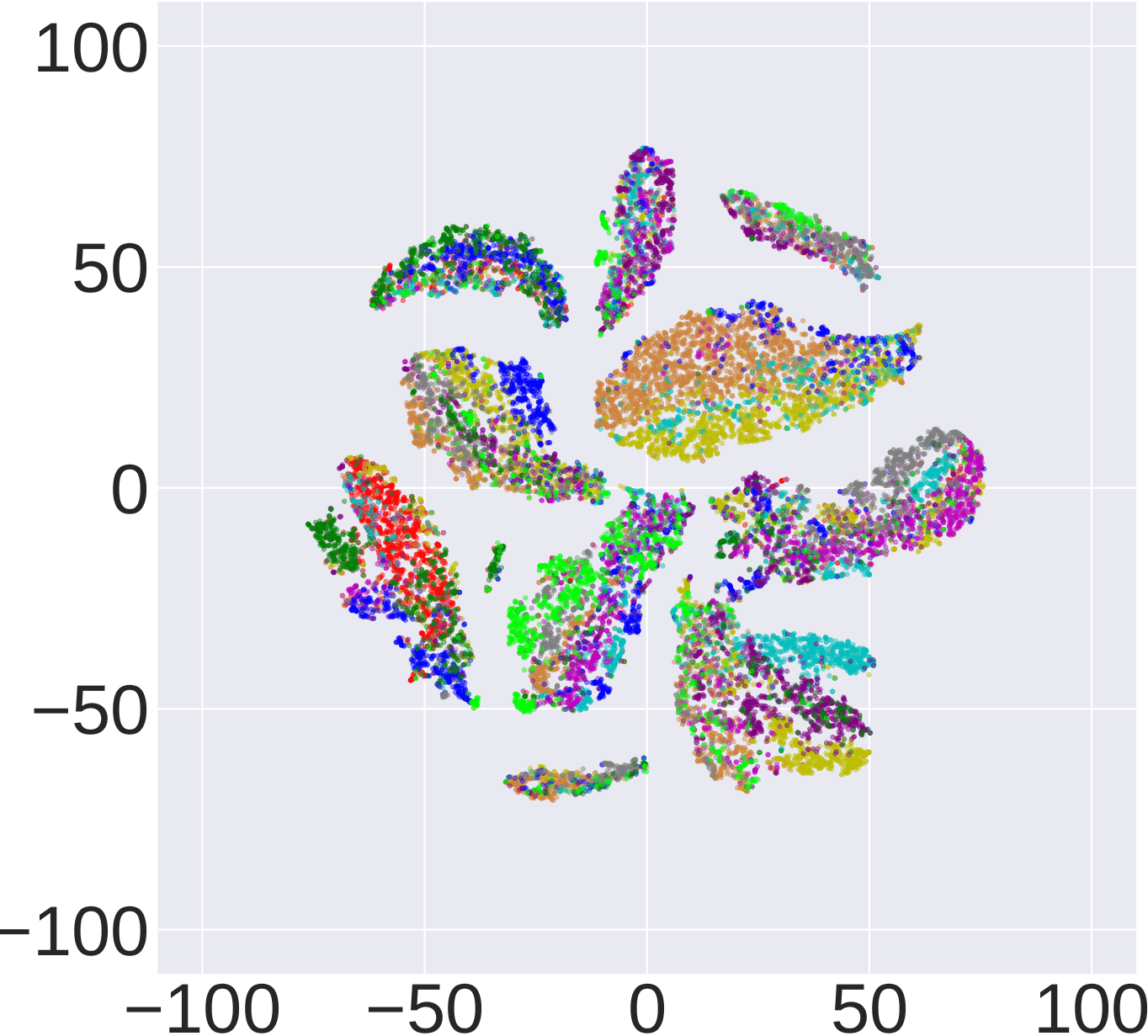}}
        \subfigure[RN34-AA]
        {\includegraphics[width=0.135\textwidth]{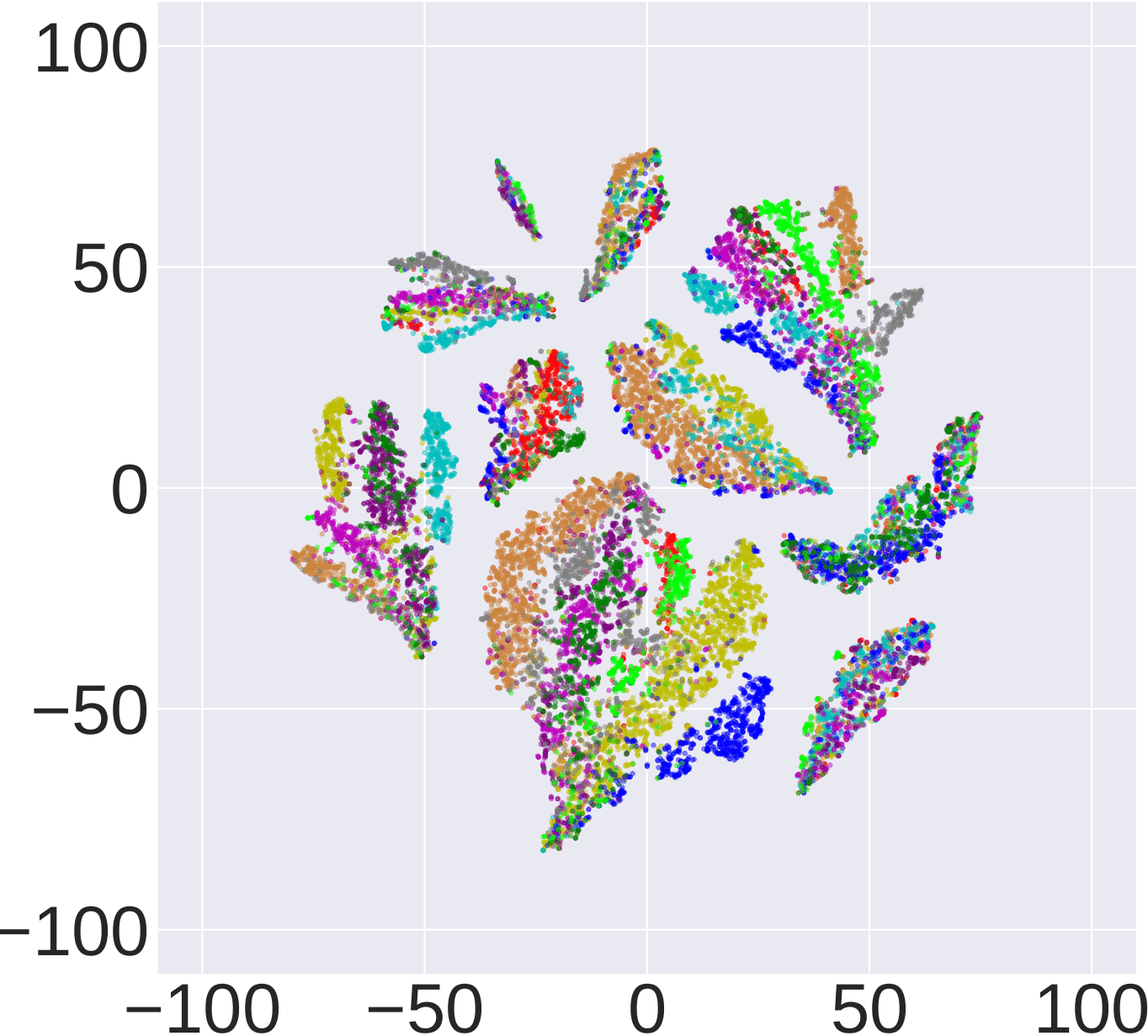}}
        \subfigure[RN34-Square]
        {\includegraphics[width=0.135\textwidth]{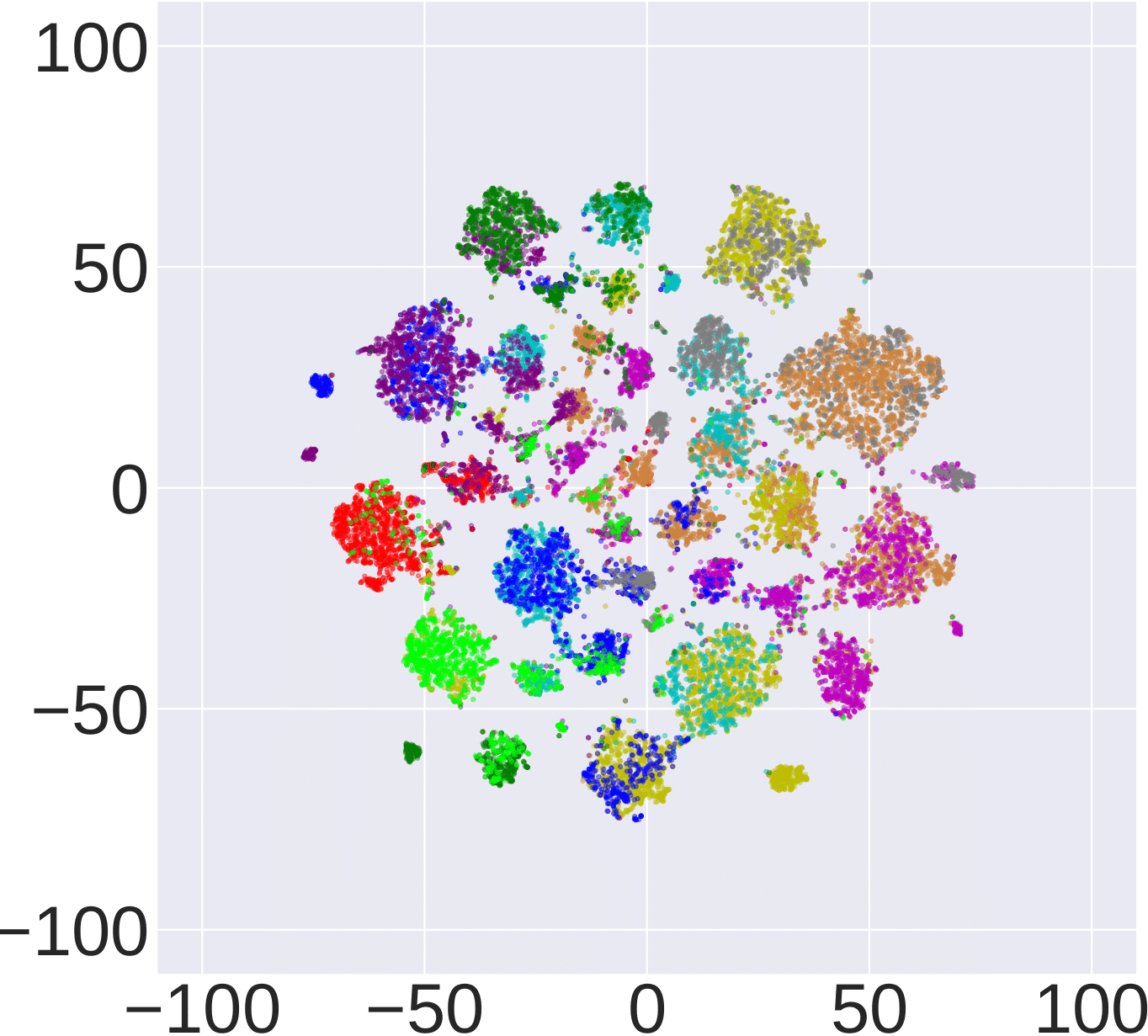}}
        \caption{\footnotesize
       Visualization of outputs using t-SNE. This figure visualizes outputs of the second to last layers in ResNet-18 and ResNet-34. Different colors represent different semantic meanings (i.e., different classes in the testing set of the \textit{SVHN}).}
        \label{fig:svhn}
    \end{center}
\end{figure*}

\section{Experiments Setup}
\label{Asec:exp_set}

We implement all methods on Python $3.7$ (Pytorch $1.1$) with a NVIDIA GeForce RTX2080 Ti GPU. The \textit{CIFAR-10} dataset and the \textit{SVHN} dataset can be downloaded via Pytorch. See the codes submitted. Given the $50,000$ images from the \textit{CIFAR-10} training set and $73,257$ digits from the \textit{SVHN} training set, we conduct a standard training on ResNet-18 and ResNet-34 for classification. Given the $100,000$ images from the \textit{Tiny-Imagenet} training set, we conduct a standard training on WRN-32-10 classification. DNNS are trained using SGD with $0.9$ momentum, the initial learning rate of $0.01$ and the batch size of $128$ for $150$ epochs. Based on these pre-trained models, adversarial data is generated from \emph{fast gradient sign method} (FGSM) \cite{goodfellow2014explaining}, \emph{basic iterative methods} (BIM) \cite{kurakin2016adversarial}, \emph{project gradient descent} (PGD) \cite{Madry18PGD}, \emph{Carlini and Wagner attack} (CW) \cite{carlini2017towards}, \emph{AutoAttack} (AA) \cite{croce2020reliable} and \emph{Square attack} (Square) \cite{andriushchenko2020square}. 

For each attack method, we generate eight adversarial datasets with the $L_\infty$-norm bounded perturbation $\epsilon \in [0.0235,0.0784]$. For attack methods except FGSM, maximum step $K = 20$ and step size $\alpha=\epsilon/10$. For the $10,000$ images from the \textit{CIFAR-10} testing set and $26,032$ digits from the \textit{SVHN} testing set, we only choose the adversarial data, whose original images are correctly classified by the pre-trained models. We extract semantic features from the second to last full connected layer of the well-trained ResNet-18 and ResNet-34. In the wild bootstrap process of SAMMD, given an alternative value set $\{0.1,0.2,0.5,1,5,10,15,20\}$, we choose the optimal $l$ (in Eq.~(\ref{eq:wb_generator})) depending on whose type I error on natural data is the most close to $\alpha$ ($\alpha = 0.05$ in the paper). The optimal value of $l$ we choose is $0.2$. For images from the \textit{CIFAR-10} (or the \textit{SVHN}) testing set and adversarial datasets generated above, we select a subset containing $500$ images of the each for $S_p^{tr}$ and $S_q^{tr}$, and train on that; we then evaluate on $100$ random subsets of each, disjoint from the training set, of the remaining data. We repeat this full process $20$ times, and report the mean rejection rate of each test. The learning rate of our SAMMD test and all baselines is $0.0002$.

\section{Additional Experiments}
\label{Asec:add_exp}
%\vspace{-1em}\paragraph{Results of ResNet-18 on \textit{CIFAR-10}}
%\vspace{-1em}\paragraph{Results of ResNet-34 on \textit{CIFAR-10}}
%\vspace{-1em}\paragraph{Results of ResNet-18 on \textit{SVHN}}
%\vspace{-1em}\paragraph{Results of ResNet-34 on \textit{SVHN}}
%We compare SAMMD test with $6$ existing two-sample tests. For both datasets, we pre-train models on ResNet-18 and ResNet-34 for the acquisition of adversarial examples and semantic features. As sufficient default attack strategies for these protected target models, we use $L_\infty$ norm constrained FGSM, BIM, PGD, APGD and CW that 5 white-box attacks and a score-based black-box attack that Square.

\begin{figure*}[!t]

    \begin{center}
        \subfigure[Six different attacks]
        {\includegraphics[width=0.245\textwidth]{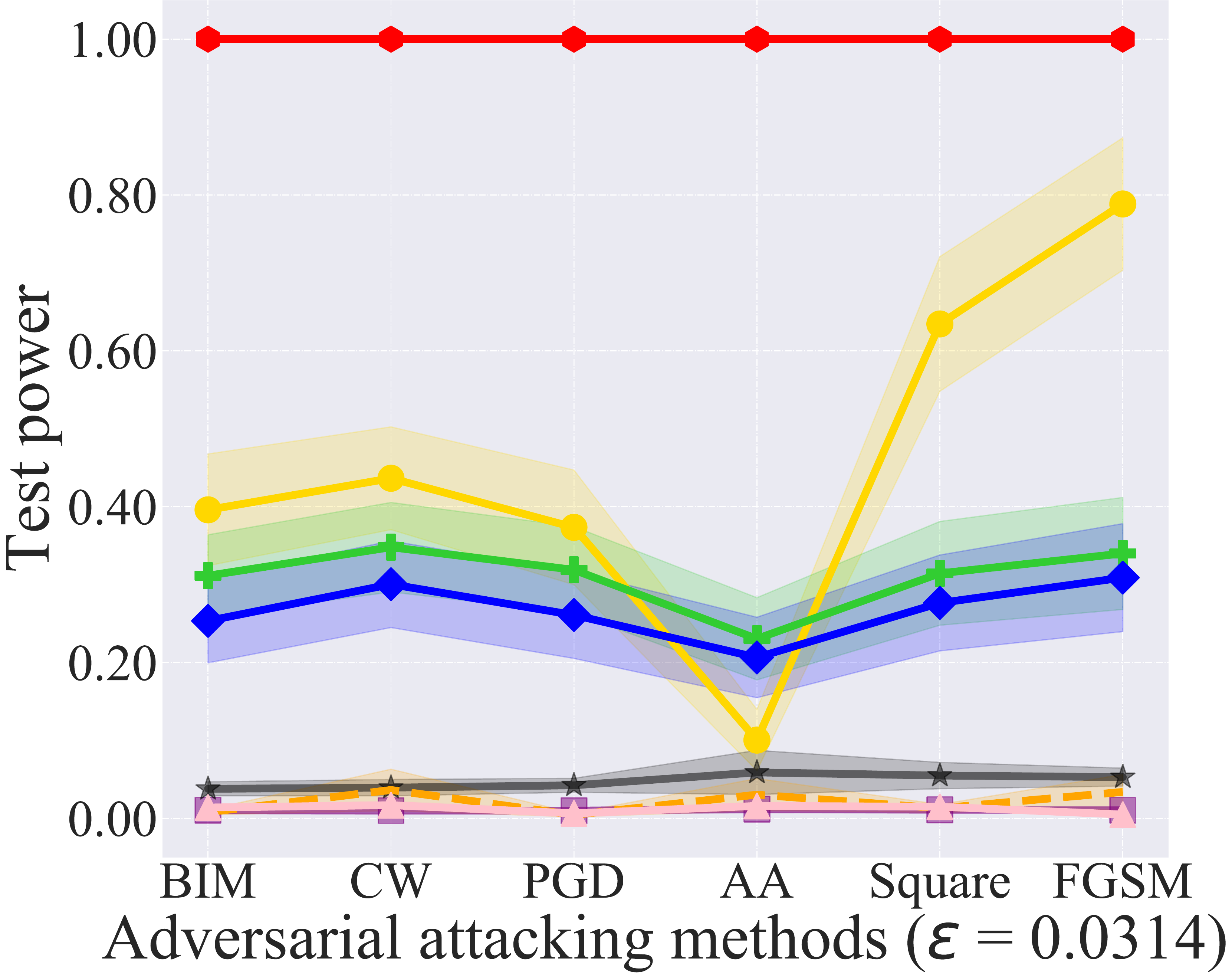}}
        \subfigure[Different $\epsilon$ of FGSM]
        {\includegraphics[width=0.245\textwidth]{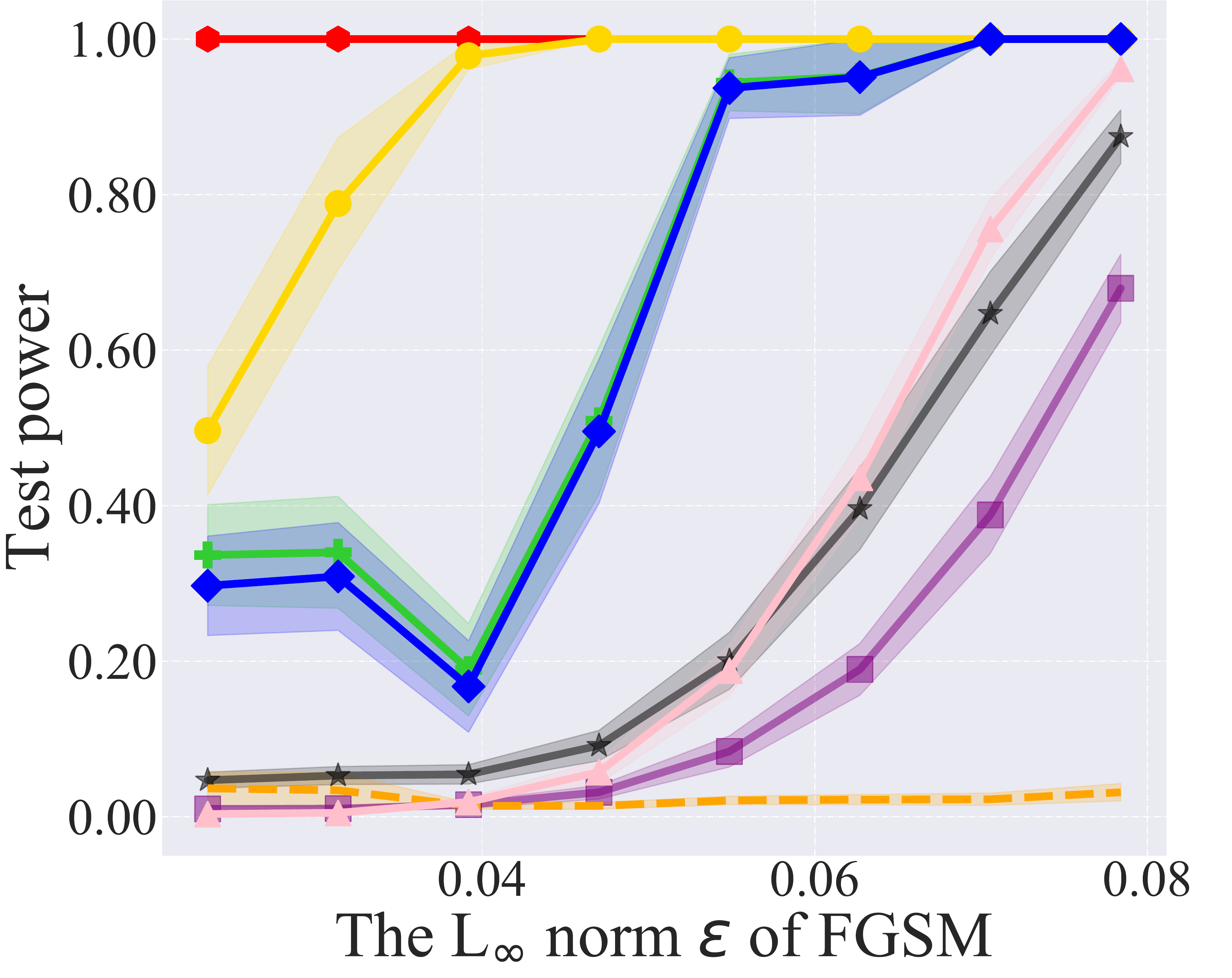}}
        \subfigure[Different $\epsilon$ of BIM]
        {\includegraphics[width=0.245\textwidth]{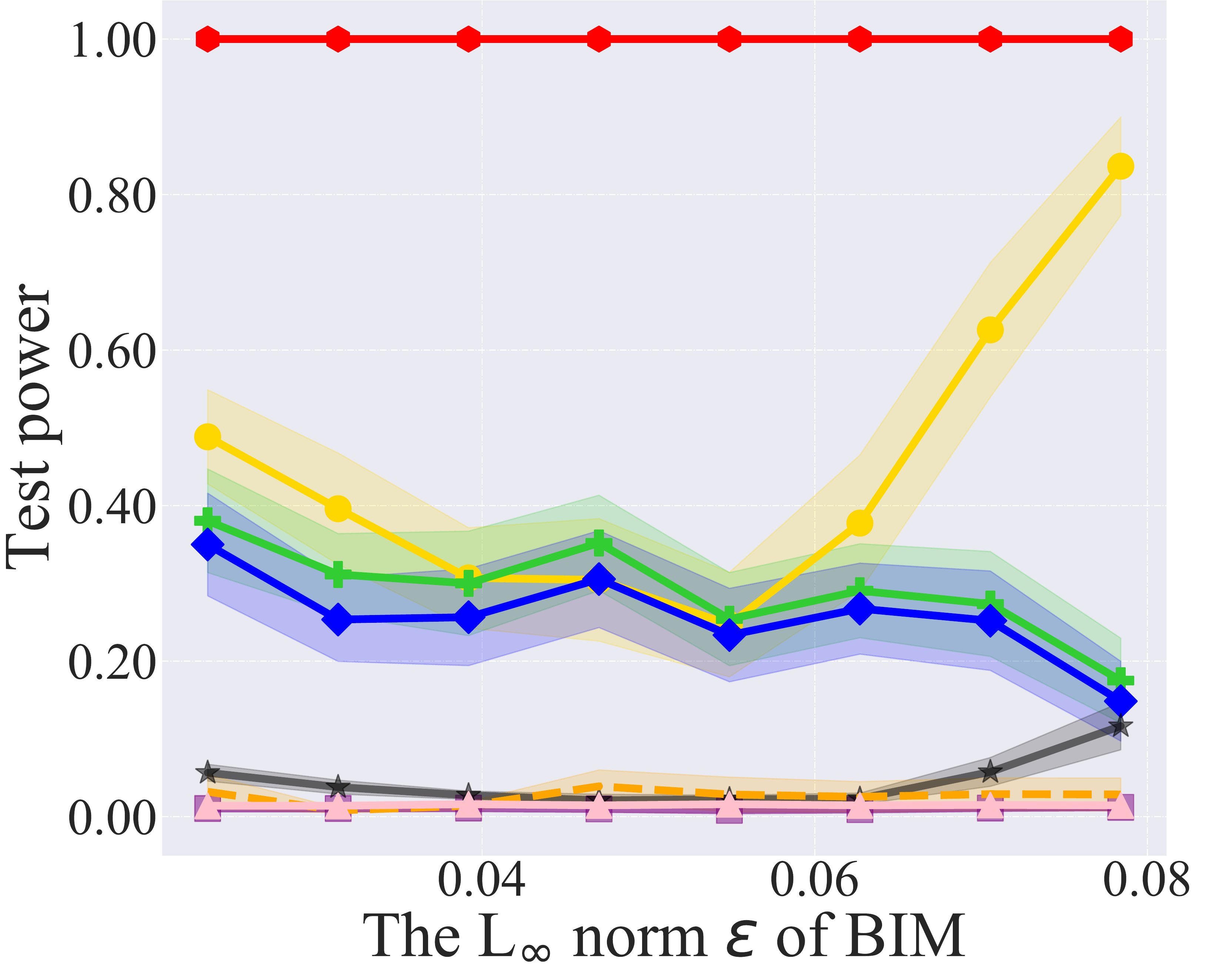}}
        \subfigure[Different $\epsilon$ of CW]
        {\includegraphics[width=0.245\textwidth]{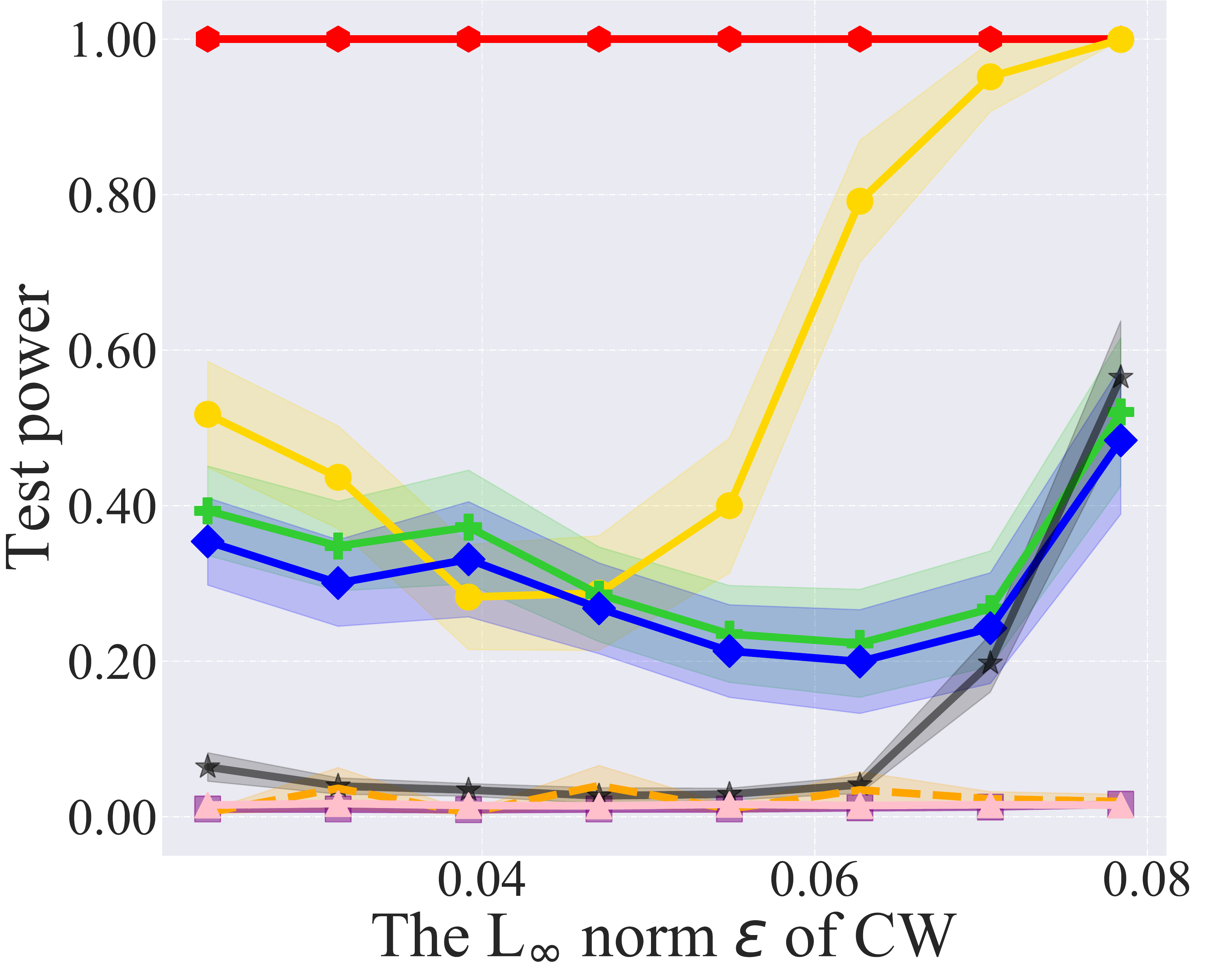}}
        \subfigure[Different $\epsilon$ of AA]
        {\includegraphics[width=0.245\textwidth]{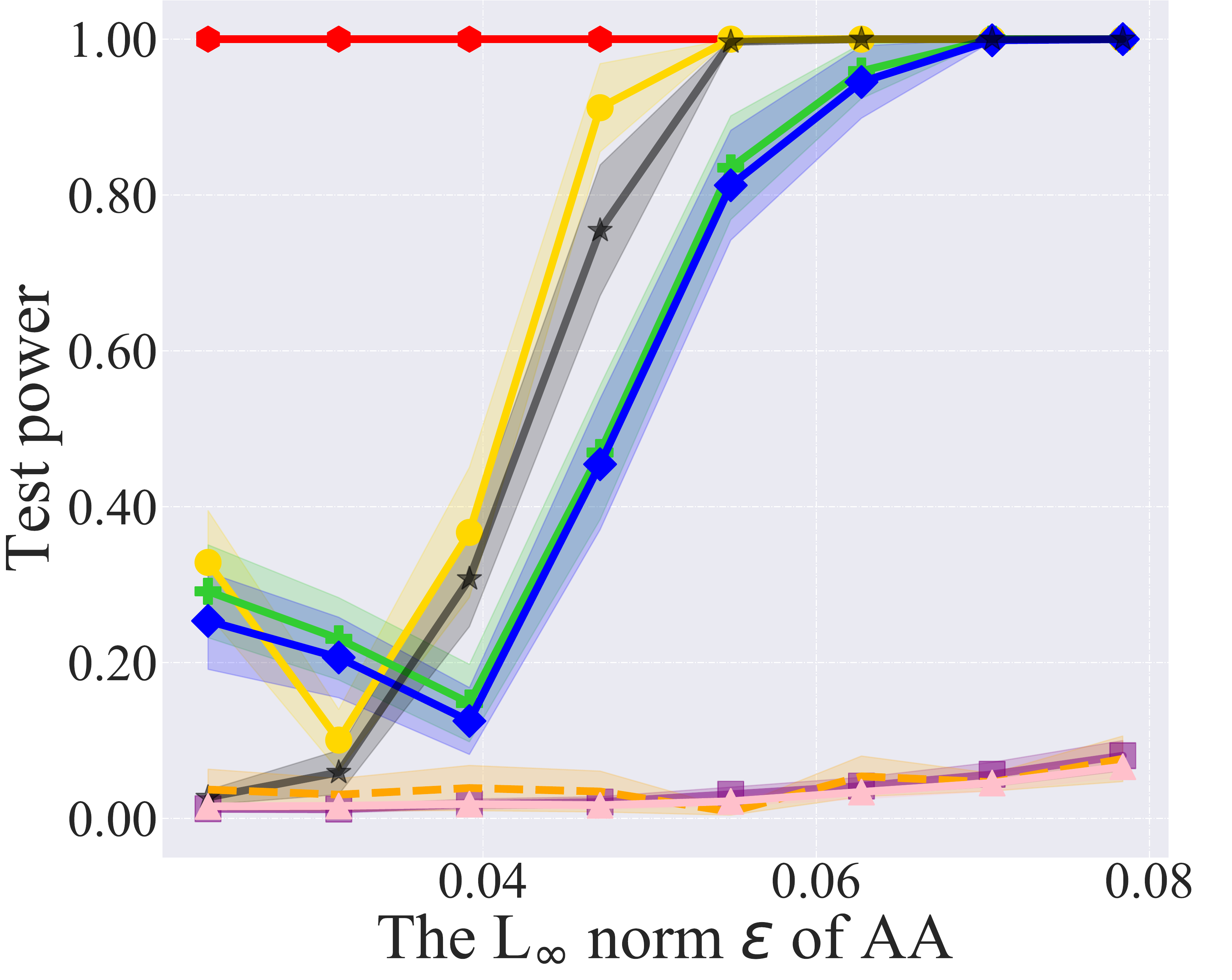}}
        \subfigure[Different $\epsilon$ of PGD]
        {\includegraphics[width=0.245\textwidth]{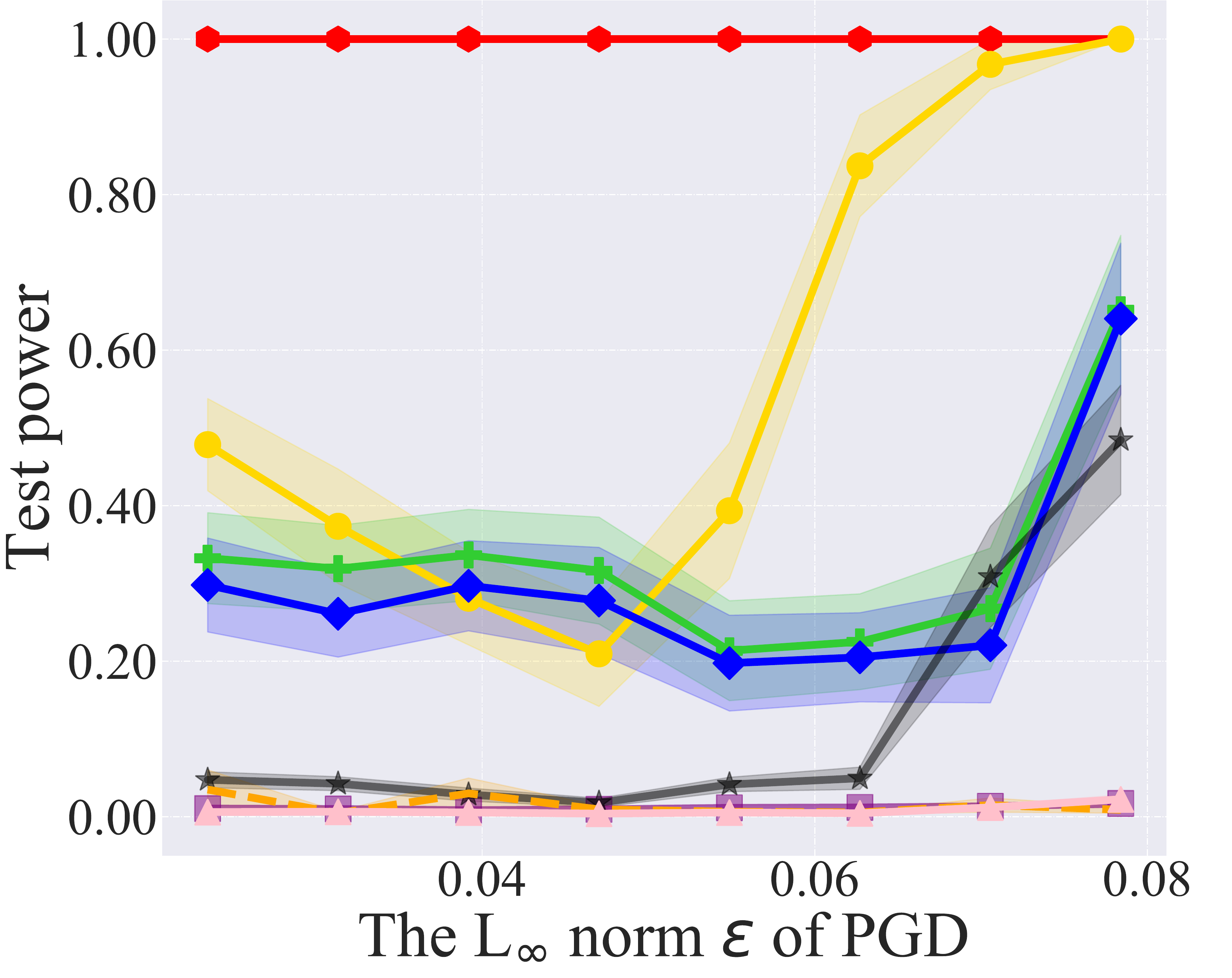}}
        \subfigure[Different set sizes]
        {\includegraphics[width=0.245\textwidth]{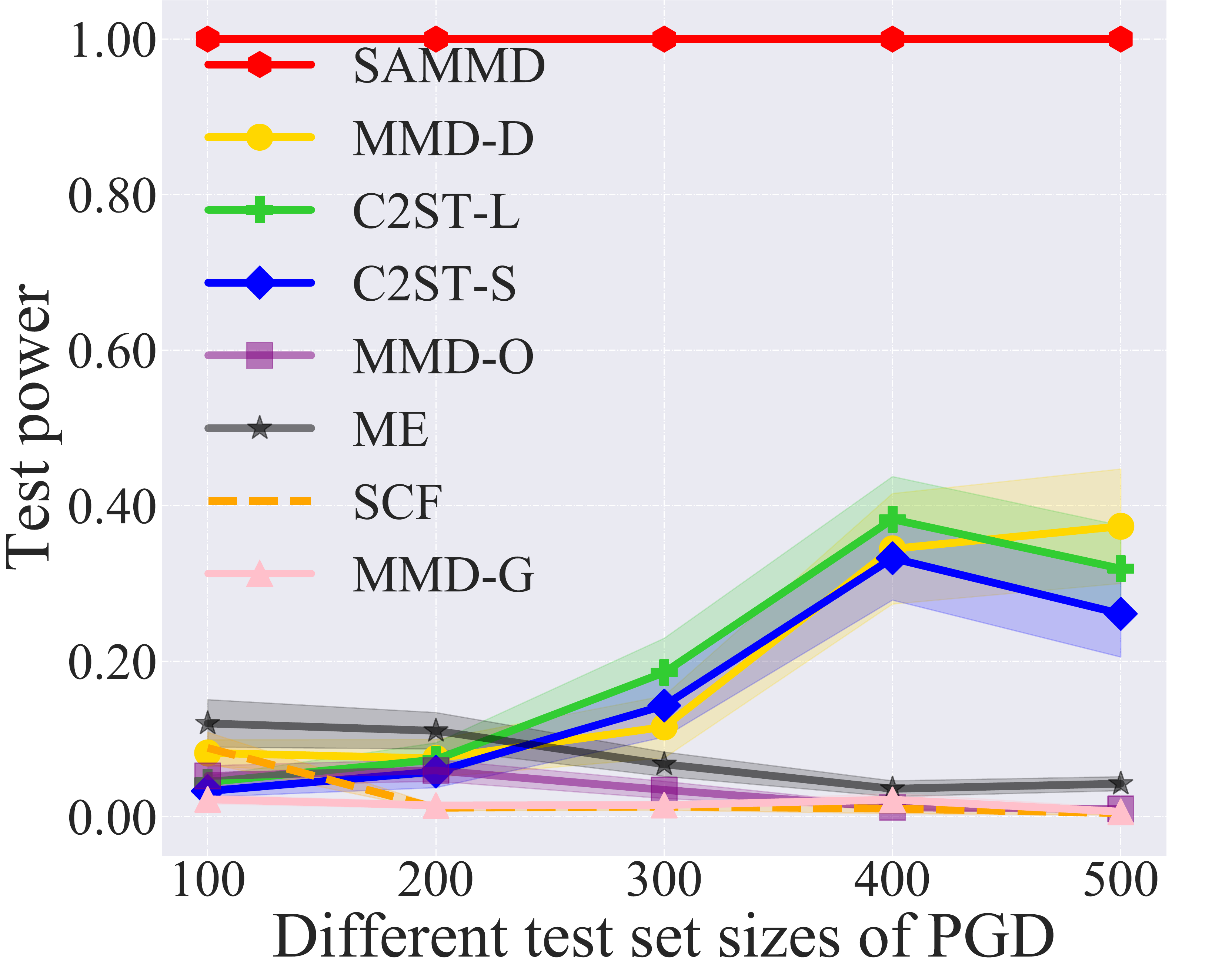}}
        \subfigure[Mixture of adv and natural]
        {\includegraphics[width=0.245\textwidth]{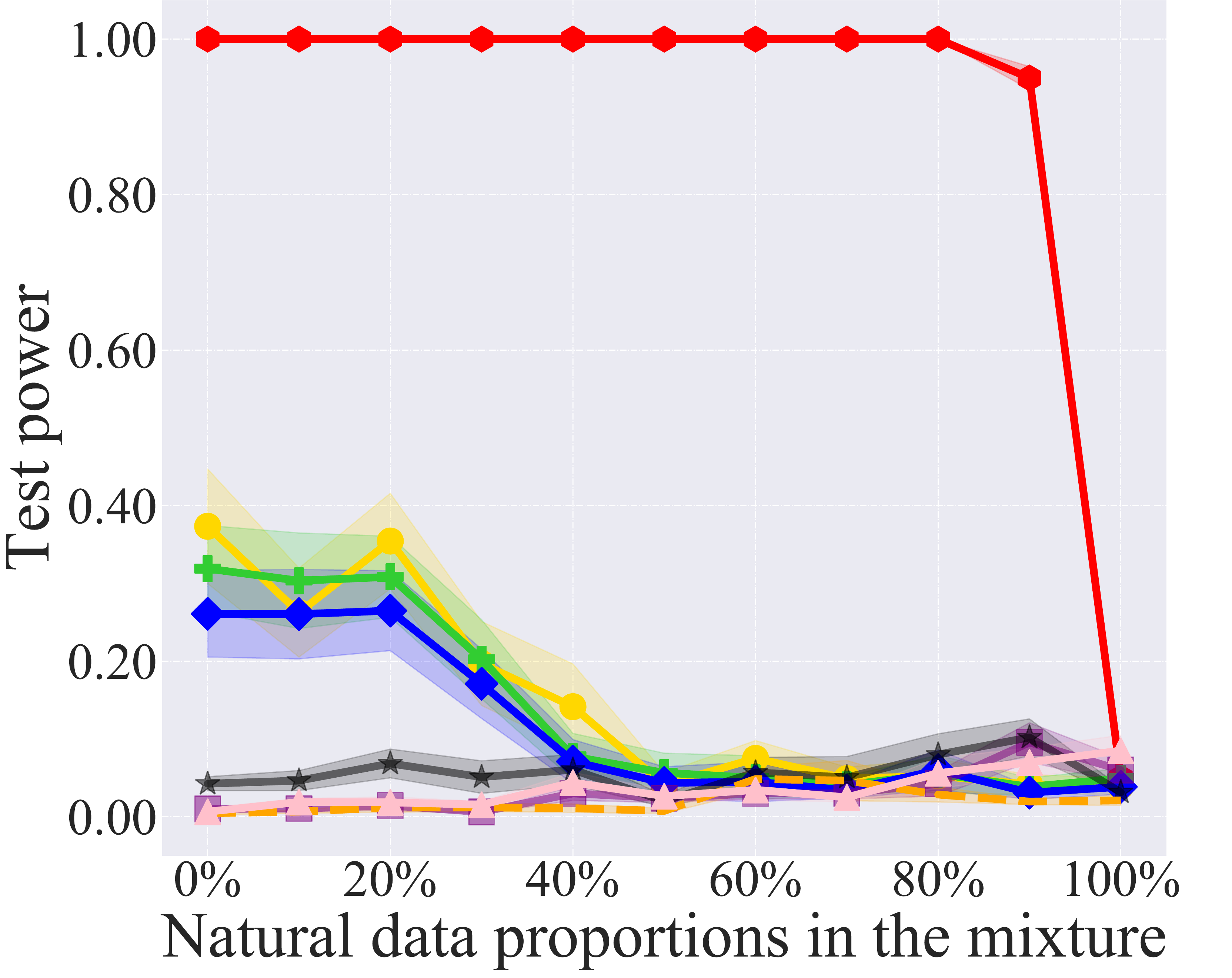}}
        \subfigure[Non-IID (a): PGD]
        {\includegraphics[width=0.245\textwidth]{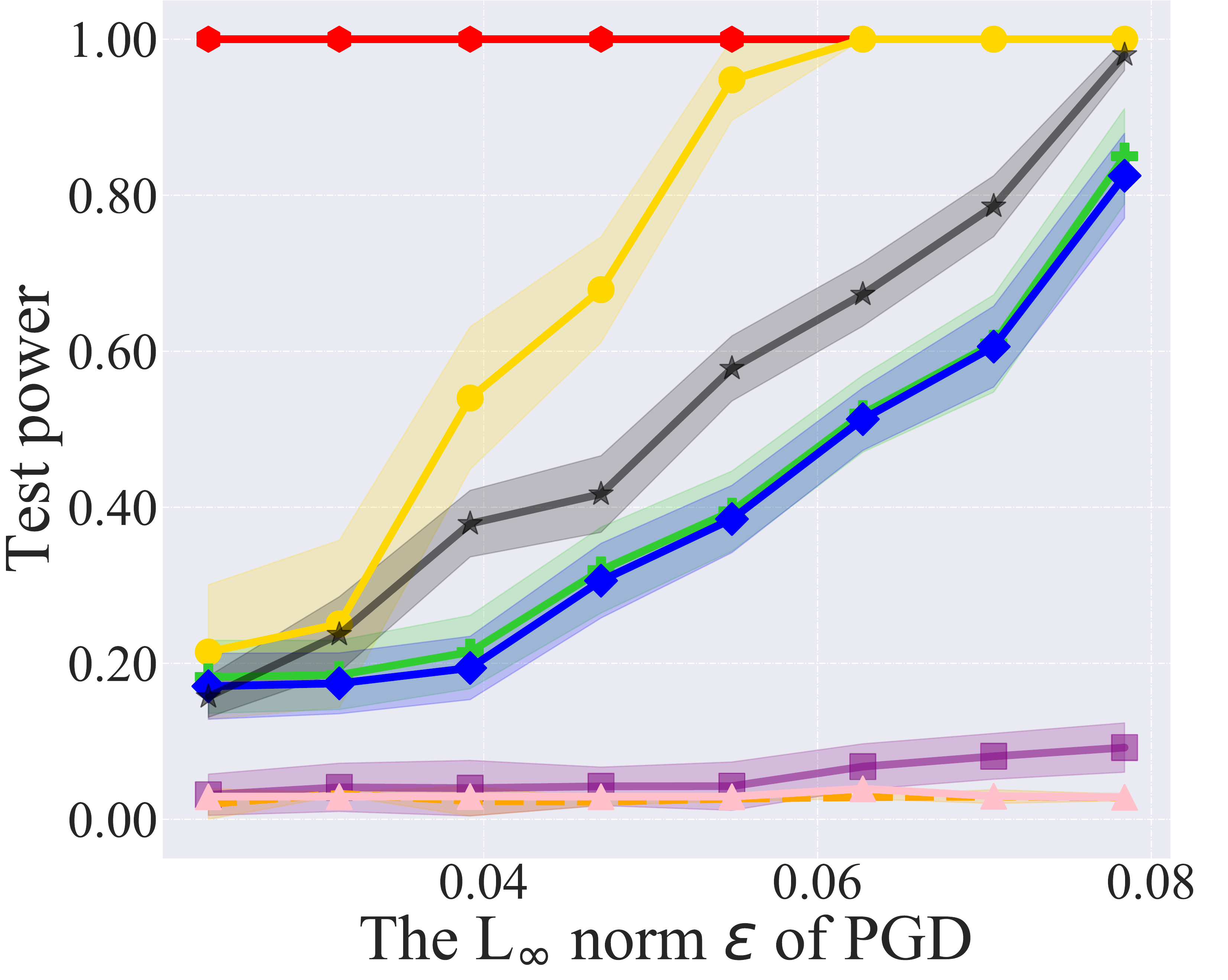}}
        \subfigure[Non-IID (b): Square]
        {\includegraphics[width=0.245\textwidth]{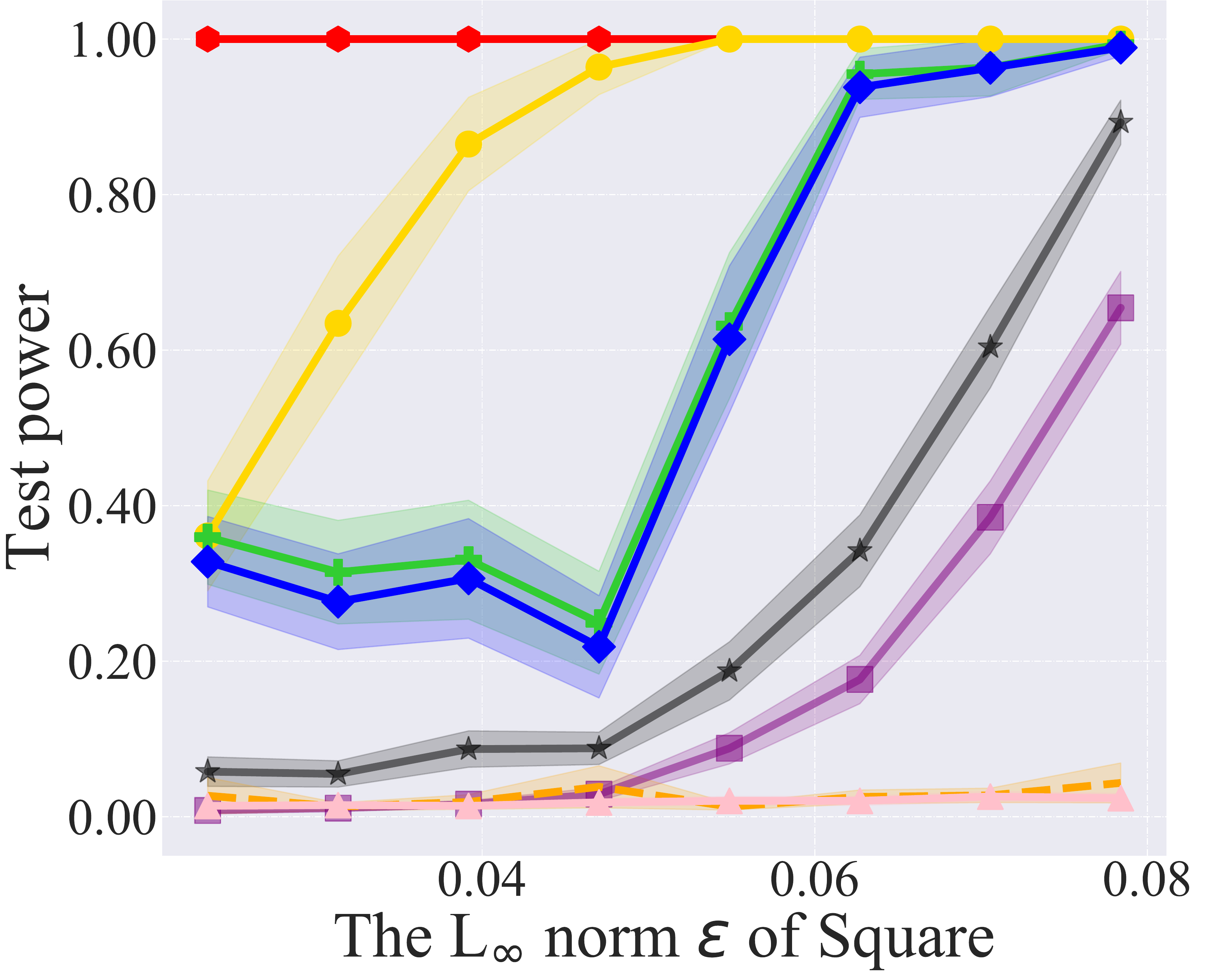}}
        \subfigure[WRN-Tiny: PGD]
        {\includegraphics[width=0.245\textwidth]{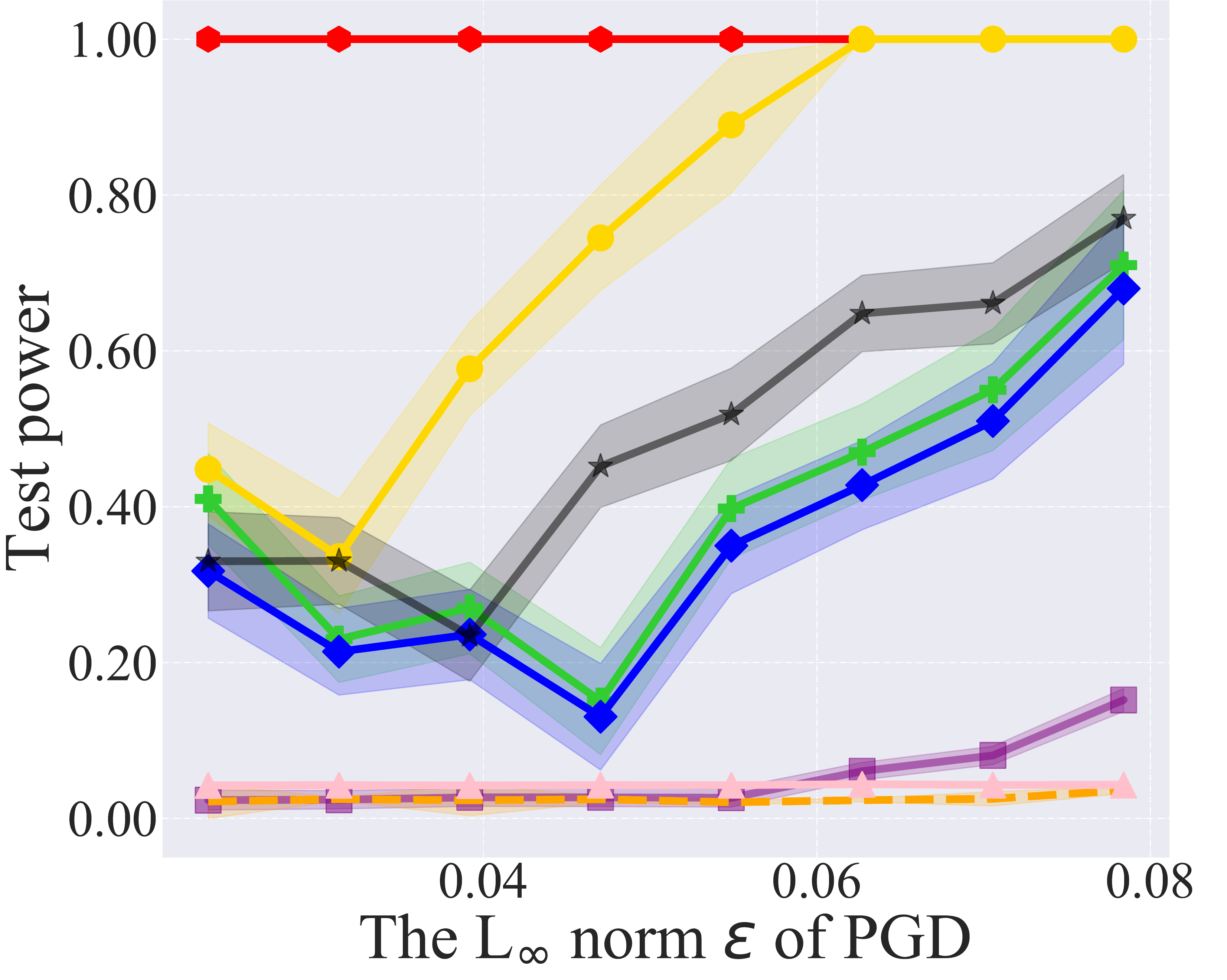}}
        \subfigure[WRN-Tiny: AA]
        {\includegraphics[width=0.245\textwidth]{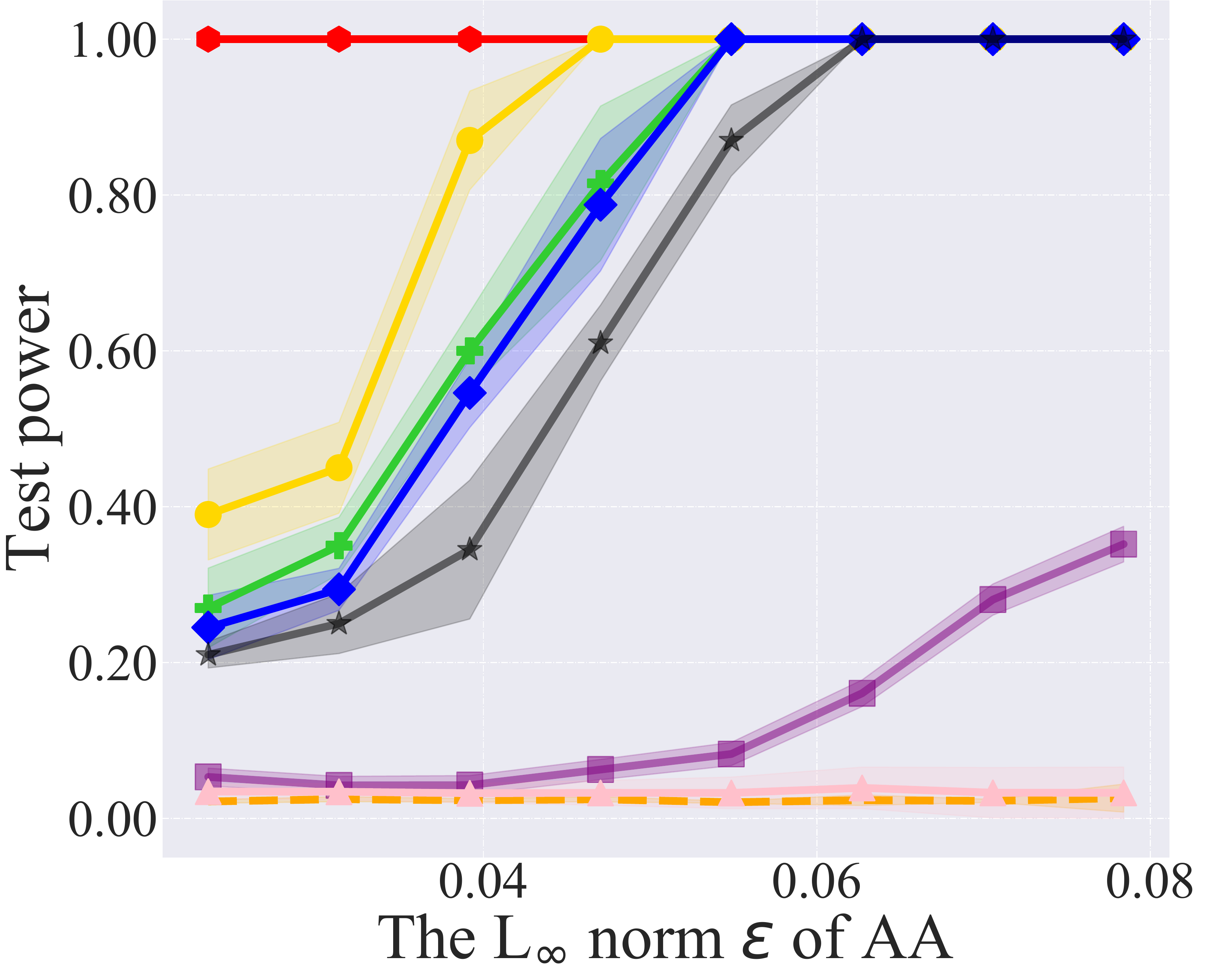}}
        \caption{\footnotesize Results of adversarial data detection. Subfigures (a)-(l) report the test power (i.e., the detection rate) when $S_Y$ are adversarial data. The ideal test power is $1$ (i.e., $100\%$ detection rate). Subfigure (a) - (j) are the experiments on the adversarial data of the \textit{CIFAR-10} acquired by ResNet-34. Subfigure (k)-(l) are the experiments on the adversarial data of the \textit{Tiny-Imagenet} acquired by WRN-32-10.
      }
        \label{fig:supp_results}
    \end{center}
\end{figure*}

%\vspace{-1em}\paragraph{Results of ResNet-18 on \textit{CIFAR-10}}
%\vspace{-1em}\paragraph{Results of ResNet-34 on \textit{CIFAR-10}}
%\vspace{-1em}\paragraph{Results of ResNet-18 on \textit{SVHN}}
%\vspace{-1em}\paragraph{Results of ResNet-34 on \textit{SVHN}}

\paragraph{Results of ResNet-34 on the \textit{CIFAR-10}.}
For the adversarial data acquired by ResNet-34 on \textit{CIFAR-10}, we also compare our SAMMD test with baselines in Figure~\ref{fig:supp_results}. Figure~\ref{fig:supp_results}{a} is a supplement of ResNet-18 to Figure~\ref{fig:results} that different $\epsilon$ of PGD. For $6$ different attacks, FGSM, BIM, PGD, AA, CW and Square (the Non-IID (b)), Figure~\ref{fig:supp_results}{b} reports the test power of all tests when $S_Y$ are adversarial data ($L_\infty$ norm $\epsilon = 0.0314$; set size $= 500$). Figure~\ref{fig:supp_results}(c)-(h) report the average test power on different $\epsilon$ of FGSM, BIM, AA, CW and Square (set size $=500$). Figure~\ref{fig:supp_results}{i} reports the average test power on different set sizes. Figure~\ref{fig:supp_results}{j} reports the average test power when adversarial data and natural data mix. Results show that our SAMMD test also achieves the highest test power.

\begin{figure*}[!t]
    \begin{center}
        \subfigure[Different $\epsilon$ of FGSM (RN18)]
        {\includegraphics[width=0.245\textwidth]{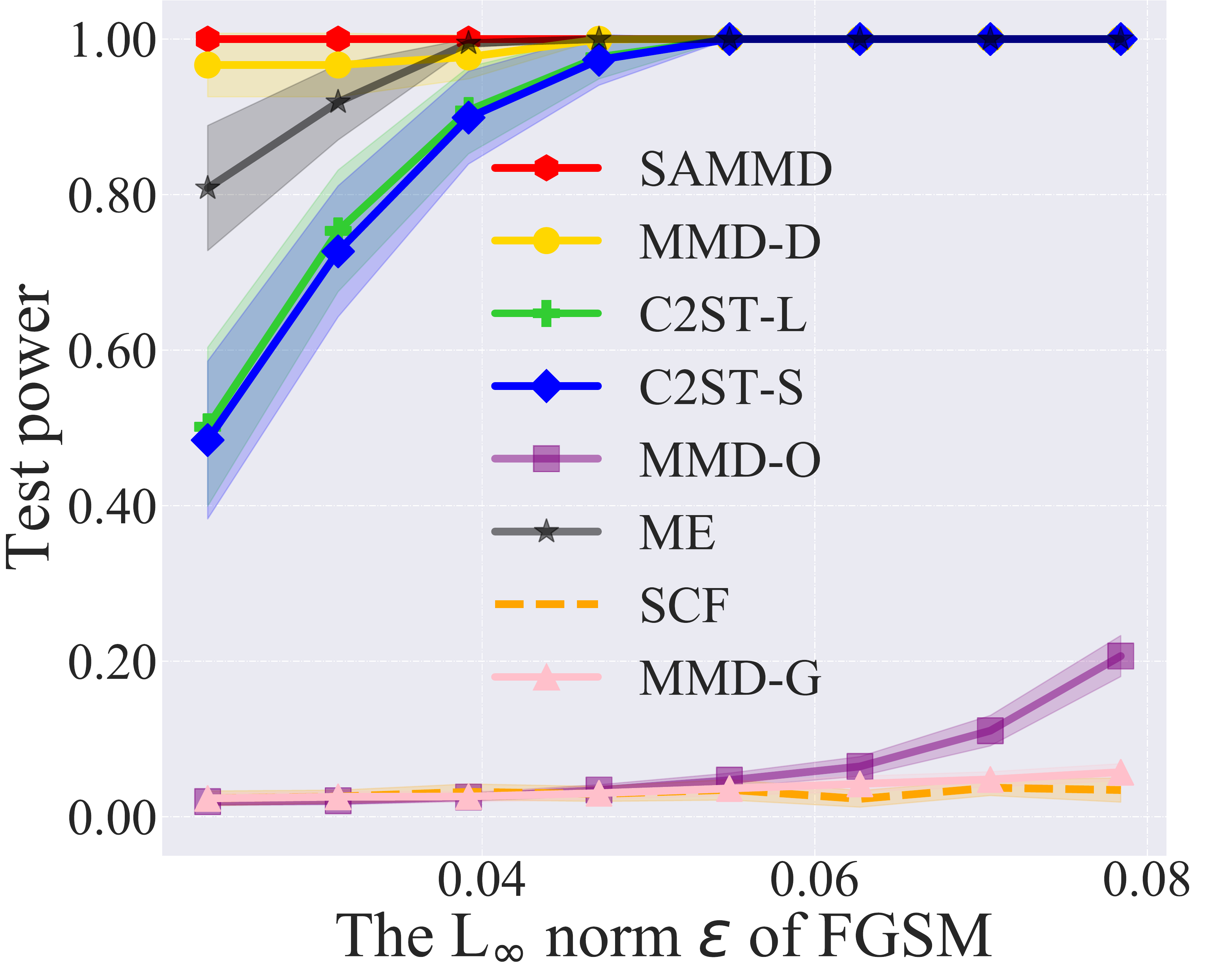}}
        \subfigure[Different $\epsilon$ of BIM (RN18)]
        {\includegraphics[width=0.245\textwidth]{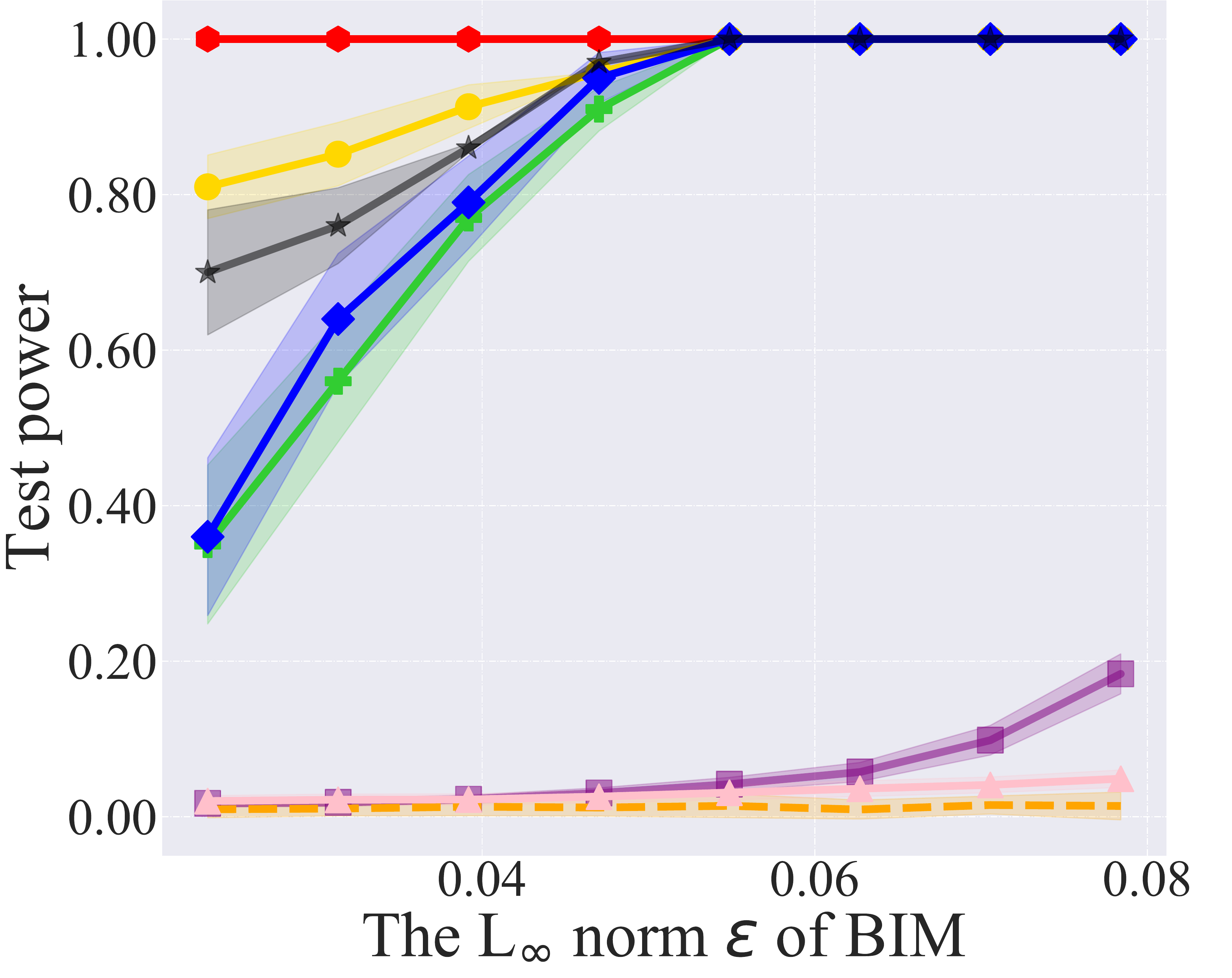}}
        \subfigure[Different $\epsilon$ of CW (RN18)]
        {\includegraphics[width=0.245\textwidth]{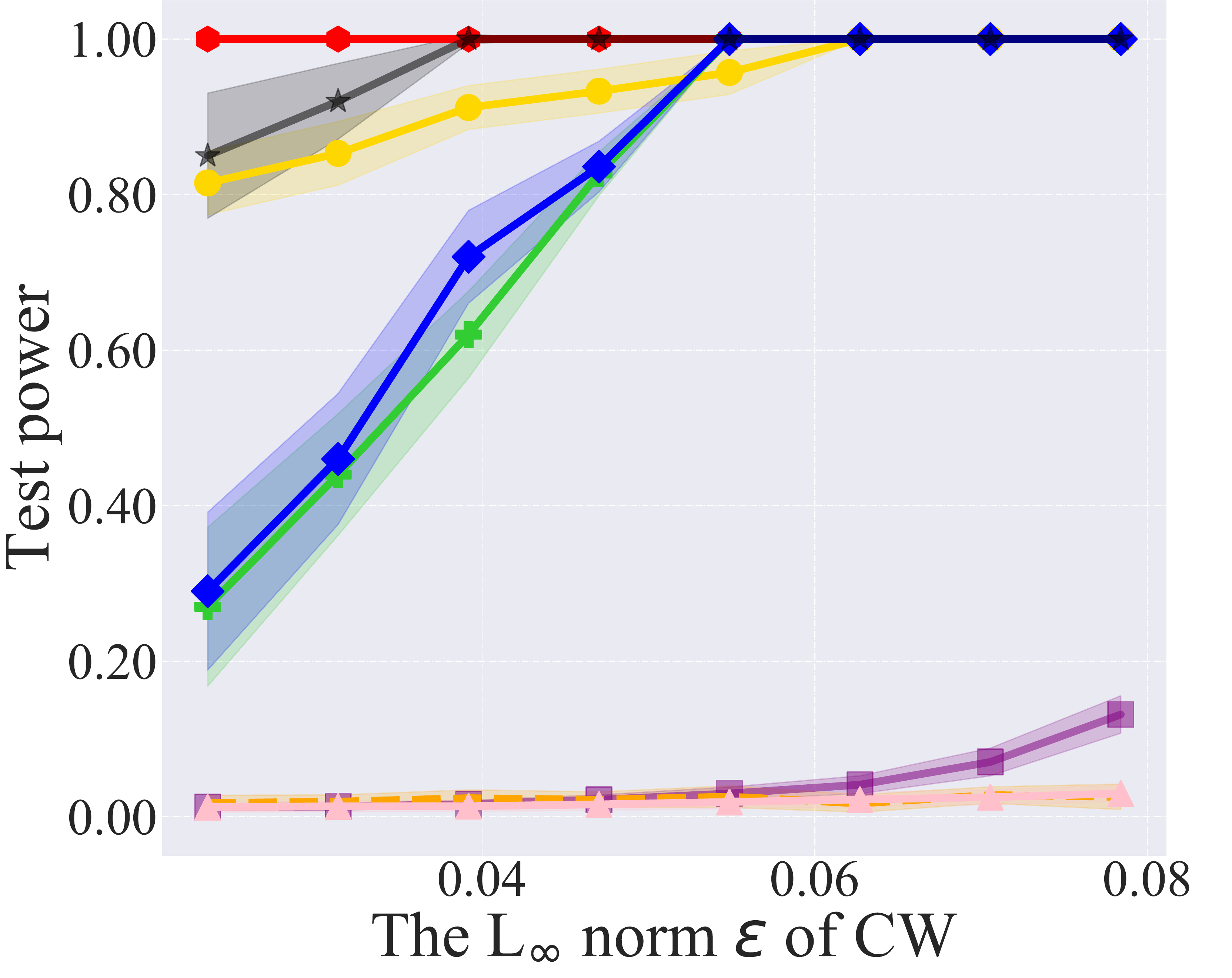}}
        \subfigure[Different $\epsilon$ of AA (RN18)]
        {\includegraphics[width=0.245\textwidth]{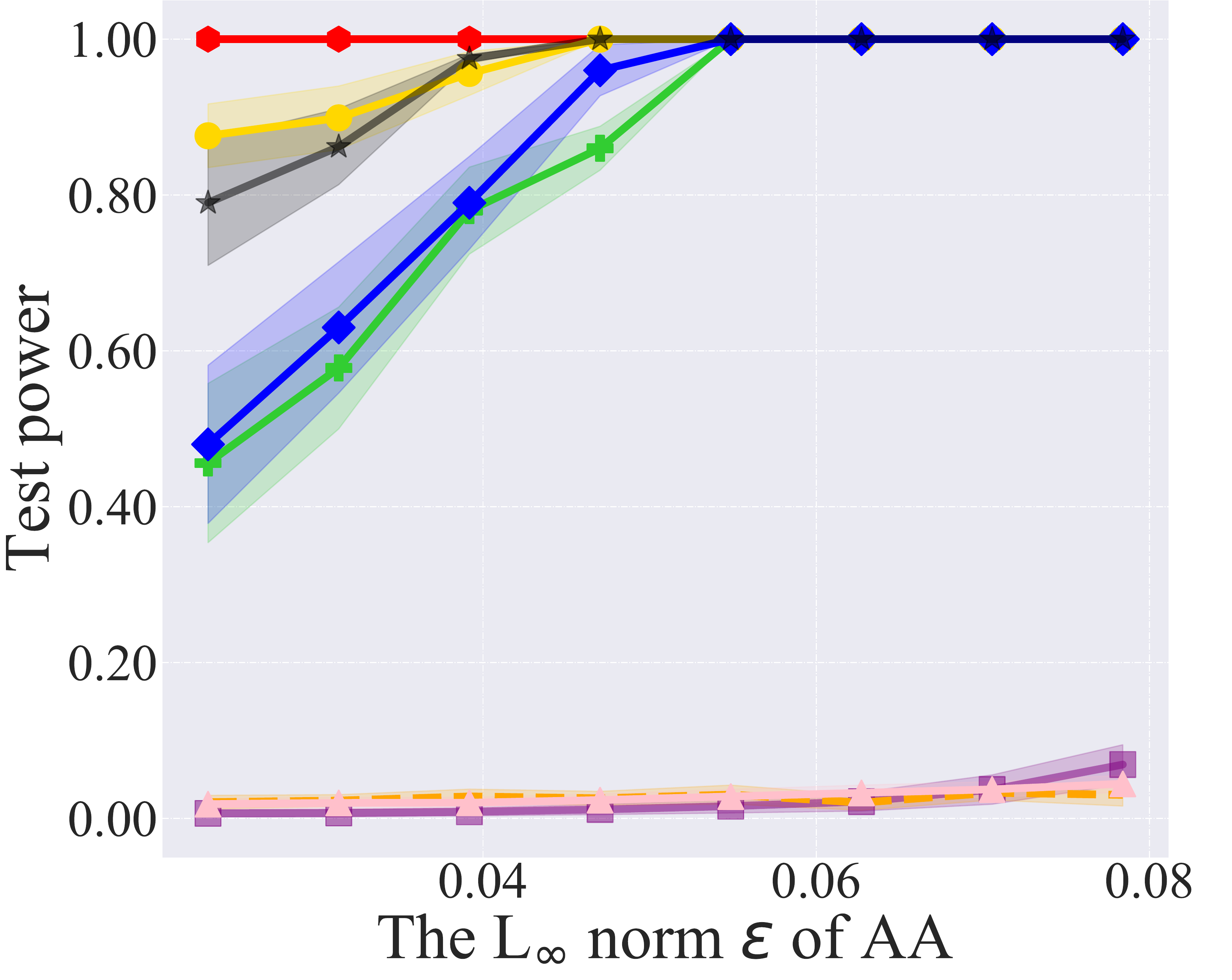}}
        \subfigure[Different $\epsilon$ of PGD (RN18)]
        {\includegraphics[width=0.245\textwidth]{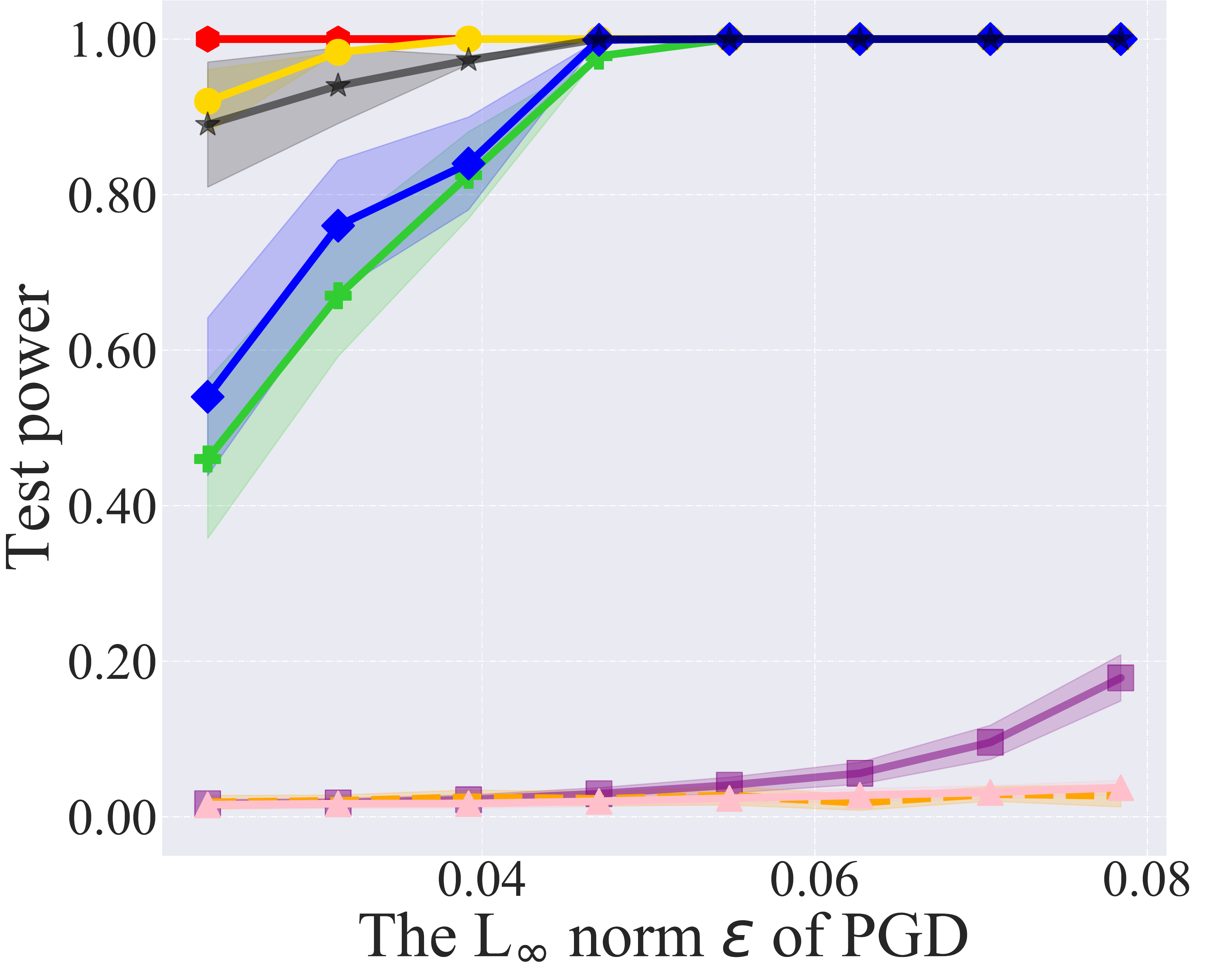}}
        \subfigure[Non-IID (b): Square (RN18)]
        {\includegraphics[width=0.245\textwidth]{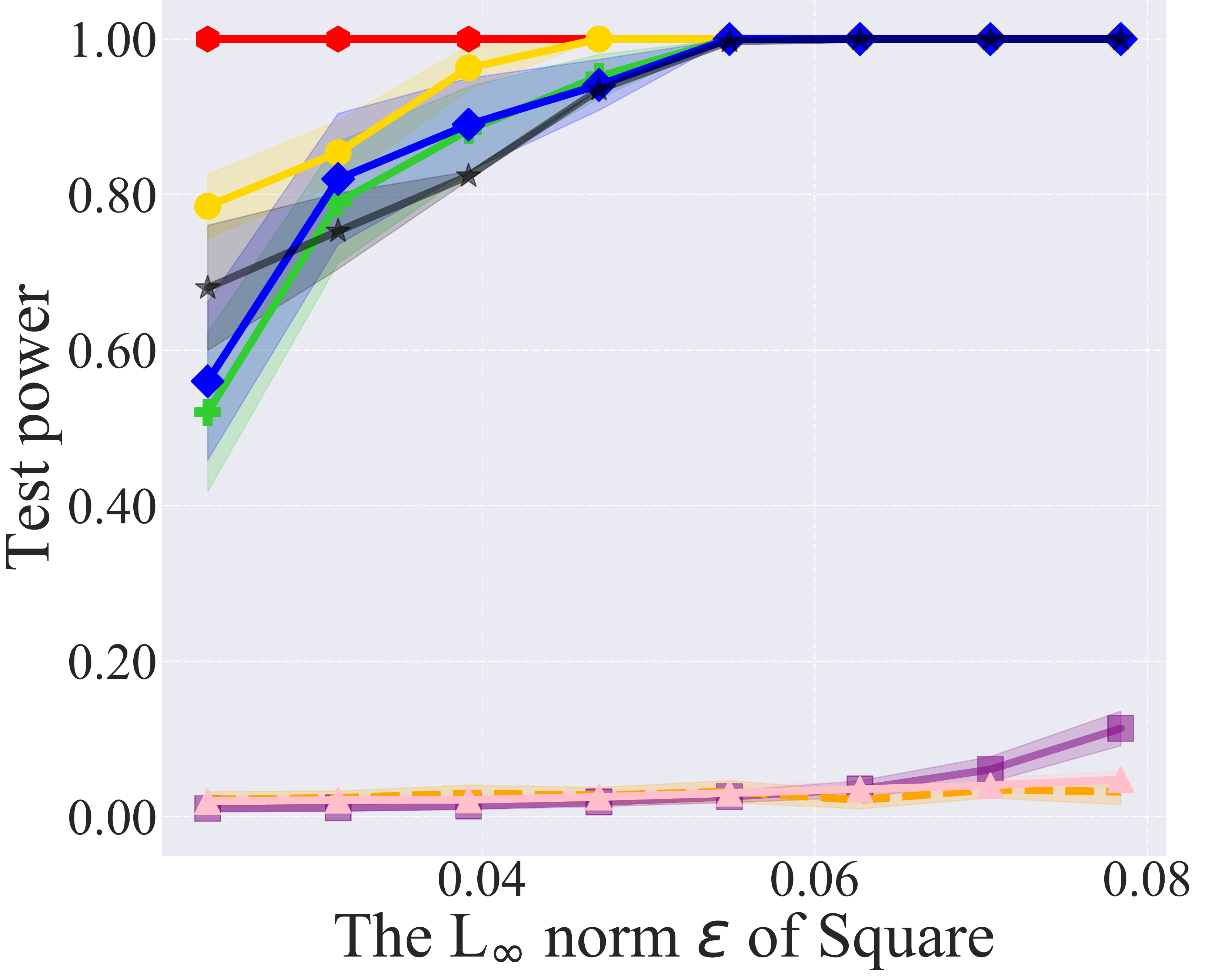}}
        \subfigure[Different $\epsilon$ of FGSM (RN34)]
        {\includegraphics[width=0.245\textwidth]{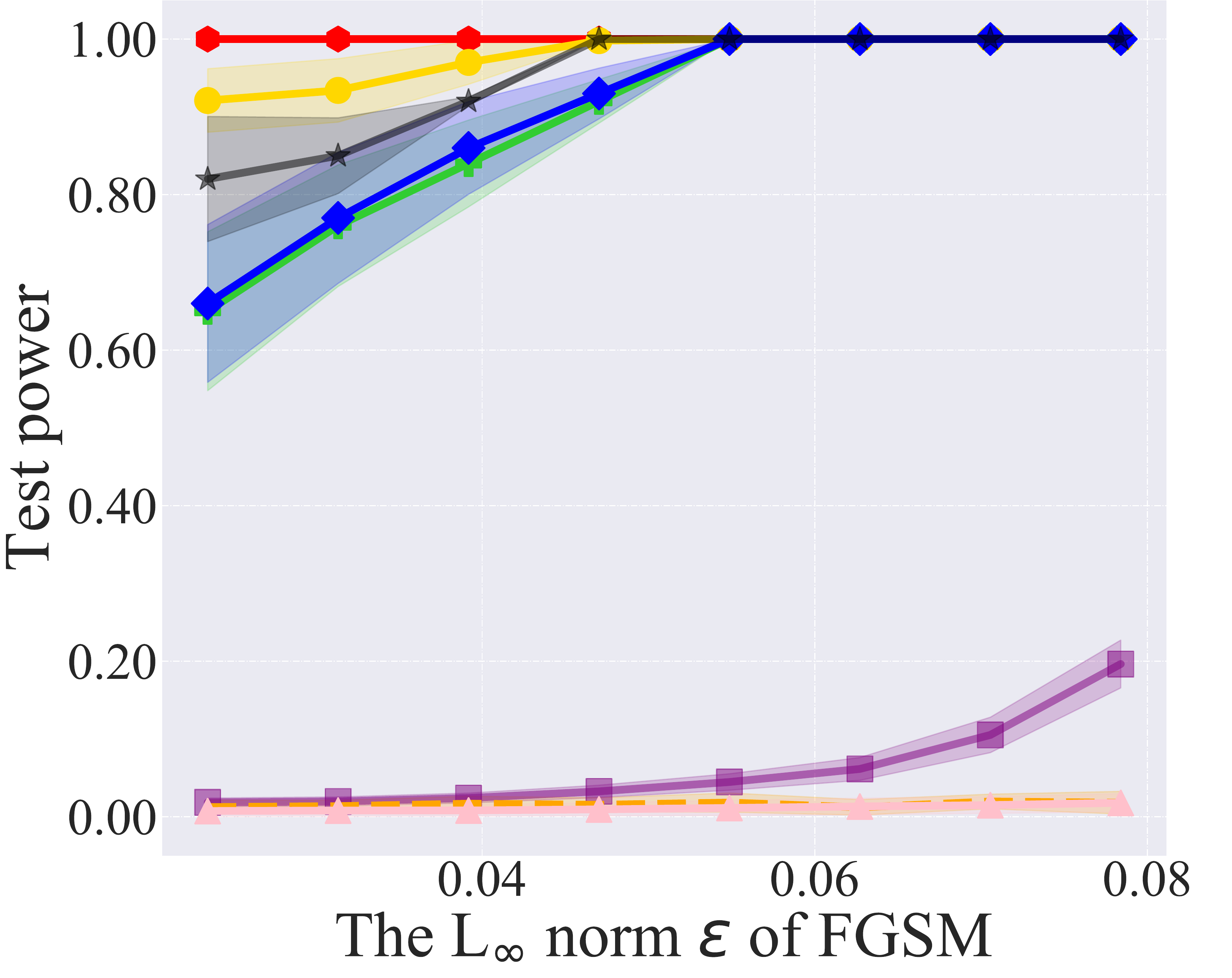}}
        \subfigure[Different $\epsilon$ of BIM (RN34)]
        {\includegraphics[width=0.245\textwidth]{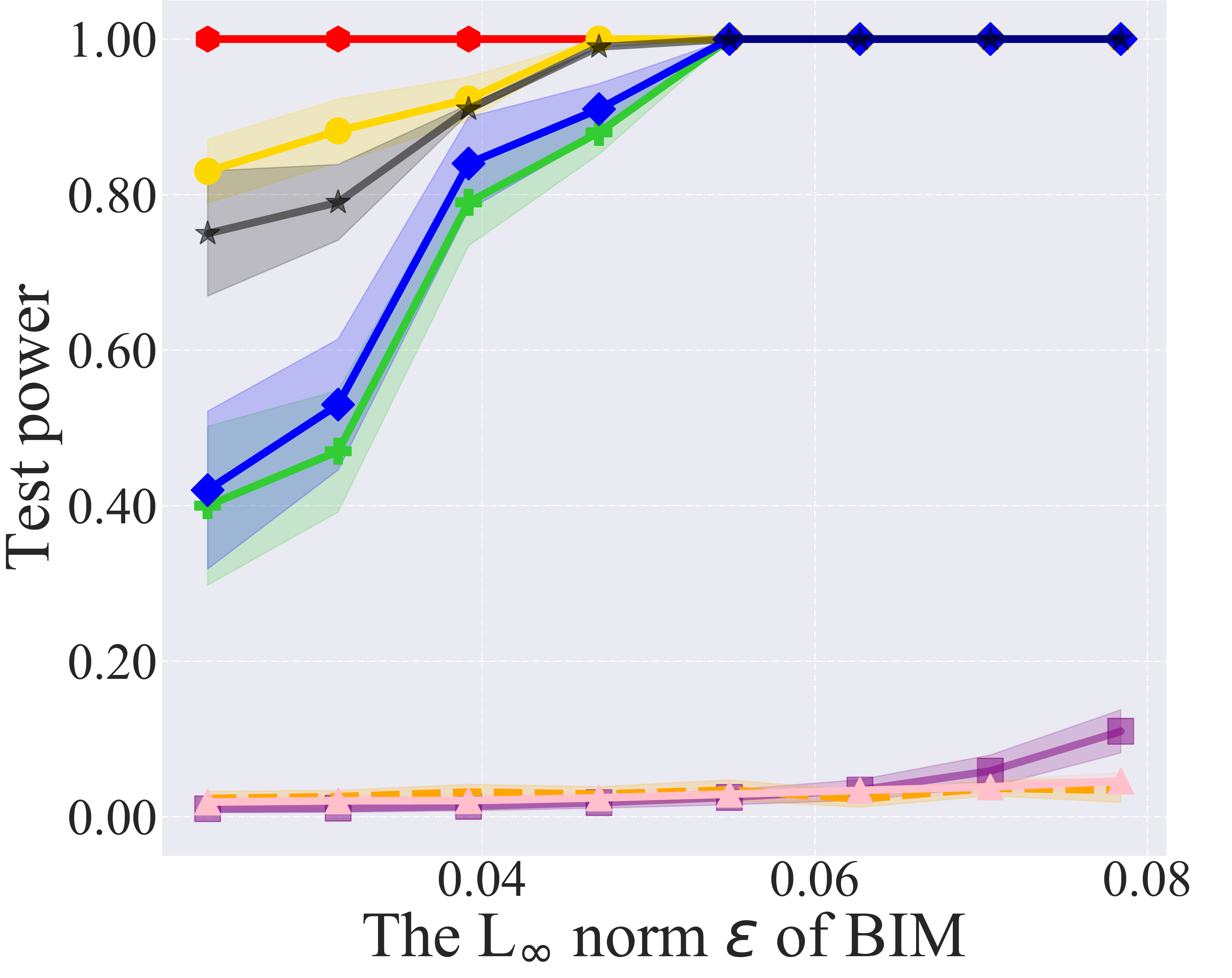}}
        \subfigure[Different $\epsilon$ of CW (RN34)]
        {\includegraphics[width=0.245\textwidth]{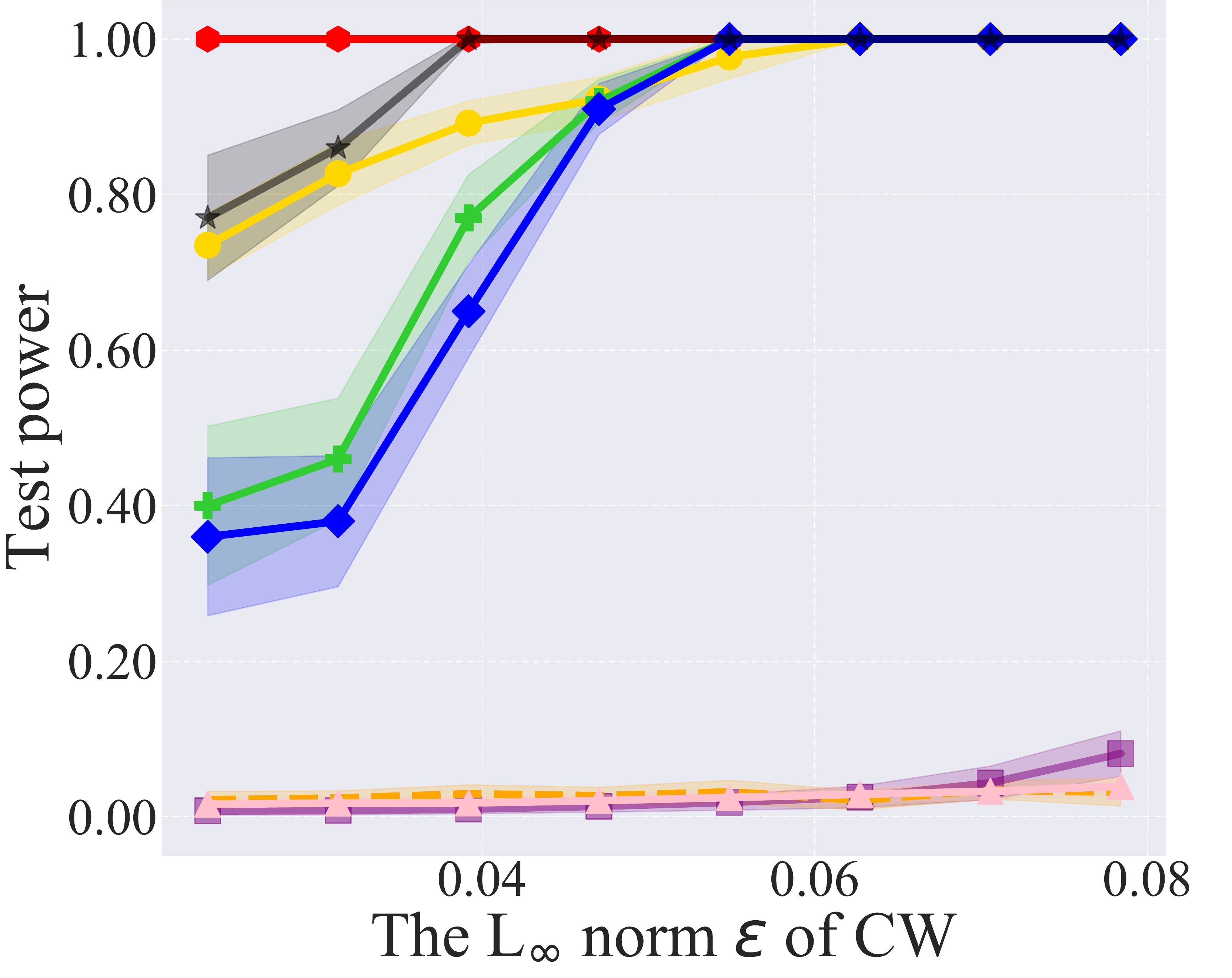}}
        \subfigure[Different $\epsilon$ of AA (RN34)]
        {\includegraphics[width=0.245\textwidth]{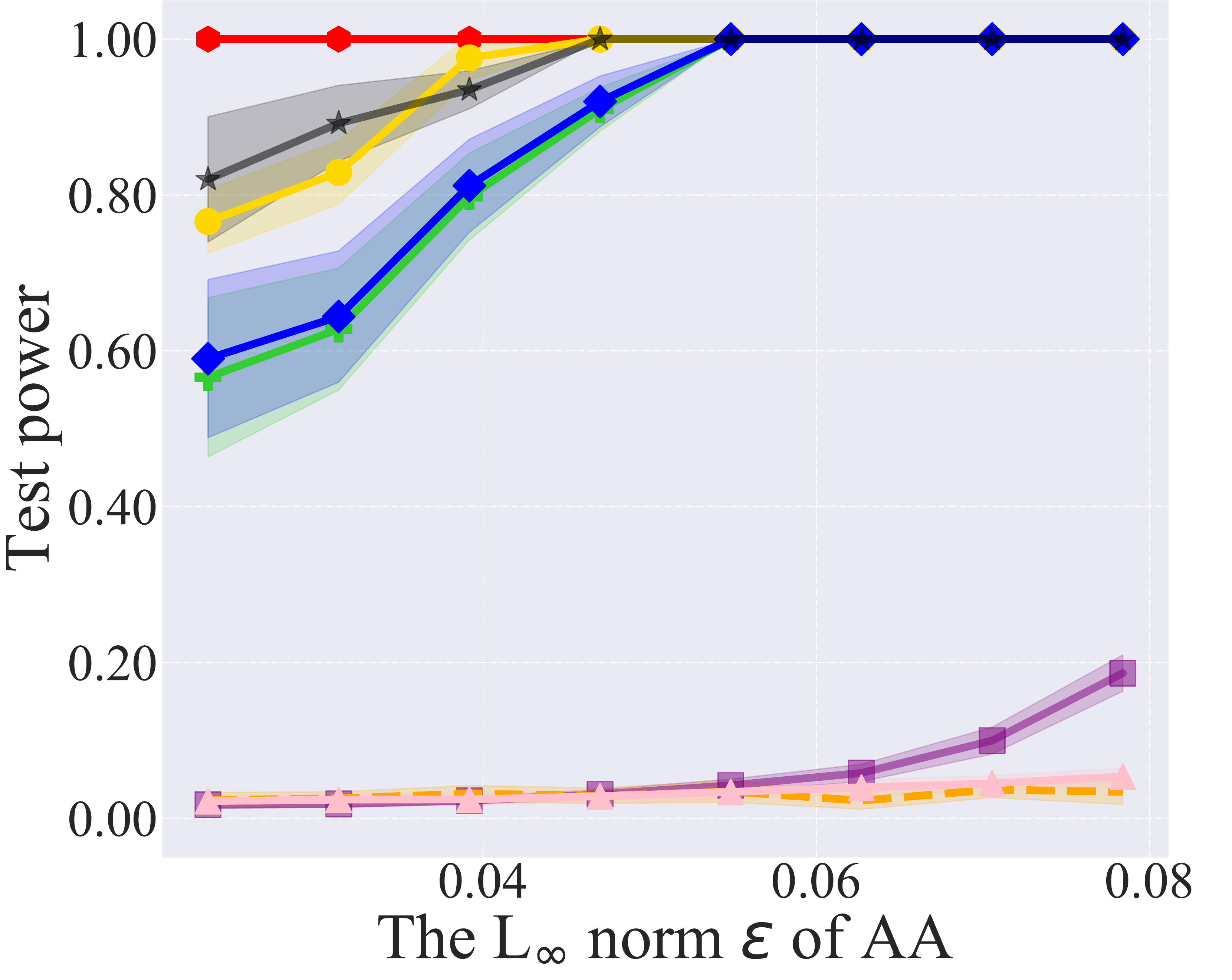}}
        \subfigure[Different $\epsilon$ of PGD (RN34)]
        {\includegraphics[width=0.245\textwidth]{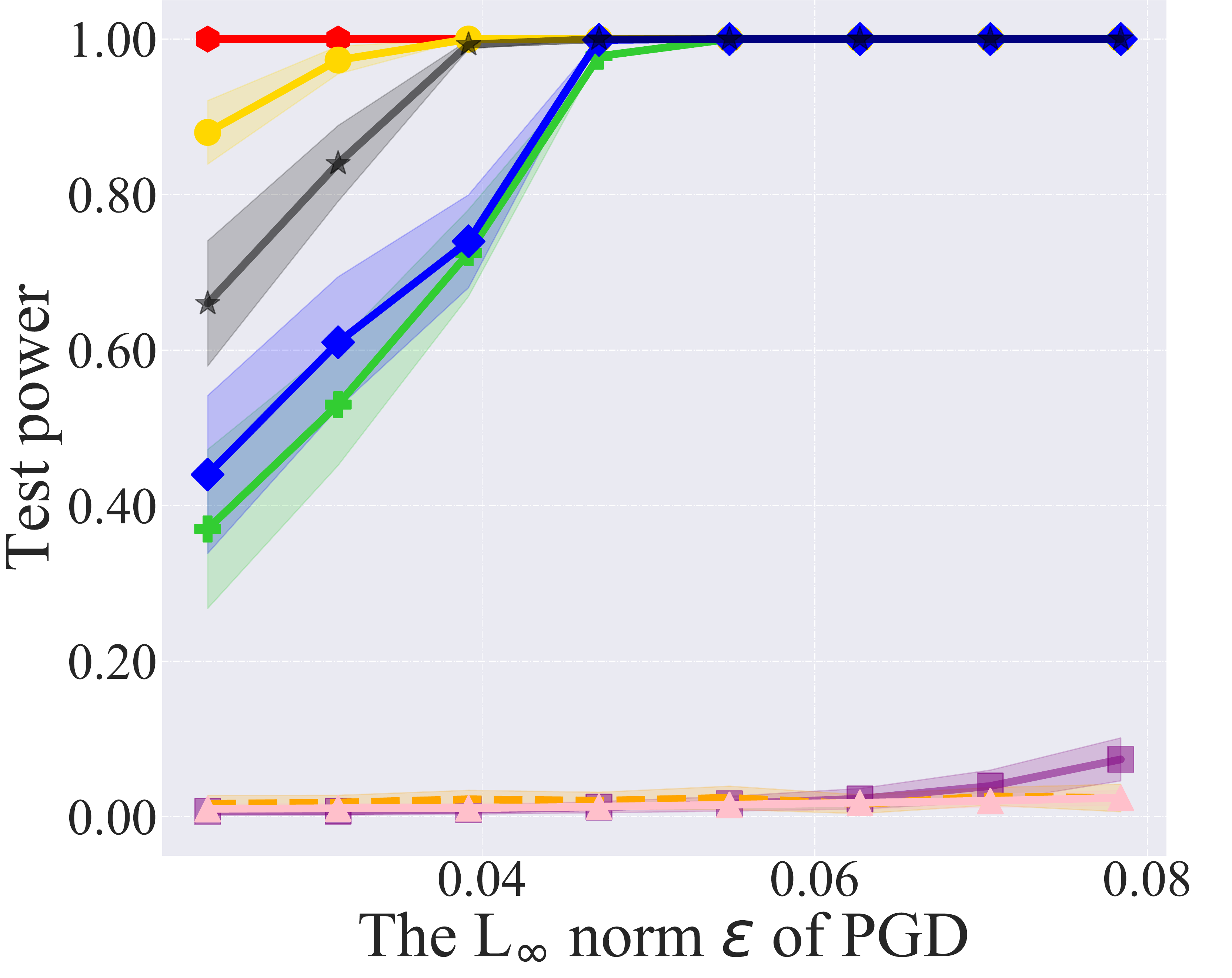}}
        \subfigure[Non-IID (b): Square (RN34)]
        {\includegraphics[width=0.245\textwidth]{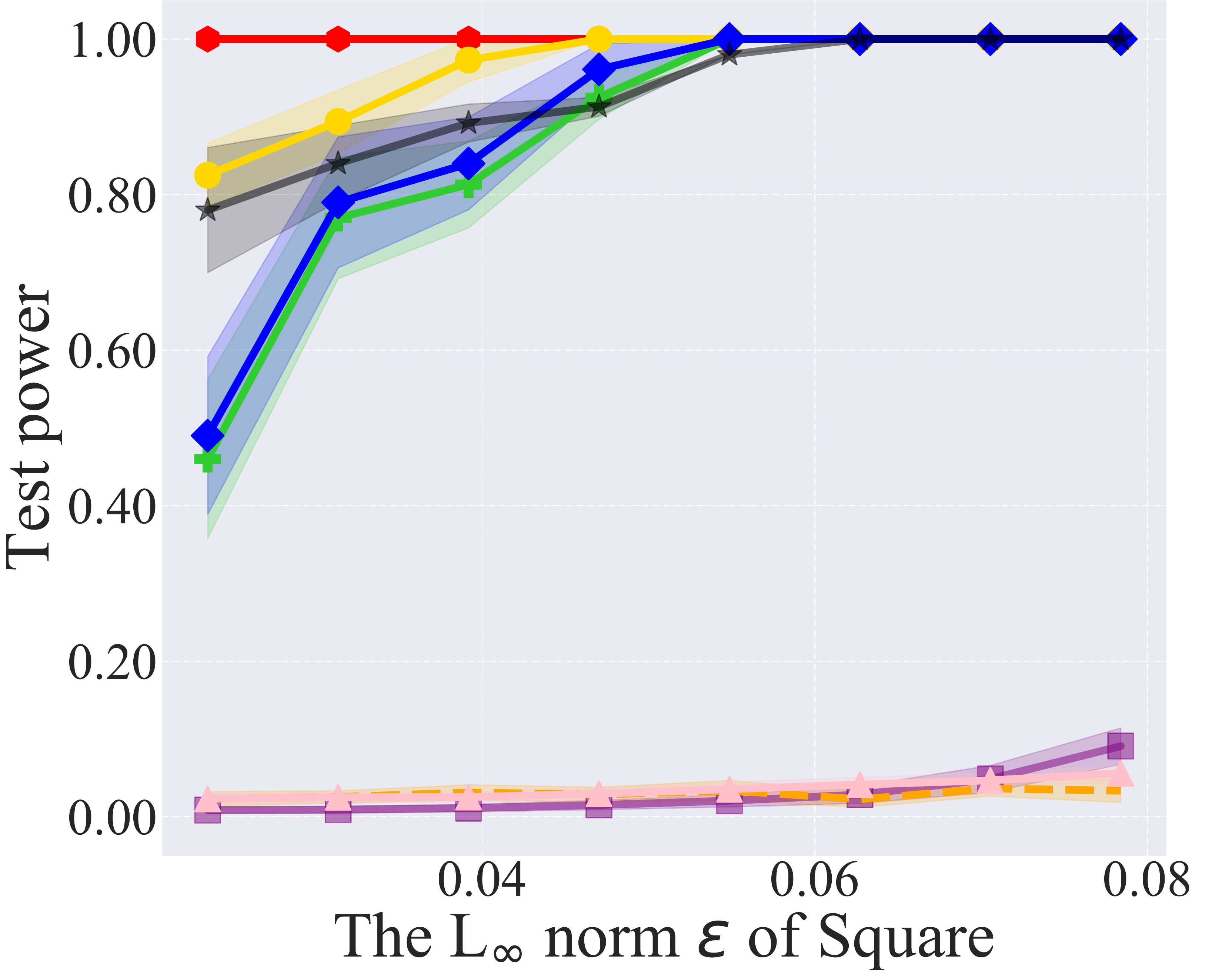}}

        \caption{\footnotesize Results of adversarial data detection on the \textit{SVHN}. Subfigure (a)-(l) report the test power (i.e., the detection rate) when $S_Y$ are adversarial data. The ideal test power is $1$ (i.e., $100\%$ detection rate).  
    %   Although natural data and adversarial data looks very similar (see the subfigure (a)), they are actually from different distributions from a view of hypothesis testing (see subfigure (b)).
      }
        \label{fig:svhn_results}
    \end{center}
\end{figure*}

\begin{table*}[!t]%[!h]
\caption{Average type I error within natural data and natural data on the \textit{SVHN} and \textit{Tiny-ImageNet}.}
\label{tab:SVHN_typeI}
\vspace{1mm}
\scriptsize
%\scriptsize
\centering

\begin{tabular}{p{7em}p{5.5em}p{5.5em}p{5.5em}p{5.5em}p{5.5em}p{5.5em}p{5.5em}p{5.5em}}

\toprule
Attack & SAMMD & MMD-D & C2ST-L & C2ST-S & MMD-O & ME & SCF & MMD-G\\
\midrule[0.6pt]
\midrule[0.6pt]
\textit{SVHN} & 0.053$\pm$0.017 & 0.037$\pm$0.011 & 0.036$\pm$0.010 & 0.043$\pm$0.012 & 0.017$\pm$0.004 & 0.022$\pm$0.005 & 0.022$\pm$0.006 & 0.015$\pm$0.004\\
\midrule
\textit{Tiny-ImageNet} & 0.049$\pm$0.015 & 0.046$\pm$0.019 & 0.051$\pm$0.023 & 0.052$\pm$0.016 & 0.048$\pm$0.010 & 0.039$\pm$0.007 & 0.021$\pm$0.008 & 0.047$\pm$0.013\\
\bottomrule
\end{tabular}
\end{table*}

\paragraph{Results of ResNet-18 and ResNet-34 on the \textit{SVHN}.} We compare the SAMMD test with $6$ existing two-sample tests on the \textit{SVHN}. All baselines and experiments setting are the same as those stated in Section~\ref{experiments}. We report the type I error in Table~\ref{tab:SVHN_typeI}. The ideal type I error should be around $\alpha$ ($0.05$ in this paper). For $6$ different attacks that FGSM, BIM, PGD, CW, AA, Square and different $L_\infty$-norm bounded perturbation $\epsilon$, we report the test power of all tests when $S_Y$ are adversarial data in Figure~\ref{fig:svhn_results}. Results show that our SAMMD test also performs the best. Compared to results on \textit{CIFAR-10}, adversarial data generated on the \textit{SVHN} is more easily detected by these state-of-the-art tests. 

\paragraph{Results of Wide ResNet on the \textit{Tiny-Imagenet}.} We also validate the effectiveness of SAMMD on the larger network WRN-32-10 and the larger dataset \textit{Tiny-Imagenet}. All baselines and experiments setting are the same as those stated in Section~\ref{experiments}. We report the type I error in Table~\ref{tab:SVHN_typeI}. For different attacks PGD and AA, we report the test power of all tests when $S_Y$ are adversarial data in Figure~\ref{fig:supp_results} (k)-(l). Results show that our SAMMD test also performs the best.

\paragraph{Time complexity of the SAMMD test.} Let $E$ denote the cost of computing an embedding $\phi_p(\vx)$, and $K$ denote the cost of computing $s_{\hat{f}}(\vx,\vy)$ given $\phi_p(\vx)$, $\phi_p(\vy)$ in Eq.~(\ref{eq:deepkernel_SAMMD}). Then each iteration of training in Algorithm~\ref{alg:learn_deep_kernel} costs $O(mE +m^2K)$, where $m$ is the minibatch size.

\paragraph{The average runtime.}
For images from the \textit{CIFAR-10} testing set and adversarial datasets generated by PGD, we select the subset containing $500$ images of the each for $S_p^{tr}$ and $S_q^{tr}$, and train on that; we then evaluate on $100$ random subsets of each, disjoint from the training set, of the remaining data. We repeat this full process $1$ times and report the average runtime of our SAMMD test and baselines in Table~\ref{tab:runtime}, and the units are seconds.
\begin{table*}[!t]%[!h]
\caption{The average runtime of the SAMMD test and baselines.}

\label{tab:runtime}
\vspace{1mm}
\scriptsize
%\scriptsize
\centering

\begin{tabular}{p{7em}p{5.5em}p{5.5em}p{5.5em}p{5.5em}p{5.5em}p{5.5em}p{5.5em}p{5.5em}}

\toprule
Attack & SAMMD & MMD-D & C2ST-L & C2ST-S & MMD-O & ME & SCF & MMD-G\\
\midrule
Runtime$(s)$ & 12.51$\pm$2.97 & 47.26$\pm$5.92 & 48.82$\pm$4.28 & 160.78$\pm$13.47 & 11.13$\pm$2.15 & 56.25$\pm$8.34 & 3.59$\pm$1.08 & 1.23$\pm$0.17\\
\bottomrule
\end{tabular}
\end{table*}
\end{document}